\theoremstyle{plain}
\newtheorem{theorem}{Theorem}[section]
\newtheorem{lemma}[theorem]{Lemma}
\newtheorem{corollary}[theorem]{Corollary}
\theoremstyle{definition}
\theoremstyle{remark}
\title{Clustered Federated Learning \\ via Embedding Distributions}
\author{%
  Dekai Zhang, Matthew Williams, Francesca Toni \\
  Imperial College London\\
  \texttt{dz819@ic.ac.uk} \\
}
\begin{document}

\maketitle
\begin{abstract}
Federated learning (FL) is a widely used framework for machine learning in distributed data environments where clients hold data that cannot be easily centralised, such as for data protection reasons. FL, however, is known to be vulnerable to non-IID data. Clustered FL addresses this issue by finding more homogeneous clusters of clients. We propose a novel one-shot clustering method, EMD-CFL, using the Earth Mover's distance (EMD) between data distributions in embedding space. We theoretically motivate the use of EMDs using results from the domain adaptation literature and demonstrate empirically superior clustering performance in extensive comparisons against 16 baselines and on a range of challenging datasets.\footnote{For source code, see \url{https://github.com/dkaizhang/emdcfl}.}
\end{abstract}
\section{Introduction}
Federated learning (FL) \citep{konevcny2016federated,mcmahan2017communication} is a key learning framework for training machine learning models in a decentralised manner. In contrast to traditional centralised learning, FL holds potential benefits for privacy and naturally adapts to the distributed data environment in domains such as mobile devices or healthcare \citep{bonawitz2019towards, rieke2020future}. The standard algorithm in FL, FedAvg \citep{mcmahan2017communication}, enables clients to collaboratively train a global model by interleaving local training on client data and aggregating the local models to create an updated global model. FedAvg, however, operates under the assumption that the data held by the different clients are independently and identically distributed (IID). In practice, non-IID data is much more common and presents a particular problem for the standard FedAvg algorithm \citep{li2020federated,kairouz2021advances}.

Clustered FL \citep{sattler2020clustered, ghosh2020efficient, mansour2020three} addresses this problem by identifying a cluster structure amongst the clients with the goal of recovering the IID assumption within each cluster. Thus, instead of learning a single global model as in standard FL, clustered FL groups clients into clusters and learns cluster-specific models within each group. Broadly, most existing methods in this area group clients based on a distance between model parameters or parameter gradients \citep{sattler2020clustered, briggs2020federated, duan2021flexible, long2023multi, islam2024fedclust, kim2024clustered} or based on the local loss of cluster models \citep{ghosh2020efficient, marfoq2021federated, ruan2022fedsoft, guo2023fedrc, cai2023fedce}. A unifying assumption of most of these methods is knowledge of the number of clusters, which is unlikely to be known in advance. Exceptions exist. CFL \citep{sattler2020clustered} recursively partitions clients based on a gradient norm threshold, while FedClust \citep{islam2024fedclust} clusters clients based on a distance threshold between model parameters, and PACFL \citep{vahidian2023efficient} clusters clients based on a threshold on the principal angles derived from the clients' raw data. Other than PACFL, all of the above methods rely on repeated rather than one-shot clustering.

In this paper, we use insights from the domain adaptation literature, which lead us to (uniquely) focus on the Earth Mover's distance (EMD) between distributions in embedding space. We call our method EMD-CFL. By leveraging representations learned by deep neural networks, our method achieves superior clustering performance in one shot and better scalability to deep architectures. We discuss related works in Appendix~\ref{appx:related_works}.

\paragraph{Contributions.} We theoretically motivate the use of EMDs for clustered FL with insights from the domain adaptation literature and use these to design our new method, EMD-CFL (Figure~\ref{alg:cfl}).

We explore the theoretical relationship with prior parameter and gradient clustering methods and show that our method targets an upper bound of the parameter and gradient distances, which prior methods aim to reduce. 

We extensively evaluate our approach on 5 datasets and against 16 baselines using both simple and modern deep neural networks. We further test our method for robustness against partial participation, hyperparameter selection and model architecture choices.

\section{Theoretical Motivation}
\subsection{Clustered Federated Learning}
We consider a common clustered FL setup with a central server and $C$ clients, where each client $c \in [C] = \{1,\ldots,C\}$ owns a dataset consisting of labelled inputs $\{(x_i^c, y_i^c)\}_{i=1}^{N_c}$, where $x_i \in \mathcal{X}$, $y_i \in \mathcal{Y}$ and $(x_i, y_i) \in \mathcal{D} = \mathcal{X} \times \mathcal{Y}$. We further assume that the client datasets are sampled from one of $K$ distinct underlying data distributions $\mathcal{D}_1,\ldots,\mathcal{D}_K$, with $K \leq C$. The clients therefore follow a ground-truth clustering based on the underlying data distribution, which can be defined as $S^*_k = \{c \in [C] | (x^c,y^c) \sim \mathcal{D}_k \}$. We define an associated ground-truth cluster assignment function $\pi^*:[C] \rightarrow [K]$ so that $\pi^*(c) = k \text{ if and only if } c \in S^*_k$. Note that as we generally do not know the ground-truth clustering, we will need to define a method for finding an empirical cluster assignment function $\pi$, with $S_k = \{c \in [C] | \pi(c) = k \}$.

In this setting, our ultimate goal is to learn models $f_\theta : \mathcal{X} \rightarrow \mathcal{Y}$ parameterised by $\theta \in \Theta$ where $\Theta \subset \mathbb{R}^d$ is the space of parametric models. Let $\ell(f_\theta(x), y) : \mathcal{Y} \times \mathcal{Y} \rightarrow \mathbb{R}$ be the loss of model $f_\theta$ on an instance $(x,y)$. Given a cluster assignment $\pi$, we then aim to learn parameters $\theta_k$ which minimise the sum of the expected population losses across clients: 
\begin{equation}
    \sum_{c=1}^{C} w_c \mathcal{L}(\theta_{\pi(c)}) = \sum_{c=1}^{C} w_c \mathbb{E}_{(x,y) \sim \mathcal{D}_{\pi^*}}[\ell (f_{\theta_{\pi(c)}}(x), y)]    
\end{equation}
where $w_c$ denotes the weight of client $c$'s loss, which is typically set to the client's share of the total data $N_c/N$. Since we only have access to a finite sample $\{(x_i^c, y_i^c)\}_{i=1}^{N_c}$ per client, we 
instead minimise the empirical risk:
\begin{equation}
    \sum_{k=c}^{C} w_c \mathfrak{L}(\theta_{\pi(c)}) = \sum_{c=1}^{C} w_c \frac{1}{N_c} \sum_{i=1}^{N_c} \ell (f_{\theta_{\pi(c)}}(x_i), y_i)
\end{equation}

\subsection{A Domain Adaptation Perspective}
\label{sec:da}
We present insights from the domain adaptation literature. While this body of work typically deals with standard centralised learning, we can use it to derive a bottom-up approach to finding a clustering function $\pi$ for clustered FL. 

For the analysis in this section, we consider a common setting of binary classification \citep{ben2006analysis, ben2010theory, mansour2009domain, redko2017theoretical, shen2018wasserstein} and assume that there exists a common ground-truth labelling function $\psi : \mathcal{X} \rightarrow [0,1]$. Let a model consist of an embedding model and a hypothesis, so that $f_{\theta} = h_{\phi} \circ g_{\omega}$. We assume that $g_{\omega} : \mathcal{X} \rightarrow \mathcal{Z}$ is parameterised by $\omega$ and maps inputs to an embedding space $\mathcal{Z}$. $h_{\phi} : \mathcal{Z} \rightarrow \mathcal{Y}$ is parameterised by $\phi$ and subsequently maps embeddings to the output space. The concatenation of their parameters forms $\theta = [\omega, \phi]$. 
In the remainder of this section, we suppress the parameters for ease of notation. We will also make use of the EMD, also known as the 1-Wasserstein distance, which for two probability distributions $\mu_a, \mu_b \in \mathcal{P}(A)$ over space $A$ with a joint distribution $\gamma \in \Gamma(A \times A)$ and a transport cost $\zeta(x,y)$ is defined as:

\begin{equation}
\label{eq:wasserstein}
    W_1(\mu_a,\mu_b) = \inf_{\gamma \in \Gamma(\mu_a,\mu_b)} \int \zeta(x,y) \text{ d}\gamma(x,y) 
\end{equation}

Given a distribution $\mu^{\mathcal{X}}$ over $\mathcal{X}$, we can then induce a distribution $\mu^{\mathcal{Z}}$ over the embedding space $\mathcal{Z}$ using an embedding model $g$. We can then define the induced target function $\hat{\psi} : \mathcal{Z} \rightarrow [0,1]$, as $\hat{\psi}(z) = \mathbb{E}_{x \sim \mu^\mathcal{X}}[\psi(x) | g(x)=z]$ \citep{ben2006analysis}. In the following, $\mu^\mathcal{X}_c$ denotes the distribution of inputs and $\mu^\mathcal{Z}_c$ the induced distribution over embeddings of a client $c$. We also define the disagreement between two hypotheses on a distribution $\mu^\mathcal{Z}_c$ as $\epsilon_c(h,h') = \mathbb{E}_{z \sim \mu^\mathcal{Z}_c} [|h(z) - h'(z)|]$ and abbreviate $\epsilon_s(h) = \epsilon_s(h,\hat{\psi})$. We can now state an adapted version of a result due to \citet{shen2018wasserstein}.

\begin{lemma} \citep{shen2018wasserstein}
\label{lem:shen}
    Let $\mu^\mathcal{Z}_i, \mu^\mathcal{Z}_j \in \mathcal{P}(\mathcal{Z})$ be two probability distributions over $\mathcal{Z}$. Assume the hypotheses $h, h'$ are L-Lipschitz continuous with respect to $\mathcal{Z}$. Then the following holds 
    \begin{equation*}
        \epsilon_i(h,h') \leq \epsilon_j(h,h') + 2LW_1(\mu^\mathcal{Z}_i, \mu^\mathcal{Z}_j)
    \end{equation*}
\end{lemma}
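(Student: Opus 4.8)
The plan is to prove the inequality
\[
\epsilon_i(h,h') \leq \epsilon_j(h,h') + 2LW_1(\mu^\mathcal{Z}_i, \mu^\mathcal{Z}_j)
\]
by bounding the difference $\epsilon_i(h,h') - \epsilon_j(h,h')$ using the dual (Kantorovich--Rubinstein) characterisation of the $1$-Wasserstein distance. First I would observe that, by definition, $\epsilon_i(h,h') = \mathbb{E}_{z \sim \mu^\mathcal{Z}_i}[|h(z)-h'(z)|]$ and similarly for $j$, so the quantity of interest is the difference of expectations of a single function $u(z) := |h(z) - h'(z)|$ under the two distributions. Thus it suffices to control $\left| \int u \, d\mu^\mathcal{Z}_i - \int u \, d\mu^\mathcal{Z}_j \right|$.

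\textbf{Key steps.} The central observation is that $u(z) = |h(z)-h'(z)|$ inherits a Lipschitz constant from $h$ and $h'$. Since each of $h, h'$ is $L$-Lipschitz, the map $z \mapsto h(z)-h'(z)$ is $2L$-Lipschitz, and composing with the $1$-Lipschitz absolute-value function preserves this, so $u$ is $2L$-Lipschitz (using $\big||a|-|b|\big| \leq |a-b|$ together with the triangle inequality). I would then invoke the Kantorovich--Rubinstein duality, which states that for cost $\zeta$ equal to the metric on $\mathcal{Z}$,
\[
W_1(\mu^\mathcal{Z}_i, \mu^\mathcal{Z}_j) = \sup_{\|v\|_{\mathrm{Lip}} \leq 1} \left( \int v \, d\mu^\mathcal{Z}_i - \int v \, d\mu^\mathcal{Z}_j \right).
\]
Applying this to $v = u/(2L)$, which has Lipschitz norm at most $1$, gives $\int u \, d\mu^\mathcal{Z}_i - \int u \, d\mu^\mathcal{Z}_j \leq 2L \, W_1(\mu^\mathcal{Z}_i, \mu^\mathcal{Z}_j)$. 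Rearranging this yields $\epsilon_i(h,h') \leq \epsilon_j(h,h') + 2L W_1(\mu^\mathcal{Z}_i, \mu^\mathcal{Z}_j)$, which is exactly the claim.

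\textbf{Main obstacle.} The only delicate point is establishing that $u(z) = |h(z)-h'(z)|$ is genuinely $2L$-Lipschitz and then correctly normalising it so that the duality formula applies with constant $1$; the factor of $2$ must be tracked carefully, as it is the source of the constant $2L$ in the bound. A secondary subtlety is that Kantorovich--Rubinstein duality requires the transport cost $\zeta$ to be the underlying metric on $\mathcal{Z}$ (so that $W_1$ is a true $1$-Wasserstein distance), which I would state as the operative assumption; the Lipschitz continuity of $h, h'$ is defined with respect to this same metric, so the constants are consistent. Everything else is routine manipulation of expectations and the definition of $\epsilon_c$.
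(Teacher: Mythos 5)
Your proof is correct. Note that the paper itself does not prove Lemma~\ref{lem:shen}: it is stated as an adapted result cited from \citet{shen2018wasserstein}, so there is no in-paper argument to compare against. Your route via Kantorovich--Rubinstein duality is sound: $u(z)=|h(z)-h'(z)|$ is indeed $2L$-Lipschitz (by $\bigl||a|-|b|\bigr|\le|a-b|$ and the triangle inequality), and normalising by $2L$ and applying the dual formula gives exactly the stated bound. For reference, the original proof in \citet{shen2018wasserstein} works on the primal side: take the optimal coupling $\gamma$ of $(\mu^\mathcal{Z}_i,\mu^\mathcal{Z}_j)$, write $\epsilon_i(h,h')-\epsilon_j(h,h')=\int\bigl(u(z)-u(z')\bigr)\,\mathrm{d}\gamma(z,z')\le\int 2L\,\zeta(z,z')\,\mathrm{d}\gamma(z,z')=2LW_1(\mu^\mathcal{Z}_i,\mu^\mathcal{Z}_j)$. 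The two arguments are equivalent in content; the primal version avoids invoking duality (and hence the regularity hypotheses it requires), while yours isolates the Lipschitz bookkeeping more cleanly. Your caveat that $\zeta$ must be the metric defining the Lipschitz condition is the right one to flag, and it applies equally to the primal argument; the paper's definition of $W_1$ in Equation~(\ref{eq:wasserstein}) uses a generic cost $\zeta$, so this consistency assumption is implicit but necessary either way.
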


In words, this implies that the disagreement of hypotheses on one distribution can be bounded above using their disagreement on another distribution and a quantity proportional to the EMD between distributions. We next derive an EMD-based generalisation bound, similar to \citet{shen2018wasserstein}, but focus on the generalisation bound of a hypothesis $h \in H$ on a target probability measure that is a mixture $T = \alpha \mu^\mathcal{Z}_i + (1-\alpha) \mu^\mathcal{Z}_j$. 

\begin{theorem}
\label{thm:mixture-ub}
    Under the assumptions from Lemma~\ref{lem:shen},
    \begin{equation*}
        \epsilon_T(h) \leq \alpha \epsilon_j(h) + (1-\alpha) \epsilon_i(h) + 2LW_1(\mu^\mathcal{Z}_i, \mu^\mathcal{Z}_j) + \lambda
    \end{equation*}
    where $\lambda = min_{h \in H} \epsilon_i(h) + \epsilon_j(h)$ is the sum error of the ideal hypothesis.
\end{theorem}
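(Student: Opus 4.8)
The plan is to first collapse the mixture target using linearity of expectation, and then reduce the claim to a pair of symmetric ``domain-transfer'' inequalities obtained by chaining the triangle inequality for $\epsilon_c$ with Lemma~\ref{lem:shen}. Since $T = \alpha\mu^\mathcal{Z}_i + (1-\alpha)\mu^\mathcal{Z}_j$, the definition $\epsilon_T(h) = \mathbb{E}_{z\sim T}[|h(z)-\hat\psi(z)|]$ splits exactly as $\epsilon_T(h) = \alpha\,\epsilon_i(h) + (1-\alpha)\,\epsilon_j(h)$. The target right-hand side carries the indices swapped, $\alpha\,\epsilon_j(h)+(1-\alpha)\,\epsilon_i(h)$, so the whole task is to pay for exchanging $\epsilon_i(h)\leftrightarrow\epsilon_j(h)$ in each term, at a total cost of at most $2LW_1(\mu^\mathcal{Z}_i,\mu^\mathcal{Z}_j)+\lambda$.

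The key step is to establish, for the ideal hypothesis $h^\ast = \arg\min_{h\in H}\epsilon_i(h)+\epsilon_j(h)$ (so that $\lambda = \epsilon_i(h^\ast)+\epsilon_j(h^\ast)$), the transfer bound
\begin{equation*}
\epsilon_i(h) \;\leq\; \epsilon_j(h) + 2LW_1(\mu^\mathcal{Z}_i,\mu^\mathcal{Z}_j) + \lambda,
\end{equation*}
together with its symmetric counterpart with $i$ and $j$ interchanged. To derive this I would first apply the triangle inequality for $\epsilon_i$ (valid since $|a-c|\le|a-b|+|b-c|$ pointwise, so it survives taking expectations) to insert $h^\ast$, giving $\epsilon_i(h)=\epsilon_i(h,\hat\psi)\le\epsilon_i(h,h^\ast)+\epsilon_i(h^\ast)$. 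Next I would apply Lemma~\ref{lem:shen} to the two \emph{hypotheses} $h$ and $h^\ast$ — crucially both lie in $H$ and are therefore $L$-Lipschitz, so the lemma applies and the function $\hat\psi$ never needs to be Lipschitz — yielding $\epsilon_i(h,h^\ast)\le\epsilon_j(h,h^\ast)+2LW_1(\mu^\mathcal{Z}_i,\mu^\mathcal{Z}_j)$. A second triangle inequality on $\epsilon_j$ then gives $\epsilon_j(h,h^\ast)\le\epsilon_j(h)+\epsilon_j(h^\ast)$, and combining the three bounds collects $\epsilon_i(h^\ast)+\epsilon_j(h^\ast)=\lambda$.

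To finish, I would substitute $\epsilon_i(h)\le\epsilon_j(h)+2LW_1+\lambda$ into the $\alpha\,\epsilon_i(h)$ term and the symmetric bound $\epsilon_j(h)\le\epsilon_i(h)+2LW_1+\lambda$ into the $(1-\alpha)\,\epsilon_j(h)$ term of the decomposition $\epsilon_T(h)=\alpha\,\epsilon_i(h)+(1-\alpha)\,\epsilon_j(h)$. Adding the two contributions produces $\alpha\,\epsilon_j(h)+(1-\alpha)\,\epsilon_i(h)$ plus the factor $\alpha+(1-\alpha)=1$ multiplying $2LW_1+\lambda$, which is precisely the claimed bound. The main obstacle is organisational rather than computational: one must route the Wasserstein/Lipschitz argument only through the two genuine hypotheses $h,h^\ast$ (never through $\hat\psi$, which is not assumed Lipschitz), and correctly choose which of the two transfer directions to apply to each mixture term so that the index swap lands exactly and the error terms accumulate with total weight one rather than two.
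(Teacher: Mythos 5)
Your proposal is correct and follows essentially the same route as the paper's proof: decompose $\epsilon_T(h)=\alpha\epsilon_i(h)+(1-\alpha)\epsilon_j(h)$, insert the ideal hypothesis $h^*$ via the triangle inequality, apply Lemma~\ref{lem:shen} to the pair $(h,h^*)$, apply the triangle inequality once more, and collect $\epsilon_i(h^*)+\epsilon_j(h^*)=\lambda$. The only difference is presentational — you modularise the argument into a single transfer inequality $\epsilon_i(h)\leq\epsilon_j(h)+2LW_1(\mu^\mathcal{Z}_i,\mu^\mathcal{Z}_j)+\lambda$ applied symmetrically to each mixture term, whereas the paper carries both terms through the chain simultaneously; the underlying steps are identical.
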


Theorem~\ref{thm:mixture-ub} (proofs in Appendix~\ref{appx:proofs}) allows us to state a generalisation bound for the ensemble hypothesis of two clients, which is often used to analyse the federated hypothesis \citep{peng2019federated, lin2020ensemble, zhu2021data}. 

\begin{corollary}
\label{cor:fl-ub}
    Under the assumptions of Lemma~\ref{lem:shen}, let $h_i$ and $h_j$ be hypotheses learned on $\mu^\mathcal{Z}_i$ and $\mu^\mathcal{Z}_j$ respectively, so that the ensemble hypothesis is $h_{e} = \alpha h_i + (1-\alpha) h_j$. We then have
    \begin{equation}
    \begin{split}
        \epsilon_T(h_e) \leq &\ \alpha \left[\alpha \epsilon_j(h_i) + (1-\alpha) \epsilon_i(h_i) \right] \\
                            &+ (1-\alpha) \left[\alpha \epsilon_j(h_j) + (1-\alpha) \epsilon_i(h_j)\right] \\
                            &+ 2LW_1(\mu^\mathcal{Z}_i, \mu^\mathcal{Z}_j) + \lambda
    \end{split}
    \end{equation}
\end{corollary}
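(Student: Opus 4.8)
The plan is to reduce the corollary to Theorem~\ref{thm:mixture-ub} by exploiting the linearity of the ensemble hypothesis together with the convexity of the absolute-value loss. First I would write the disagreement of the ensemble on the mixture target explicitly, $\epsilon_T(h_e) = \mathbb{E}_{z \sim T}[|h_e(z) - \hat{\psi}(z)|]$, and substitute $h_e = \alpha h_i + (1-\alpha) h_j$. Because $\alpha + (1-\alpha) = 1$, the integrand splits as $h_e(z) - \hat{\psi}(z) = \alpha(h_i(z) - \hat{\psi}(z)) + (1-\alpha)(h_j(z) - \hat{\psi}(z))$, which isolates the contributions of the two base hypotheses.

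The key step is then to apply the triangle inequality pointwise in $z$ (equivalently, Jensen's inequality via convexity of $|\cdot|$) and take expectations, yielding $\epsilon_T(h_e) \leq \alpha\,\epsilon_T(h_i) + (1-\alpha)\,\epsilon_T(h_j)$. This moves the ensemble weighting outside the loss and reduces the task to bounding the two individual disagreements $\epsilon_T(h_i)$ and $\epsilon_T(h_j)$ of the base hypotheses on the mixture, each of which is precisely the quantity controlled by Theorem~\ref{thm:mixture-ub}.

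Next I would apply Theorem~\ref{thm:mixture-ub} separately to $h_i$ and to $h_j$; this is legitimate because the theorem holds for any $h \in H$, and $h_i, h_j$ are $L$-Lipschitz members of $H$ by assumption. This gives $\epsilon_T(h_i) \leq \alpha\,\epsilon_j(h_i) + (1-\alpha)\,\epsilon_i(h_i) + 2LW_1(\mu^\mathcal{Z}_i, \mu^\mathcal{Z}_j) + \lambda$ and the analogous bound for $h_j$. Substituting both into the convex combination from the previous step reproduces the two bracketed terms of the corollary verbatim, while the residual transport and ideal-error contributions appear as $\alpha\,(2LW_1(\mu^\mathcal{Z}_i,\mu^\mathcal{Z}_j) + \lambda) + (1-\alpha)\,(2LW_1(\mu^\mathcal{Z}_i,\mu^\mathcal{Z}_j) + \lambda)$, which collapses to $2LW_1(\mu^\mathcal{Z}_i,\mu^\mathcal{Z}_j) + \lambda$ exactly because the ensemble weights sum to one.

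I do not anticipate a genuine obstacle, as the argument is essentially bookkeeping once the convexity step is in place. The only point requiring care is that the same additive term $2LW_1(\mu^\mathcal{Z}_i,\mu^\mathcal{Z}_j) + \lambda$ is shared by both applications of the theorem and must not be double counted; the convex-weight cancellation is exactly what ensures it enters the final bound with coefficient one rather than two. It is also worth confirming explicitly that the absolute-value loss is convex in the hypothesis (so that Jensen applies) and that both $\lambda$ and the transport term are independent of which base hypothesis the theorem is instantiated with, both of which hold by their definitions.
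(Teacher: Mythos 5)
Your proposal is correct and follows essentially the same route as the paper: convexity of the absolute-error (Jensen) gives $\epsilon_T(h_e) \leq \alpha\,\epsilon_T(h_i) + (1-\alpha)\,\epsilon_T(h_j)$, and then Theorem~\ref{thm:mixture-ub} is applied to each base hypothesis, with the shared $2LW_1(\mu^\mathcal{Z}_i,\mu^\mathcal{Z}_j) + \lambda$ term entering once because the weights sum to one. Your write-up is in fact more explicit than the paper's two-line proof about the convexity step and the non-double-counting of the additive term.
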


Corollary~\ref{cor:fl-ub} shows that the generalisation bound of the federated hypothesis depends on the EMD between the distributions of the clients, Notably, to constrain the generalisation error via the upper bound, we therefore want to only cluster clients $i$ and $j$ if the EMD between their embedding distributions is small. This provides us with a bottom-up clustering strategy, which we develop into an algorithm in the next section. 

\paragraph{Discussion.} We decided to focus on the generalisation bounds based on distances in the embedding space, as the generalisation bounds rely on the assumption of Lipschitz continuity. The Lipschitz constant can be very large when considering the continuity of a deep neural network with respect to the input space \citep{szegedy2013intriguing, virmaux2018lipschitz}, potentially making the bound very loose. By focusing on the continuity of the hypothesis with respect to the embedding space, the bounds are likely to be more meaningful, as hypotheses are often implemented as shallow networks if not a single linear layer, where the latter is inherently Lipschitz continuous.

\subsection{Gradient and Parameter Bounds}
\label{sec:grad_bound}
Priors works have proposed clustering approaches using gradients \citep{sattler2020clustered, duan2021flexible, kim2024clustered} or parameter distances \citep{long2023multi, islam2024fedclust}. We establish a relationship with these prior works by showing that our approach of focusing on the EMD between embedding distributions also targets upper bounds to the distance between gradients and between parameters.

We first re-write our earlier loss function as $\hat{\ell}(z;\phi) = \ell(h_{\phi}(z),\hat{\psi}(z))$, so that $\hat{\ell} : \mathcal{Z} \rightarrow \mathbb{R}$. We can then use a result due to \citet{fallah2020personalized} to show that the distance between expected gradients can be bounded using the EMD between embedding distributions.
 
\begin{theorem} \citep{fallah2020personalized}
\label{thm:gradbound}
    Let $\mu^\mathcal{Z}_i, \mu^\mathcal{Z}_j \in \mathcal{P}(\mathcal{Z})$ be probability distributions over $\mathcal{Z}$. Suppose $\hat{\ell}$ is M-Lipschitz continuous with respect to $\mathcal{Z}$ and letting $\hat{\mathcal{L}}_i(z;\phi) = \mathbb{E}_{z \sim \mu^\mathcal{Z}_i}[\hat{\ell}(z;\phi)]$ then
    \begin{equation*}
        \lVert \nabla \hat{\mathcal{L}}_i(z;\phi) - \nabla \hat{\mathcal{L}}_j(z;\phi) \rVert \leq MW_1(\mu^\mathcal{Z}_i, \mu^\mathcal{Z}_j)
    \end{equation*}
\end{theorem}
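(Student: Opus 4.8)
The plan is to reduce the vector-valued gradient difference to a one-dimensional problem by projecting onto its own direction, and then invoke the Kantorovich--Rubinstein dual representation of $W_1$. Throughout, I would read the hypothesis as the statement that the gradient map $z \mapsto \nabla_\phi \hat{\ell}(z;\phi)$ is $M$-Lipschitz with respect to $\mathcal{Z}$ (equivalently, $\hat{\ell}$ is $M$-smooth in $z$), since the conclusion involves a difference of gradients and this is the regularity it actually requires. I would also take the transport cost $\zeta$ to be the metric $\lVert z - z' \rVert$ on $\mathcal{Z}$, consistent with the norm in the Lipschitz condition, so that the dual form of $W_1$ is available.

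First, I would interchange differentiation and expectation, justified by the Lipschitz regularity together with integrability, to write $\nabla \hat{\mathcal{L}}_i(\phi) = \mathbb{E}_{z \sim \mu^\mathcal{Z}_i}[\nabla_\phi \hat{\ell}(z;\phi)]$ and likewise for $j$. Denote the difference vector $D = \nabla \hat{\mathcal{L}}_i(\phi) - \nabla \hat{\mathcal{L}}_j(\phi)$; if $D = 0$ the bound is immediate, so assume $D \neq 0$ and set $v = D / \lVert D \rVert$, a unit vector. Then $\lVert D \rVert = v^\top D = \mathbb{E}_{z \sim \mu^\mathcal{Z}_i}[v^\top \nabla_\phi \hat{\ell}(z;\phi)] - \mathbb{E}_{z \sim \mu^\mathcal{Z}_j}[v^\top \nabla_\phi \hat{\ell}(z;\phi)]$, which turns the norm into the gap of a single scalar functional between the two measures.

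Second, I would define the scalar test function $g(z) = v^\top \nabla_\phi \hat{\ell}(z;\phi)$ and show it is $M$-Lipschitz in $z$: by Cauchy--Schwarz and the assumed Lipschitzness of the gradient map, $|g(z) - g(z')| \leq \lVert v \rVert \, \lVert \nabla_\phi \hat{\ell}(z;\phi) - \nabla_\phi \hat{\ell}(z';\phi) \rVert \leq M \lVert z - z' \rVert$, since $\lVert v \rVert = 1$. Hence $g/M$ is $1$-Lipschitz, and the Kantorovich--Rubinstein duality $W_1(\mu_a, \mu_b) = \sup_{\lVert f \rVert_{\mathrm{Lip}} \leq 1} ( \mathbb{E}_{z \sim \mu_a}[f(z)] - \mathbb{E}_{z \sim \mu_b}[f(z)] )$ gives $\mathbb{E}_{z \sim \mu^\mathcal{Z}_i}[g(z)] - \mathbb{E}_{z \sim \mu^\mathcal{Z}_j}[g(z)] \leq M \, W_1(\mu^\mathcal{Z}_i, \mu^\mathcal{Z}_j)$. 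Combining this with the identity from the first step yields $\lVert D \rVert \leq M W_1(\mu^\mathcal{Z}_i, \mu^\mathcal{Z}_j)$, which is the claim.

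The main obstacle is conceptual rather than computational: pinning down the correct form of the Lipschitz hypothesis. Because the left-hand side bounds a difference of gradients, the assumption must control the gradient map $z \mapsto \nabla_\phi \hat{\ell}(z;\phi)$, not merely $\hat{\ell}$ as a function of $z$; I would state this explicitly at the outset to close the gap that an ambiguous reading would otherwise leave. The only remaining technical point, the interchange of $\nabla_\phi$ and $\mathbb{E}$, is routine under the stated regularity, and I would relegate its justification to a remark rather than dwell on it.
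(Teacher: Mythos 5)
Your proof is correct, but note that the paper itself does not prove this statement: Theorem~\ref{thm:gradbound} is imported verbatim from \citet{fallah2020personalized} with no argument given in Appendix~\ref{appx:proofs}, so there is no in-paper proof to compare against. Your argument --- interchange $\nabla_\phi$ and $\mathbb{E}$, project the difference vector onto its own direction, observe that $z \mapsto v^\top \nabla_\phi \hat{\ell}(z;\phi)$ is $M$-Lipschitz, and invoke Kantorovich--Rubinstein duality --- is the standard route and matches how the result is established in the cited source. You are also right to flag that the hypothesis as printed ("$\hat{\ell}$ is $M$-Lipschitz continuous with respect to $\mathcal{Z}$") is not what the conclusion needs; the assumption must be that the gradient map $z \mapsto \nabla_\phi \hat{\ell}(z;\phi)$ is $M$-Lipschitz in $z$ (as in Fallah et al.'s smoothness assumption), and identifying this is the substantive content of your write-up, since the paper's statement (and its notation $\nabla \hat{\mathcal{L}}_i(z;\phi)$, with a dangling $z$ that has already been integrated out) is imprecise on exactly this point. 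A marginally more elementary alternative avoids duality altogether: take an optimal coupling $\gamma$ of $\mu^\mathcal{Z}_i$ and $\mu^\mathcal{Z}_j$, write the gradient difference as $\mathbb{E}_{(z,z')\sim\gamma}[\nabla_\phi \hat{\ell}(z;\phi) - \nabla_\phi \hat{\ell}(z';\phi)]$, and apply Jensen's inequality for the norm followed by the Lipschitz bound to get $M\,\mathbb{E}_\gamma[\lVert z - z'\rVert] = M W_1(\mu^\mathcal{Z}_i,\mu^\mathcal{Z}_j)$ directly; this sidesteps the need for the cost $\zeta$ to admit the dual representation, though under your (reasonable) choice of $\zeta(z,z') = \lVert z - z'\rVert$ both routes are equally valid.
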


Since clients start with the same model and take gradient steps locally, the expected parameters of client $i$ in round $t+1$ can be written as $\mathbb{E}[\phi_i^{t+1}] = \phi_i^{t} - \kappa \mathbb{E}[\hat{\mathcal{L}}_i(z;\phi)]$ where $\kappa$ is the learning rate. We can now use Theorem~\ref{thm:gradbound} to show that the EMD between embedding distributions also bounds the expected distance between parameters. In Appendix~\ref{appx:cluster_total_param_distances}, we show empirical evidence.

\begin{corollary}
    Under the assumptions of Theorem~\ref{thm:gradbound} and for some learning rate $\kappa$ we have for a given round $t$
    \begin{equation*}
        \mathbb{E}[\lVert \phi_i^{t+1} - \phi_j^{t+1} \rVert] \leq \frac{M}{\kappa}W_1(\mu^\mathcal{Z}_i, \mu^\mathcal{Z}_j)
    \end{equation*}
\end{corollary}

These results again focus on the EMD in the embedding space and bound the gradient and parameter distances of the hypotheses, which recent works have proposed for clustering \citep{kim2024clustered, islam2024fedclust}. 

\section{Algorithm Design of EMD-CFL}

The results from the previous section suggest that clients with different embedding distributions should not be clustered together, motivating the use of EMDs in making the clustering decision. Based on this intuition we design EMD-CFL, a bottom-up one-shot clustering method which estimates the pairwise EMD between clients and clusters them accordingly. 
\begin{wrapfigure}{r}{0.5\textwidth} 
    \centering
    \includegraphics[width=0.5\textwidth]{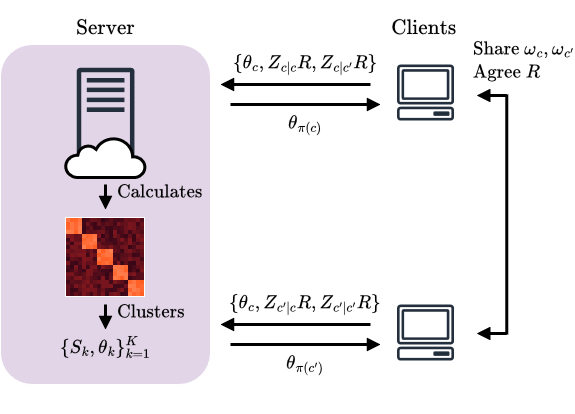}
    \caption{Illustration of Algorithm~\ref{alg:cfl}.}
    \label{fig:overview}
\end{wrapfigure}

Calculating the EMDs between embedding distributions is straightforward if we have central access to the raw data. In FL, however, that is usually not given due to data privacy requirements. We maintain these requirements by only sharing embeddings, which have been randomly projected. We specifically make use of the Johnson-Lindenstrauss Lemma \citep{johnson1984extensions} (stated in Appendix~\ref{appx:JL}), which states that random projections to a low-dimensional space nearly preserve all distances. We use a 10\% dimensionality reduction for our experiments but find that our method remains robust to much greater reductions of up to 90\%.
We then consider a system in which a trusted central server calculates the EMD between randomly projected embeddings and performs one-shot clustering. The server maintains the clusters $\{S_k\}_{k=1}^K$ and the corresponding cluster models $\{\theta_k\}_{k=1}^K$ as usual in clustered FL. We provide a diagrammatic representation in Figure~\ref{fig:overview} and show a more detailed implementation of the clustering in Algorithm~\ref{alg:cfl}.

\begin{algorithm*}[!htbp]
    \caption{EMD-CFL}
    \label{alg:cfl}
\begin{algorithmic}[1]
    \STATE {\bfseries Input:} Number of clients $C$, initial model $\theta^{(0)}$, distance tolerance $\epsilon$, learning rate $\kappa$, global epochs $T$, local epochs $E$
    \STATE Initialise each client $c \in [C]$ with a copy of $\theta^{(0)}$.
    \STATE Initialise identity adjacency matrix $M = I \in \mathbb{R}^{C \times C}$ and empty $W \in \mathbb{R}^{C \times C}$ to store EMDs
    \FOR{$t=1$ {\bfseries to} $T$}
        \STATE Server gets participating clients $P^{(t)} \subseteq [C]$.
        \FOR{all $c \in P^{(t)}$ in parallel}
            \STATE $\theta^{(t)}_c \leftarrow$ \textsc{ClientUpdate}($\theta^{(t-1)}_c$, $E$, $\kappa$)
            \STATE Clients sends $\theta^{(t)}_c$ to server
        \ENDFOR
        \FOR{all $c \in P^{(t)}$}
            \IF{$W[c][c']$ is None}
                \STATE Clients $c$ and $c'$ agree on $R$
                \STATE Clients $c$ and $c'$ exchange $\omega_c$ and $\omega_{c'}$
                \STATE Client $c$ sends $\tau_c$, $Z_{c|c}R$ and $Z_{c|c'}R$ to server
                \STATE Client $c'$ sends $\tau_{c'}$, $Z_{c'|c}R$ and $Z_{c'|c'}R$ to server 
                \STATE Server updates $W[c][c'] \leftarrow W_1(Z_{c|c}R, Z_{c'|c}R) - \tau_c$ and $W[c'][c] \leftarrow W_1(Z_{c'|c}R, Z_{c'|c'}R) - \tau_{c'}$ 
                \STATE Server updates $M[c][c'] = M[c'][c] \leftarrow \mathds{1}\{W[c][c'] < \epsilon \text{ and } W[c'][c] < \epsilon\}$
            \ENDIF
        \ENDFOR
        \STATE Identify clusters $\{S_k\}_{k=1}^K \leftarrow \textsc{Neighbourhoods($M$)}$ where $K$ is the number of distinct neighbourhoods
        \STATE Server performs \textsc{AggregateModels($\{\theta^{(t)}_c\}_{c \in S_k}$)} and broadcasts updated cluster models $\theta^{(t)}_k$.
    \ENDFOR
    \STATE {\bfseries Output:} Cluster models $\{\theta^{(T)}_k\}_{k=1}^K$
\end{algorithmic}
\end{algorithm*}

\paragraph{Input (Line 1).} Our main hyperparameter is $\epsilon$, which thresholds the maximum tolerated distance between clients of the same cluster in embedding space. 

\paragraph{Initialisation (Lines 2-3).} The server distributes a copy of the model $\theta^{(0)}$ to each client. The server initialises a $C \times C$ adjacency matrix $M$ to track the client clustering and a $C \times C$ matrix to store the pairwise EMDs. 

\paragraph{Partial Participation (Line 5).} The server receives notice of the clients who will be participating in the current epoch. For our experiments under partial participation, we make the common assumption that clients are randomly sampled each epoch.

\paragraph{Client Training (Lines 6-9).} The clients perform \textsc{ClientUpdate} by training on their local datasets for $E$ epochs and send the updated models back to the server.

\paragraph{Client Embeddings (Lines 11-15).} The clustering is one-shot for each pair of clients $c,c'$. They choose a common random projection matrix $R$ (e.g., by agreeing a projection dimension, a random seed and a random generator) which is kept private from the server and other clients. We treat clients equally and let them exchange embedding models $\omega_c, \omega_{c'}$ which allows each client to induce two distributions of their own data in embedding space. The exchange can either be orchestrated via the server or done peer-to-peer. $c$ can then produce embeddings $g_{\omega_c}(X_c) = Z_{c|c}$ using its own model and $g_{\omega_{c'}}(X_c) = Z_{c|c'}$ using the model of $c'$. We further propose each client calculate $\tau_c = W_1(Z_{c|c}, Z_{c|c}^{val})$ between their own training and validation data as a reference distance. The motivation is that we expect the validation data of client $c$ to be similarly far way from its training data as the data of client $c'$ if both clients' data came from the same distribution. Privacy is preserved, since the server only receives scalar reference distances and randomly projected embeddings without knowing the random projection matrix. Note that while $c$ and $c'$ know the random projection matrix, they do not share any embeddings with each other. Additional security could be provided by applying differential privacy to the projected embeddings \citep{dwork2006differential, kenthapadi2012privacy}.

\paragraph{Clustering (Lines 16-17).}
The server calculates the pairwise EMDs and updates the adjacency matrix $M$ by checking if each distance falls within a tolerance threshold $\epsilon$, which is intended to account for the variance in the EMD between two samples even when they come from the same distribution. We keep a fixed $\epsilon = 0.025$ for most of our experiments, finding it to be robust across datasets and models, but show that our method is stable across $\epsilon$-values. The adjacency matrix is symmetric, allowing each client to unilaterally refuse clustering with another client. 

\paragraph{Aggregation (Lines 20-21).} The neighbourhood of each client in the adjacency matrix represents the client's cluster, where identical neighbourhoods are mapped to the same cluster. The server performs \textsc{AggregateModels} within each cluster $S_k$. We use the commonly used FedAvg \citep{mcmahan2017communication} and take a weighted average with weights given by the dataset size of each client belonging to the cluster: $\theta^{(t)}_k = \sum_{c \in S_k}\frac{N_c}{N_k} \theta^{(t)}_c$ where $N_k = \sum_{c \in S_k} N_c$.

\paragraph{Next Epoch (Lines 22-23).} Note that once all clients have been clustered, training in each subsequent epoch will simply consist of alternating between clients performing \textsc{ClientUpdate($\theta_c^{(t-1)}, E, \kappa$)} of their respective cluster models and sending them back for the server to perform \textsc{AggregateModels($\{\theta^{(t)}_c\}_{c \in S_k}$)}. The final output are the cluster models $\{\theta^{(T)}_k\}_{k=1}^K$.

\subsection{Computational and Communication Considerations} 
\label{sec:cost}
The time complexity is $O(C^2 N^3\log N )$. The quadratic term is common in clustered FL. Calculating EMDs amounts to solving an optimal transport problem. We use the solver from Python Optimal Transport \citep{flamary2021pot}, which achieves $O(N^3\log N)$ time complexity. Since this can be expensive, we restrict the number of samples to the lesser of 10\% and 512 instances. Since the clustering is one-shot, however, the complexity is constant with respect to the number of epochs $T$, so that the cost can be amortised. In Appendix~\ref{appx:timing}, we show that the increase in wall time is only about 9\% over not clustering, which roughly equals one extra epoch. We demonstrate in Appendix~\ref{appx:sinkhorn} that our method is robust to approximate Sinkhorn distances, which are closer to $O(N^2)$ \citep{cuturi2013sinkhorn}.

The communication complexity is $O(C^2 |\omega|)$, since each pair of clients exchange embedding models. EMD-CFL trades off higher one-off communication costs for more accurate and one-shot clustering. This makes our method particularly suitable for cross-silo FL, in which clients typically number in the tens rather than millions as in cross-device FL \citep{kairouz2021advances}. For the cross-device setting, the communication cost could be reduced to $O(C |\omega|)$ (in line with other methods) if clients stored a small set of samples server-side, so that their embeddings could be directly computed by the server. We do not assume this scenario for the purpose of maintaining a fair experimental setup and leave a detailed exploration of reducing communication costs to future work. 
\section{Experiments}
\label{sec:experiments}

\subsection{Setups}
\paragraph{Induced Clustering.}
We use the Rotated MNIST and Rotated CIFAR10 benchmarks (illustrated in Appendix~\ref{appx:mnist_examples} and \ref{appx:cifar_examples}) \citep{ghosh2020efficient, sattler2020clustered}, which are based on MNIST \citep{lecun1998mnist} and CIFAR10 \citep{krizhevsky2009learning}. 
We retain the provided training and test splits and use 10\% of the training data for validation. As in prior works, we induce a clustering structure by replicating the data and applying a rotation of $\{0^\circ, 90^\circ, 180^\circ, 270^\circ\}$. This yields $K=4$ clusters with images differing only in their rotation, giving 240000 images across 10 classes for each of Rotated MNIST and Rotated CIFAR10. We consider $C=40$ clients and split them evenly across clusters.

\paragraph{Natural Clustering.}
We consider PACS \citep{li2017deeper} to study our method under a pre-existing clustering structure. PACS consists of 9991 images with 7 classes sourced from four different domains---photo, art, cartoon, and sketch (illustrated in Appendix~\ref{appx:pacs_examples}). Our hypothesis is that the four domains correspond to $K=4$ different clusters. We split data from each domain so that each client holds at least 500 samples. We obtain $C=18$ clients with 3 clients with photos, 4 with art, 4 with cartoons, and 7 with sketches. We use a 80-10-10 split for training, validation and testing.

\paragraph{Backdoor Features.}
We provide Backdoor MNIST and Backdoor CIFAR10 (illustrated in Appendix~\ref{appx:cmnist_examples} and \ref{appx:cifarls_examples}), on which we test our method's ability to distinguish distribution shifts caused by visually subtle differences. We consider adversarial settings in which some clients have inserted visually subtle backdoor features with the goal of making a set of target clients depend on them, allowing them to be manipulated \citep{gu2019badnets, bagdasaryan2020backdoor, qin2023revisiting}. We test if our method can be used to isolate clients with backdoor features. For Backdoor MNIST, we induce an obvious difference by creating two subsets with either green or uniformly purple digits. We induce a subtle difference by further splitting the green digits into a subset with \emph{uniformly} green digits and one subset in which the \emph{intensity} of the green is correlated with the digit. ``Greenness'' thus acts as an undesirable backdoor feature. We instantiate the setting with $C=30$ clients split evenly between the three subsets. For Backdoor CIFAR10, we induce an obvious difference by creating label-skewed subsets with only the first and the last five classes. We induce a subtle difference within the first five classes by further splitting it in a clean subset and a subset where each image has a $3 \times 3$ colour patch in the corner, where the colour is correlated with the class and thus acts as the backdoor feature. We instantiate $C=15$ clients split evenly across the three subsets.

\subsection{Method Hyperparameters}
Our main hyperparameter is the distance tolerance $\epsilon$, which in general depends on the dataset, the embedding model and the random projection dimension $\text{dim}(R)$. We found $\epsilon=0.025$ and $\text{dim}(R)=0.9 \cdot \text{dim}(Z)$, or a 10\% dimensionality reduction, to be robust choices. We, however, show that our method remains stable across datasets and model architectures under wide ranges of $\epsilon$ and under much greater dimensionality reduction levels of up to 90\%.

\subsection{Baselines}
We compare against 16 baselines, including hard and soft clustered FL methods. CFL \citep{sattler2020clustered}, FlexCFL \citep{duan2021flexible}, FeSEM \citep{long2023multi}, FedClust \citep{islam2024fedclust} and CFLGP \citep{kim2024clustered} focus on distances between model parameters or parameter gradients. PACFL \citep{vahidian2023efficient} uses principle angles between client data. IFCA \citep{ghosh2020efficient} focuses on the local loss of cluster models instead. FedEM \citep{marfoq2021federated}, FedSoft \citep{ruan2022fedsoft}, FedRC \citep{guo2023fedrc} and FedCE \citep{cai2023fedce} also use the local loss but soft cluster clients, allowing clients to contribute to multiple clusters and thus potentially offering more efficiency. 
For fair comparison, we provide the ground-truth number of clusters to all baselines (or tune the distance threshold to achieve it), thus evaluating them under their best-case scenario. Any performance differences can then be attributed to incorrect assignments of clients to wrong clusters. 
We include an Oracle baseline with manually defined ground-truth cluster memberships. For non-clustering methods we use FedAvg \citep{mcmahan2017communication} and FedProx \citep{li2020federated} as standard baselines, as well as pFedGraph \citep{ye2023personalized} and FedSaC \citep{yan2024balancing}, which personalise the aggregation for each client. We detail hyperparameters of baselines in Appendix~\ref{appx:baseline_hparams}.

\subsection{Evaluation Metrics}
We evaluate downstream performance and report the average and worst accuracy across all clients. We evaluate the clustering performance of the hard clustering methods by comparing the empirical with the ground-truth clustering in the final epoch. Note that as one-shot methods, the clusters under PACFL and our method remain constant across epochs. We measure the adjusted Rand index (ARI) \citep{hubert1985comparing, steinley2004properties}, with values in $[-0.5, 1]$ where 0 corresponds to random and 1 corresponds to optimal clustering. We report averages over three runs.

\subsection{Models}
We use a simple two-layer CNN for all MNIST experiments. We use ResNet18 \citep{he2016deep} using ImageNet weights for all CIFAR10 experiments and PACS. 
We show additional results for ResNet50, EfficientNetB3 \citep{tan2019efficientnet}, ViTB16 \citep{dosovitskiy2020image}, and ConvNeXt Base \citep{liu2022convnet}. We train all models for $T=10$ global epochs with $E=10$ local epochs each. Further details on architectures (including embedding dimensions) and training are in Appendix~\ref{appx:hparams}.
\section{Results}
\label{sec:results}
We show that EMD-CFL achieves superior clustering performance. Our method is also robust to hyperparameter choices, larger model architectures and partial participation.

\subsection{Induced Clustering}
\paragraph{Rotated MNIST.} 
The left-hand side of Table~\ref{tab:mnistmax_cifarmax} show that several methods besides EMD-CFL are able to identify the correct clustering (bold). Most do not achieve Oracle accuracy, indicating that the correct clustering is only attained later during training. In contrast, PACFL and EMD-CFL achieve this in the first epoch, as one-shot clustering methods.
Soft clustering methods clearly underperform, especially when considering worst client accuracy. This indicates that soft clustering is not necessarily optimal under the presence of a clear clustering structure.

\begin{table}[htbp]
\centering
\caption{Performance comparison on both Rotated MNIST and Rotated CIFAR10. EMD-CFL is the only method to match the Oracle and obtains optimal clustering (bold) on both datasets.}
\label{tab:mnistmax_cifarmax}
\resizebox{0.9\textwidth}{!}{
\begin{tabular}{lccc@{\hspace{1.5em}}ccc}
\toprule
 & \multicolumn{3}{c}{\textbf{Rotated MNIST}} & \multicolumn{3}{c}{\textbf{Rotated CIFAR10}} \\ \cmidrule(lr){2-4} \cmidrule(lr){5-7}
\textbf{Method} & \textbf{Avg Acc} & \textbf{Worst Acc} & \textbf{ARI} & \textbf{Avg Acc} & \textbf{Worst Acc} & \textbf{ARI} \\
\midrule
Oracle & 98.86$\pm$0.05 & 97.58$\pm$0.26 & 1.00$\pm$0.00 & 94.83$\pm$0.02 & 93.03$\pm$0.38 & 1.00$\pm$0.00 \\ \cmidrule(lr){1-7}
EMD-CFL & \textbf{98.86$\pm$0.04} & \textbf{97.58$\pm$0.26} & \textbf{1.00$\pm$0.00} & \textbf{94.84$\pm$0.09} & \textbf{93.03$\pm$0.14} & \textbf{1.00$\pm$0.00} \\
CFL & 98.62$\pm$0.33 & 96.97$\pm$0.95 & 0.90$\pm$0.14 & 90.10$\pm$1.44 & 87.55$\pm$1.34 & 0.00$\pm$0.00 \\
PACFL & \textbf{98.86$\pm$0.02} & \textbf{97.73$\pm$0.00} & \textbf{1.00$\pm$0.00} & 94.30$\pm$0.09 & 80.50$\pm$0.95 & 0.93$\pm$0.00 \\
FedClust & \textbf{96.93$\pm$0.43} & \textbf{94.39$\pm$1.46} & \textbf{1.00$\pm$0.00} & 74.97$\pm$0.73 & 65.60$\pm$4.16 & -0.01$\pm$0.02 \\
IFCA & 98.44$\pm$0.33 & 96.67$\pm$0.95 & 0.80$\pm$0.14 & 92.14$\pm$0.68 & 89.25$\pm$1.06 & 0.44$\pm$0.18 \\
FlexCFL & \textbf{97.25$\pm$0.28} & \textbf{94.85$\pm$0.26} & \textbf{1.00$\pm$0.00} & 88.55$\pm$0.06 & 77.30$\pm$4.95 & 0.01$\pm$0.00 \\
FeSEM & 87.80$\pm$4.10 & 64.09$\pm$22.17 & 0.34$\pm$0.10 & 86.52$\pm$0.09 & 83.05$\pm$0.21 & 0.00$\pm$0.00 \\
CFLGP & \textbf{98.83$\pm$0.09} & \textbf{97.73$\pm$0.00} & \textbf{1.00$\pm$0.00} & 88.39$\pm$0.23 & 80.85$\pm$0.49 & -0.01$\pm$0.01 \\
FedEM & 94.86$\pm$1.24 & 84.55$\pm$4.17 & - & 90.27$\pm$0.07 & 87.55$\pm$0.07 & - \\
FedSoft & 98.59$\pm$0.05 & 96.82$\pm$0.45 & - & 93.35$\pm$0.20 & 89.00$\pm$0.57 & - \\
FedRC & 95.10$\pm$1.65 & 85.76$\pm$5.17 & - & 90.16$\pm$0.84 & 87.75$\pm$1.20 & - \\
FedCE & 98.73$\pm$0.15 & 96.82$\pm$0.79 & - & 93.61$\pm$0.87 & 90.50$\pm$0.87 & - \\
FedAvg & 91.04$\pm$0.12 & 86.67$\pm$0.69 & - & 90.09$\pm$1.38 & 87.80$\pm$1.84 & - \\
FedProx & 89.05$\pm$0.37 & 85.61$\pm$0.52 & - & 89.86$\pm$1.52 & 87.30$\pm$2.12 & - \\
pFedGraph & 98.20$\pm$0.09 & 96.06$\pm$0.52 & - & 88.08$\pm$0.02 & 77.80$\pm$1.41 & - \\
FedSaC & 97.19$\pm$0.95 & 89.70$\pm$5.46 & - & 85.77$\pm$0.25 & 71.85$\pm$0.07 & - \\
\bottomrule
\end{tabular}
}
\end{table}

\paragraph{Rotated CIFAR10.} The right-hand side of Table~\ref{tab:mnistmax_cifarmax} shows that on a more challenging dataset all methods, other than EMD-CFL, fail to recover the correct clustering. Methods based on parameter or gradient distances (including CFL and FedClust) perform the worst. We hypothesise that with larger models (with millions of parameters), the curse of dimensionality makes it more difficult for these approaches to succeed. 
PACFL achieves a similar average accuracy as EMD-CFL but a much poorer worst client accuracy due to imperfect clustering. Its focus on the input space may also suffer from the curse of dimensionality ($\text{dim}(X) = 224 \times 224 \times 3$). EMD-CFL focuses on relatively lower-dimensional embeddings and overcomes this issue ($\text{dim}(Z)=768$ for ResNet-18). 
In Appendix~\ref{appx:failure_analysis}, we show that the distances used by these methods are in fact unable to reflect the underlying clustering structure well. In Appendix~\ref{appx:induced}, we show that our method succeeds on 6 additional versions of MNIST and CIFAR10.

\paragraph{Robustness to Hyperparameter Choices.}
We study the clustering performance under different choices of $\epsilon$ and $\text{dim}(R)$ in Appendix~\ref{appx:hparams_flex}. Our results show that EMD-CFL remains stable on a wide range of $\epsilon$-values as well as under large dimensionality reductions of up to 90\%. In Appendix~\ref{appx:sinkhorn}, we show that these findings continue to hold under approximate Sinkhorn distances.

\paragraph{Robustness to Model Architectures.}

The choice of $\epsilon$ depends on the embedding model. Appendix~\ref{appx:architecture_flex} demonstrates that EMD-CFL is robust to model architecture choices and, in particular, larger model architectures.

\paragraph{Robustness to Partial Participation.}
In Appendix~\ref{appx:partial}, we test EMD-CFL under partial participation of $\{10,20,30\}$ clients and show that EMD-CFL continues to perform well.

\subsection{Natural Clustering}

Table~\ref{tab:pacs} shows that EMD-CFL successfully distinguishes the four domains of PACS and matches the Oracle in performance. Interestingly, the EMDs show that some domains are closer than others (Figure~\ref{fig:pacs_distance}). Notably, a model trained on the art domain considers other domains to be relatively closer. Intuitively, this may be due the visual diversity of the art domain, which may overlap with other domains (examples in Appendix~\ref{appx:pacs_examples}). This implies that a model trained on the art domain may be exposed to the other domains to some other degree and therefore consider them too ``distant''. In Appendix~\ref{appx:full_pacs}, we include the results for soft clustering methods (which underperform) and show that EMD-CFL is stable for different hyperparameter choices. The observation that some of the domains were closer to one another points to the fact that it is difficult to know in advance if different domains are best captured by separate clusters or not. Visualising the EMDs could be a helpful first step in determining this in general.

\begin{figure}[htbp]
  \centering
  \begin{minipage}{0.3\textwidth}
    \centering
    \includegraphics[width=\linewidth]{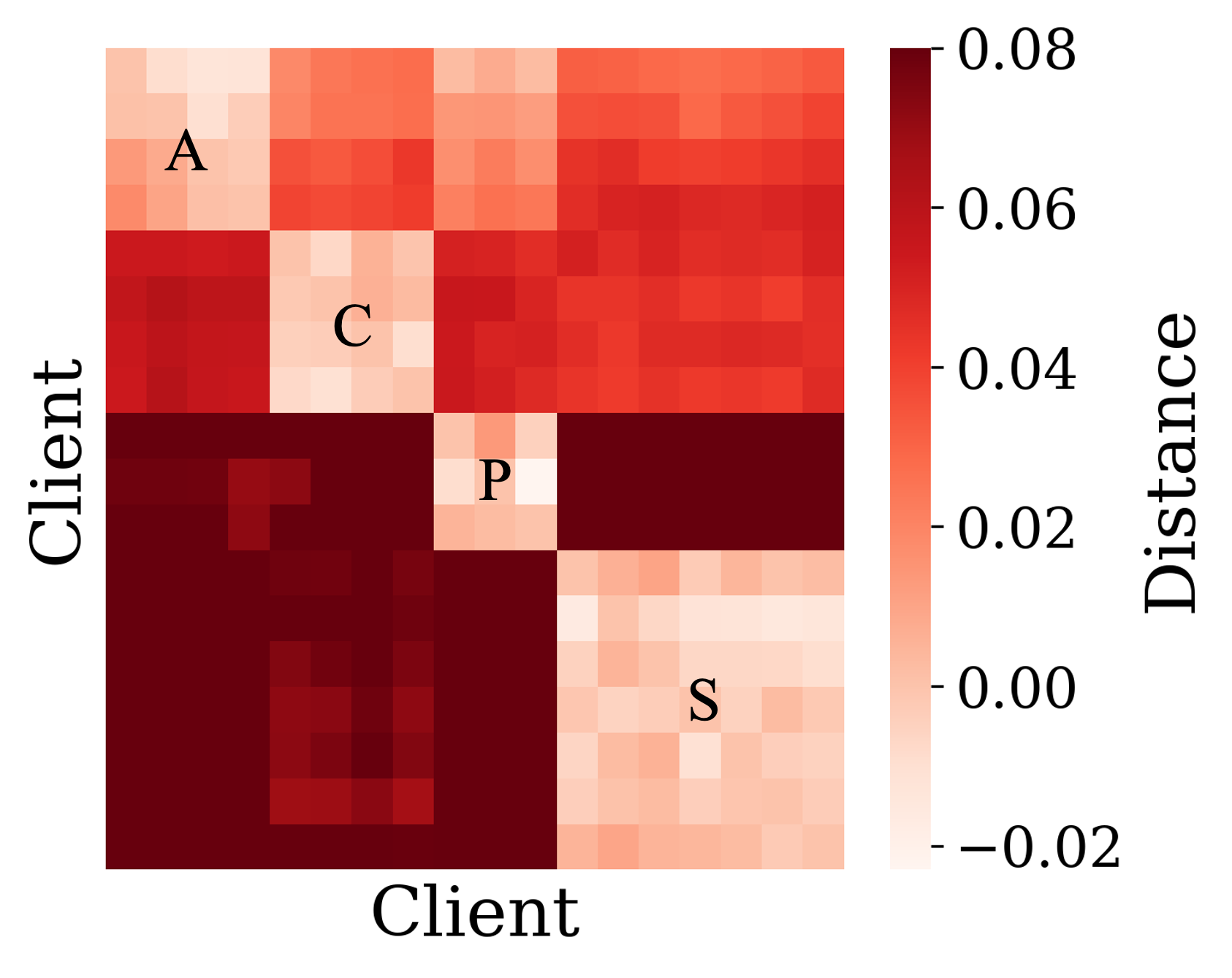}
    \caption{Pairwise EMDs.}
    \label{fig:pacs_distance}
  \end{minipage}\hfill
  \begin{minipage}{0.6\textwidth}
    \centering
    \resizebox{0.75\textwidth}{!}{
    \begin{tabular}{lccc}
    \toprule
    \textbf{Method} & \textbf{Avg Acc} & \textbf{Worst Acc} & \textbf{ARI} \\
    \midrule
    Oracle & 92.43$\pm$0.19 & 85.76$\pm$0.07 & 1.00$\pm$0.00 \\ \cmidrule(lr){1-4}
    \textbf{EMD-CFL } & \textbf{92.47$\pm$0.32} & \textbf{86.31$\pm$1.36} & \textbf{1.00$\pm$0.00} \\
    CFL & 86.03$\pm$0.79 & 79.17$\pm$1.86 & 0.00$\pm$0.00 \\
    PACFL & 88.43$\pm$0.52 & 45.31$\pm$9.18 & 0.70$\pm$0.00 \\
    \textbf{FedClust} & \textbf{87.63$\pm$0.36} & \textbf{75.74$\pm$1.99} & \textbf{1.00$\pm$0.00} \\
    IFCA & 90.39$\pm$2.07 & 82.47$\pm$5.52 & 0.69$\pm$0.22 \\
    \textbf{FlexCFL} & \textbf{92.36$\pm$0.27} & \textbf{85.42$\pm$1.03} & \textbf{1.00$\pm$0.00} \\
    FeSEM & 81.77$\pm$0.70 & 72.62$\pm$1.03 & 0.00$\pm$0.00 \\
    \textbf{CFLGP} & \textbf{92.07$\pm$0.21} & \textbf{84.28$\pm$1.30} & \textbf{1.00$\pm$0.00} \\
    \bottomrule
    \end{tabular}%
    }
    \captionof{table}{EMD-CFL matches the Oracle.}
    \label{tab:pacs}
  \end{minipage}
\end{figure}

\subsection{Backdoor Features}
We test which methods can segregate clients with visually subtle backdoor features, with which they aim to backdoor a set of target clients with similar but clean data. We evaluate the target clients' sensitivity to the backdoor by comparing the accuracies on clean data and on data with the backdoor feature manipulated to induce a misclassification. Targets which have been backdoored deteriorate in the latter scenario. 
On Backdoor MNIST (left-hand side of Table~\ref{tab:backdoor_combined}), several methods succeed by the end of training. FlexCFL, however, is not able to do so in the first epoch, resulting in unstable final accuracies, and both FlexCFL and CFLGP require specifying the correct number of clusters, which may be difficult when the backdoored data is visually similar to clean data. On Backdoor CIFAR10 (right-hand side of Table~\ref{tab:backdoor_combined}), only our method still succeeds. In Appendix~\ref{appx:backdoor}, we include results for soft clustering methods (which underperform) and show robustness to hyperparameter choices.

\begin{table}[htbp]
\centering
\caption{Several methods segregate clients with backdoor features on MNIST (bold). Only EMD-CFL successfully segregates clients with backdoor features on CIFAR10.}
\label{tab:backdoor_combined}
\resizebox{\textwidth}{!}{
\begin{tabular}{lccc@{\hspace{1.5em}}ccc}
\toprule
 & \multicolumn{3}{c}{\textbf{Backdoor MNIST}} & \multicolumn{3}{c}{\textbf{Backdoor CIFAR10}} \\ \cmidrule(lr){2-4} \cmidrule(lr){5-7}
\textbf{Method} & \textbf{Clean Acc} & \textbf{Backdoor Acc} & \textbf{ARI} & \textbf{Clean Acc} & \textbf{Backdoor Acc} & \textbf{ARI} \\
\midrule
Oracle & 98.37$\pm$0.11 & 98.27$\pm$0.16 & 1.00$\pm$0.00 & 97.89$\pm$0.12 & 97.63$\pm$0.06 & 1.00$\pm$0.00 \\ \cmidrule(lr){1-7}
EMD-CFL & \textbf{98.40$\pm$0.04} & \textbf{98.30$\pm$0.06} & \textbf{1.00$\pm$0.00} & \textbf{97.84$\pm$0.23} & \textbf{97.73$\pm$0.32} & \textbf{1.00$\pm$0.00} \\
CFL & 98.43$\pm$0.07 & 95.98$\pm$1.39 & 0.46$\pm$0.02 & 92.37$\pm$0.27 & 39.94$\pm$4.85 & 0.00$\pm$0.00 \\
PACFL & 95.55$\pm$0.14 & 81.55$\pm$1.45 & 0.42$\pm$0.00 & 97.71$\pm$0.11 & 84.57$\pm$3.13 & 0.53$\pm$0.00 \\
FedClust & 93.04$\pm$3.35 & 93.82$\pm$1.54 & 0.53$\pm$0.02 & 82.61$\pm$9.00 & 78.84$\pm$9.60 & 0.16$\pm$0.06 \\
IFCA & 98.54$\pm$0.05 & 98.15$\pm$0.14 & 0.55$\pm$0.00 & 97.62$\pm$0.14 & 77.69$\pm$4.06 & 0.53$\pm$0.00 \\
FlexCFL & \textbf{39.51$\pm$51.12} & \textbf{35.46$\pm$44.10} & \textbf{1.00$\pm$0.00} & 76.39$\pm$22.40 & 29.83$\pm$8.14 & 0.00$\pm$0.10 \\
FeSEM & 97.23$\pm$0.00 & 92.68$\pm$2.50 & -0.00$\pm$0.01 & 94.39$\pm$0.08 & 91.65$\pm$0.37 & 0.00$\pm$0.00 \\
CFLGP & \textbf{98.34$\pm$0.08} & \textbf{98.27$\pm$0.10} & \textbf{1.00$\pm$0.00} & 76.42$\pm$25.39 & 35.19$\pm$4.33 & 0.06$\pm$0.16 \\
\bottomrule
\end{tabular}
}
\end{table}

\section{Conclusion}
We propose EMD-CFL, a novel and theoretically motivated one-shot method for clustered FL. We empirically demonstrate that the use of EMDs consistently and robustly results in the best clustering performance in one-shot.

\section*{Acknowledgements}
This research was supported by the UKRI CDT in AI for Healthcare \url{https://ai4health.io/} (grant no. EP/S023283/1), by NIHR Imperial BRC (grant no. RDB01 79560), and by Brain Tumour Research Centre of Excellence at Imperial.

\bibliographystyle{plainnat}
\bibliography{refs}

\appendix
\section{Theory}

\subsection{Proofs}
\label{appx:proofs}

\begin{theorem}
\label{appx:thm:mixture-ub}
    Under the assumptions from Lemma~\ref{lem:shen}, we have
    \begin{equation*}
        \epsilon_T(h) \leq \alpha \epsilon_j(h) + (1-\alpha) \epsilon_i(h) + 2LW_1(\mu^\mathcal{Z}_i, \mu^\mathcal{Z}_j) + \lambda
    \end{equation*}
    where $\lambda \equiv min_{h \in H} \epsilon_i(h) + \epsilon_j(h)$ is the sum error of the ideal hypothesis.  
\end{theorem}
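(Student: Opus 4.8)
The plan is to reduce everything to the disagreement $\epsilon_{\cdot}(h,h')$ between pairs of \emph{hypotheses in $H$}, since Lemma~\ref{lem:shen} only applies when both of its arguments are $L$-Lipschitz, whereas the induced labelling function $\hat{\psi}$ need not be. The bridge is an ideal hypothesis $h^* \in H$ attaining the minimum in $\lambda = \min_{h \in H} \epsilon_i(h) + \epsilon_j(h)$; its two component errors are nonnegative and sum to exactly $\lambda$, which is precisely what will materialise as the additive slack.

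First I would expand the error on the mixture by linearity of expectation. Since $T = \alpha \mu^\mathcal{Z}_i + (1-\alpha)\mu^\mathcal{Z}_j$ and $\epsilon_T(h) = \mathbb{E}_{z \sim T}[|h(z) - \hat{\psi}(z)|]$, this gives
\[
\epsilon_T(h) = \alpha\, \epsilon_i(h) + (1-\alpha)\, \epsilon_j(h).
\]
Note that the indices here are $i,j$, so the real content of the theorem is to \emph{swap} them at the cost of the slack terms, which is where Lemma~\ref{lem:shen} enters.

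Next I would bound each single-domain error against the error on the other domain. Writing $\epsilon_i(h) = \epsilon_i(h,\hat{\psi})$ and inserting $h^*$ via the triangle inequality for $\epsilon_i$ (which follows from $|h - \hat{\psi}| \le |h - h^*| + |h^* - \hat{\psi}|$ pointwise, then taking expectations), I get $\epsilon_i(h) \le \epsilon_i(h,h^*) + \epsilon_i(h^*)$. Now both $h$ and $h^*$ lie in $H$, so Lemma~\ref{lem:shen} applies to give $\epsilon_i(h,h^*) \le \epsilon_j(h,h^*) + 2LW_1(\mu^\mathcal{Z}_i, \mu^\mathcal{Z}_j)$, and a second triangle inequality gives $\epsilon_j(h,h^*) \le \epsilon_j(h) + \epsilon_j(h^*)$. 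Chaining these and using $\epsilon_i(h^*) + \epsilon_j(h^*) = \lambda$ yields
\[
\epsilon_i(h) \le \epsilon_j(h) + \lambda + 2LW_1(\mu^\mathcal{Z}_i, \mu^\mathcal{Z}_j).
\]
The symmetric argument, applying Lemma~\ref{lem:shen} in the other direction, gives $\epsilon_j(h) \le \epsilon_i(h) + \lambda + 2LW_1(\mu^\mathcal{Z}_i, \mu^\mathcal{Z}_j)$.

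Finally I would substitute these two cross-inequalities into the mixture expansion. Because the convex weights $\alpha$ and $1-\alpha$ are nonnegative and sum to one, the $\lambda$ and $2LW_1$ contributions combine without any blow-up:
\[
\epsilon_T(h) \le \alpha\big(\epsilon_j(h) + \lambda + 2LW_1\big) + (1-\alpha)\big(\epsilon_i(h) + \lambda + 2LW_1\big) = \alpha\, \epsilon_j(h) + (1-\alpha)\, \epsilon_i(h) + 2LW_1(\mu^\mathcal{Z}_i, \mu^\mathcal{Z}_j) + \lambda,
\]
which is the claim. The one genuine obstacle is the non-Lipschitzness of $\hat{\psi}$: it is what forbids a direct application of Lemma~\ref{lem:shen} and forces the detour through $h^*$ and the two triangle inequalities, and it is exactly the $h^*$-error terms generated by these detours that assemble into the additive $\lambda$. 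Everything else is routine bookkeeping with the convex weights.
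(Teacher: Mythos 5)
Your proof is correct and follows essentially the same route as the paper's: expand $\epsilon_T(h)$ over the mixture, pass through the ideal hypothesis $h^*$ via the triangle inequality, transfer the disagreement $\epsilon_\cdot(h,h^*)$ across domains with Lemma~\ref{lem:shen}, and apply the triangle inequality once more so that the $h^*$-error terms assemble into $\lambda$. The only difference is organisational --- you isolate the cross-domain inequality $\epsilon_i(h) \leq \epsilon_j(h) + \lambda + 2LW_1(\mu^\mathcal{Z}_i,\mu^\mathcal{Z}_j)$ first and then take the convex combination, whereas the paper carries the weights $\alpha, 1-\alpha$ through the whole chain --- but the steps and the resulting bound are identical.
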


\begin{proof}
    \begin{equation*}
    \begin{split}
        \epsilon_T(h) &= \alpha \epsilon_i(h) + (1-\alpha) \epsilon_j(h) \\
                    &\leq \alpha \left[ \epsilon_i(h,h^*) + \epsilon_i(h^*)\right] + (1 - \alpha) \left[ \epsilon_j(h,h^*) + \epsilon_j(h^*)\right] \\ 
                    &\text{(triangle inequality)} \\ 
                    &= \alpha \left[ \epsilon_i(h,h^*) - \epsilon_j(h,h^*) + \epsilon_j(h,h^*) + \epsilon_i(h^*)\right] + (1 - \alpha) \left[ \epsilon_j(h,h^*) - \epsilon_i(h,h^*) + \epsilon_i(h,h^*) + \epsilon_j(h^*)\right] \\
                    &\leq \alpha \left[ 2LW_1(\mu^\mathcal{Z}_i, \mu^\mathcal{Z}_j) + \epsilon_j(h,h^*) + \epsilon_i(h^*)\right] + (1 - \alpha) \left[ 2LW_1(\mu^\mathcal{Z}_i, \mu^\mathcal{Z}_j) + \epsilon_i(h,h^*) + \epsilon_j(h^*)\right] \\
                    &\text{(Lemma~\ref{lem:shen})} \\ 
                    &= 2LW_1(\mu^\mathcal{Z}_i, \mu^\mathcal{Z}_j) + \alpha \epsilon_j(h,h^*) + (1-\alpha) \epsilon_i(h,h^*)  + \left[\alpha \epsilon_i(h^*) + (1-\alpha) \epsilon_j(h^*) \right] \\
                    &\leq 2LW_1(\mu^\mathcal{Z}_i, \mu^\mathcal{Z}_j) + \alpha \left[\epsilon_j(h) + \epsilon_j(h^*)\right] + (1-\alpha) \left[ \epsilon_i(h) + \epsilon_i(h^*) \right] + \left[\alpha \epsilon_i(h^*) + (1-\alpha) \epsilon_j(h^*) \right] \\
                    &\text{(triangle inequality)} \\ 
                    &= 2LW_1(\mu^\mathcal{Z}_i, \mu^\mathcal{Z}_j) + \lambda + \alpha \epsilon_j(h) + (1-\alpha) \epsilon_i(h)   
    \end{split}
    \end{equation*}
\end{proof}

\begin{corollary}
    Under the assumptions of Lemma~\ref{lem:shen}, let $h_i$ and $h_j$ be hypotheses learned on $\mu^\mathcal{Z}_i$ and $\mu^\mathcal{Z}_j$ respectively, so that the ensemble hypothesis is $h_{e} = \alpha h_i + (1-\alpha) h_j$. We then have
    \begin{equation*}
    \begin{split}
        \epsilon_T(h_e) \leq &\ \alpha \left[\alpha \epsilon_j(h_i) + (1-\alpha) \epsilon_i(h_i) \right] \\
                            &+ (1-\alpha) \left[\alpha \epsilon_j(h_j) + (1-\alpha) \epsilon_i(h_j)\right] \\
                            &+ 2LW_1(\mu^\mathcal{Z}_i, \mu^\mathcal{Z}_j) + \lambda
    \end{split}
    \end{equation*}
\end{corollary}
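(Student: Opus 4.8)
The plan is to obtain this as an essentially immediate consequence of Theorem~\ref{thm:mixture-ub} applied to the specific hypothesis $h_e$, combined with the convexity of the error functional $\epsilon_c(\cdot)$ in its hypothesis argument. Before invoking Theorem~\ref{thm:mixture-ub} I would first check that $h_e$ satisfies its hypotheses: since $h_i$ and $h_j$ are each $L$-Lipschitz with respect to $\mathcal{Z}$, the convex combination $h_e = \alpha h_i + (1-\alpha) h_j$ is also $L$-Lipschitz (its Lipschitz constant is at most $\alpha L + (1-\alpha)L = L$), so the assumptions of Lemma~\ref{lem:shen} transfer to $h_e$ and the theorem may be applied to it directly.

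The key auxiliary step is a convexity (triangle-inequality) bound on the error of the ensemble at any embedding distribution. Because the mixture weights sum to one we may write $\hat\psi(z) = \alpha\hat\psi(z) + (1-\alpha)\hat\psi(z)$, and then the triangle inequality applied pointwise inside the absolute value gives, for any fixed $\mu^\mathcal{Z}_c$,
\begin{equation*}
\epsilon_c(h_e) = \mathbb{E}_{z \sim \mu^\mathcal{Z}_c}\bigl[\,\lvert \alpha(h_i(z)-\hat\psi(z)) + (1-\alpha)(h_j(z)-\hat\psi(z)) \rvert\,\bigr] \leq \alpha\,\epsilon_c(h_i) + (1-\alpha)\,\epsilon_c(h_j).
\end{equation*}
This inequality holds in particular for $c=i$ and $c=j$, which is all that will be needed.

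Finally I would apply Theorem~\ref{thm:mixture-ub} with $h = h_e$ to get $\epsilon_T(h_e) \leq \alpha\,\epsilon_j(h_e) + (1-\alpha)\,\epsilon_i(h_e) + 2LW_1(\mu^\mathcal{Z}_i,\mu^\mathcal{Z}_j) + \lambda$, then substitute the convexity bound for each of $\epsilon_j(h_e)$ and $\epsilon_i(h_e)$. Expanding the resulting double mixture produces four cross terms with coefficients $\alpha^2$, $\alpha(1-\alpha)$, $\alpha(1-\alpha)$, and $(1-\alpha)^2$, and collecting them by which base hypothesis they involve ($h_i$ versus $h_j$) yields exactly the two bracketed expressions in the statement, weighted by $\alpha$ and $(1-\alpha)$ respectively. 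The only real obstacle is bookkeeping: one must take care that the regrouping matches the stated bracketing, which groups terms by the base hypothesis rather than by the distribution over which the error is measured. No inequality beyond Theorem~\ref{thm:mixture-ub} and the triangle inequality is required, so the argument is short.
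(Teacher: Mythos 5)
Your argument is correct and yields exactly the stated bound, but it runs the two ingredients in the opposite order from the paper. The paper first applies convexity of the error at the \emph{mixture} distribution, $\epsilon_T(h_e) \leq \alpha\,\epsilon_T(h_i) + (1-\alpha)\,\epsilon_T(h_j)$, and then invokes Theorem~\ref{thm:mixture-ub} once for each of $h_i$ and $h_j$; since $h_i$ and $h_j$ are $L$-Lipschitz by assumption, no further checking is needed, and the bracketing in the statement (grouping by base hypothesis) falls out immediately. You instead apply Theorem~\ref{thm:mixture-ub} once to $h_e$ itself and then use convexity at each of the source distributions $\mu^\mathcal{Z}_i$ and $\mu^\mathcal{Z}_j$; after expanding, the four cross terms $\alpha^2\epsilon_j(h_i)$, $\alpha(1-\alpha)\epsilon_i(h_i)$, $\alpha(1-\alpha)\epsilon_j(h_j)$, $(1-\alpha)^2\epsilon_i(h_j)$ coincide with the paper's, so the two bounds are identical. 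The only extra obligation your route incurs is verifying that $h_e$ satisfies the hypotheses of Lemma~\ref{lem:shen}, and you correctly discharge it by noting that a convex combination of $L$-Lipschitz maps is $L$-Lipschitz; the paper's ordering sidesteps this step entirely. Both proofs are equally short and there is no substantive advantage either way.
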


\begin{proof}
We apply Theorem~\ref{appx:thm:mixture-ub} after the inequality below and the result follows.
\begin{equation*}
    \begin{split}
        \epsilon_T(h_e) &= \epsilon_T(\alpha h_i + (1-\alpha) h_j) \\
                        &\leq \alpha \epsilon_T(h_i) + (1-\alpha) \epsilon_T(h_j) \\
                        &\text{(convex error)}
    \end{split}
\end{equation*}
\end{proof}

\begin{corollary}
    Under the assumptions of Theorem~\ref{thm:gradbound} and for some learning rate $\kappa$ we have for a given round $t$
    \begin{equation*}
        \mathbb{E}[\lVert \phi_i^{t+1} - \phi_j^{t+1} \rVert] \leq \frac{M}{\kappa}W_1(\mu^\mathcal{Z}_i, \mu^\mathcal{Z}_j)
    \end{equation*}
\end{corollary}

\begin{proof}
The expected parameters of client $i$ in round $t+1$ can be written as $\mathbb{E}[\phi_i^{t+1}] = \phi_i^{t} - \kappa \mathbb{E}[\hat{\mathcal{L}}_i(z;\phi)]$ where $\kappa$ denotes a learning rate. We apply Theorem~\ref{thm:gradbound} and the result follows.
\end{proof}

\subsection{Johnson-Lindenstrauss Lemma}
\label{appx:JL}
\begin{lemma} \citep{johnson1984extensions}
    Given a set of $N$ points $A \subset \mathbb{R}^{d}$, $\epsilon_{JL} \in (0,1)$ and $s = \Omega(\frac{\log{N}}{\lambda^2_{JL}})$, there is a $d \times s$ projection matrix $P$ such that for any $a,b \in A$ with probability $1-\delta_{JL}$ where $\log{(1/\delta_{JL})} = O(s\lambda_{JL})$:
    \begin{equation*}
        (1-\epsilon_{JL}) \lVert a - b \rVert^2 \leq \lVert (a - b)P \rVert^2 \leq (1+ \epsilon_{JL}) \lVert a - b \rVert^2
    \end{equation*}
\end{lemma}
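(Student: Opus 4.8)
The plan is to prove the lemma by the probabilistic method: construct $P$ as a suitably scaled random matrix, establish that the two-sided distance bound holds for any fixed pair with failure probability at most $\delta_{JL}$, and then (for the uniform existence version) take a union bound over all $\binom{N}{2}$ pairs. Concretely, I would let $P$ have entries drawn i.i.d.\ from $\mathcal{N}(0, 1/s)$; scaled Rademacher entries would work identically but require slightly heavier machinery in the concentration step. The scaling $1/s$ is chosen precisely so that $P$ acts as an isometry in expectation.

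First I would reduce the claim to a single-vector concentration statement. Fix any pair $a,b \in A$, set $u = a - b$, and note that by homogeneity it suffices to control $\lVert uP \rVert^2 / \lVert u \rVert^2$, so we may take $\lVert u \rVert = 1$. Each of the $s$ coordinates of the row vector $uP$ is $\sum_k u_k P_{kj} \sim \mathcal{N}(0, 1/s)$, independent across the $s$ output coordinates. Hence $s \lVert uP \rVert^2$ is a sum of $s$ independent standard squared Gaussians, i.e.\ a $\chi^2_s$ variable, so that $\mathbb{E}[\lVert uP \rVert^2] = 1 = \lVert u \rVert^2$. This pins down the expectation and reduces everything to a chi-squared tail estimate.

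The core step is this concentration bound. Writing $X = s\lVert uP \rVert^2 \sim \chi^2_s$, I would apply a Chernoff argument using the moment generating function $\mathbb{E}[e^{tX}] = (1-2t)^{-s/2}$ for $t < 1/2$ and optimising over $t$ on each tail, yielding
\begin{equation*}
    \Pr\left[\,\bigl| \lVert uP \rVert^2 - 1 \bigr| > \epsilon_{JL}\,\right] \leq 2\exp\!\left(-\tfrac{s}{4}\bigl(\epsilon_{JL}^2 - \epsilon_{JL}^3\bigr)\right) \leq 2\exp\!\left(-c\,s\,\epsilon_{JL}^2\right)
\end{equation*}
for a universal constant $c > 0$ and $\epsilon_{JL} \in (0,1)$. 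This is exactly the per-pair failure probability $\delta_{JL}$ with $\log(1/\delta_{JL}) = \Theta(s\,\epsilon_{JL}^2)$, matching the stated dependence; rescaling by $\lVert u \rVert^2$ recovers the inequality for general $u = a - b$, which is the per-pair form in the statement.

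Finally, for the uniform existence version I would union-bound over all $\binom{N}{2} < N^2$ pairs: the probability that \emph{some} pair is distorted beyond $(1\pm\epsilon_{JL})$ is at most $2N^2\exp(-c\,s\,\epsilon_{JL}^2)$, which drops below $1$ as soon as $s = \Omega(\log N / \epsilon_{JL}^2)$, so a good $P$ exists. The main obstacle is the concentration estimate: one must compute and optimise the $\chi^2_s$ moment generating function carefully to obtain the clean $\exp(-c\,s\,\epsilon_{JL}^2)$ tail, and if one insists on sub-Gaussian (e.g.\ Rademacher) entries the exact MGF is unavailable and the same tail must instead be extracted through sub-exponential concentration, though the final form is unchanged.
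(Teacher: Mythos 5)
Your proposal is the standard Gaussian random-projection proof of the Johnson--Lindenstrauss lemma (scaled $\mathcal{N}(0,1/s)$ entries, reduction to a $\chi^2_s$ tail via the moment generating function, then a union bound over the $\binom{N}{2}$ pairs), and it is correct. Note, however, that the paper does not prove this lemma at all: it is stated in the appendix purely as a cited background result from \citet{johnson1984extensions}, so there is no in-paper argument to compare against. One point worth flagging is that your derivation yields a per-pair failure probability with $\log(1/\delta_{JL}) = \Theta(s\,\epsilon_{JL}^2)$, whereas the statement as printed writes $s = \Omega(\log N/\lambda_{JL}^2)$ and $\log(1/\delta_{JL}) = O(s\lambda_{JL})$, mixing a symbol $\lambda_{JL}$ that is never defined with the distortion parameter $\epsilon_{JL}$ and dropping a square in the exponent; your version matches the standard form of the lemma, and the discrepancy appears to be a typo in the paper's statement rather than a gap in your argument.
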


\newpage
\section{Related Works}
\label{appx:related_works}
\subsection{Federated Learning}
Federated learning (FL) \citep{konevcny2016federated, mcmahan2017communication} is a key learning framework for training machine learning models in a decentralised setting with multiple clients owning local datasets. A common feature of these decentralised settings is that the raw local data of the clients cannot be collected in a central place, such as when privacy concerns restrict its sharing. Since traditional centralised learning often requires large volumes of data, this poses a problem which FL addresses. FL is of particular interest for naturally decentralised domains such as mobile devices or healthcare \citep{bonawitz2019towards, rieke2020future}. Standard FL algorithms, such as FedAvg \citep{mcmahan2017communication}, typically make the assumption that clients have IID data, which often does not hold in practice, and results in suboptimal models \citep{li2020federated}. As an example in the context of healthcare this heterogeneity naturally arises, as different populations live in different places and are thus served by different hospitals, which will therefore own differently distributed data. Dealing with this heterogeneity amongst clients is important in order to personalise the output of the machine learning models for the client at hand while still maximally exploiting the insights that can be gained from other clients.

\subsection{Clustered Federated Learning}
Clustered FL \citep{sattler2020clustered, ghosh2020efficient} was proposed to address the issue of clients with non-IID data. This approach is motivated by the idea that the heterogeneity amongst clients is in fact characterised by a clustering structure, so that there are natural clusters of clients which are internally more homogeneous. Identifying these clusters can therefore recover the IID assumption on the cluster-level, allowing optimal models to be trained for each cluster. Existing methods in this area differ primarily in the approach with which they identify these clusters. A dominant approach relies on the distance between model parameters or parameter gradients \citep{sattler2020clustered, briggs2020federated, duan2021flexible, long2023multi, islam2024fedclust, kim2024clustered}. Another common approach is based on the local loss of cluster models \citep{ghosh2020efficient,marfoq2021federated,ruan2022fedsoft,guo2023fedrc,cai2023fedce}. \citet{vahidian2023efficient} is closest in spirit to our approach and also focuses on comparing differences in data directly, but does so on the raw input level. In addition to the clustering identification approach, another point of distinction is if the clustering is soft or hard. Soft clustering effectively allows clients to participate in training multiple cluster models \citep{marfoq2021federated, ruan2022fedsoft, guo2023fedrc,cai2023fedce}, while hard clustering forces each client to belong to only one cluster. Personalisation methods such as pFedGraph and FedSaC \citep{ye2023personalized, yan2024balancing} could be seen as relatives of the soft clustering approach, where each client effectively maintains its own cluster model to which other clients contribute. Our method takes the hard clustering approach, which we show is beneficial for segregating certain client clusters such as in the backdoored data setting.

\subsection{Domain Adaptation}
Domain adaptation is typically concerned with the setting in which training and test domains experience a shift and therefore violate the IID assumption usually made by empirical risk minimising algorithms \citep{zhang2022transfer}. When the IID assumption is violated, a learner that does not take into account such a shift is expected to have a greater error on the test domain. Theoretical work has aimed to place an upper bound on this error. Approaches for deriving the upper bound use different measures of distribution divergence, such as the Wasserstein distance \citep{redko2017theoretical, shen2018wasserstein} on which we focus. Other approaches use other divergences, such as H-divergence \citep{ben2006analysis, ben2010theory} and discrepancy \citep{mansour2009domain}, but the former typically requires separate classifiers for each pair of distributions while the latter is algorithmically more expensive \citep{redko2017theoretical}. There are works that aim to achieve domain adaptation in a federated setting \citep{peng2019federated, gan2021fruda}. While we draw on results from the domain adaptation literature and evaluate our method on some datasets also used in domain adaptation, our method fundamentally differs in approach. We do not aim to adapt from some source to some target domain but instead identify different domains to then train domain-specific models.

\newpage
\section{Examples of Data}
\label{appx:data_examples}
\subsection{Rotated MNIST}
\label{appx:mnist_examples}

\begin{figure}[hbtp]
    \centering

    \begin{subfigure}[b]{0.24\textwidth}
        \centering
        \includegraphics[width=\linewidth]{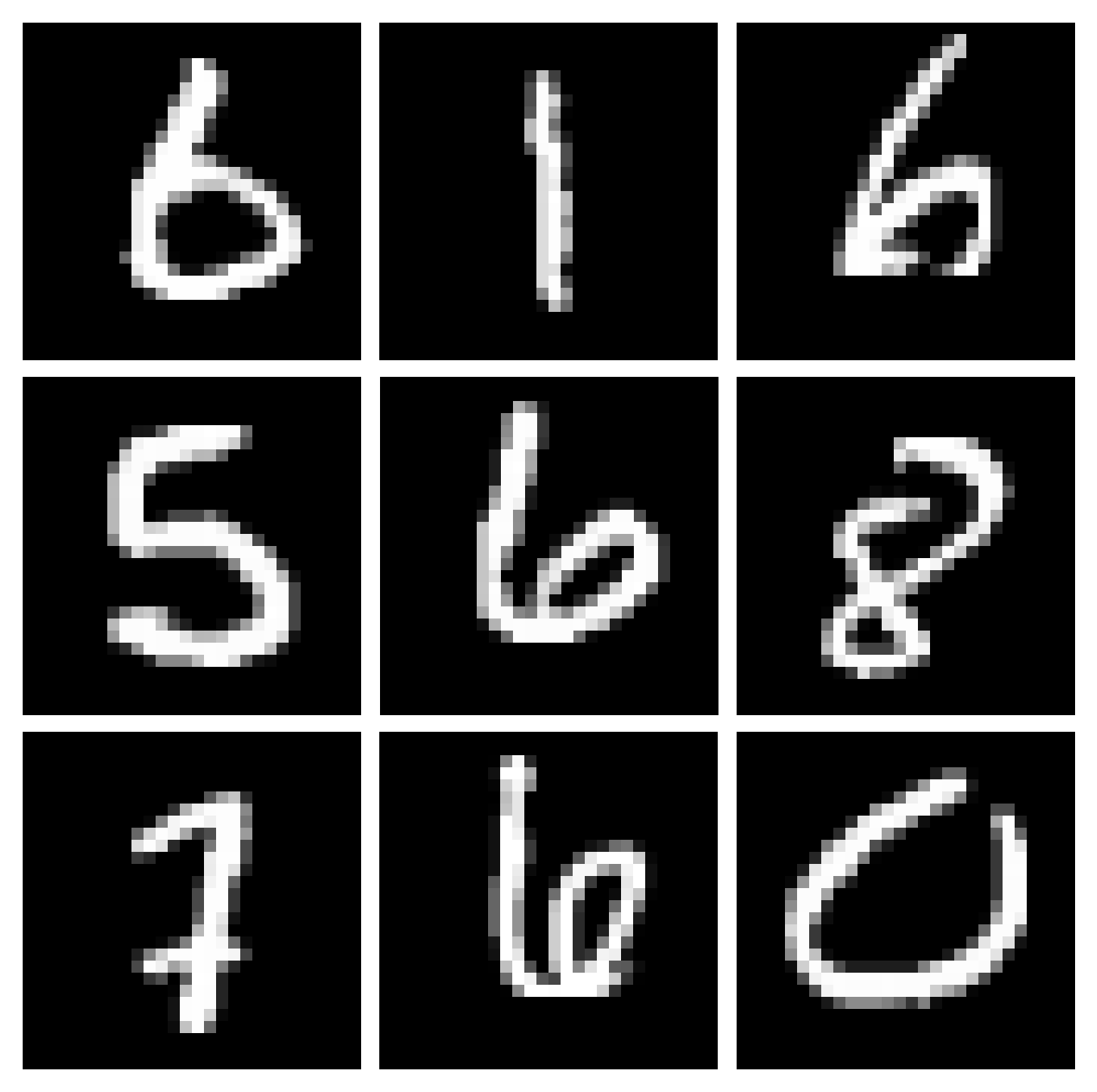}
        \caption{$0^\circ$}
    \end{subfigure}
    \begin{subfigure}[b]{0.24\textwidth}
        \centering
        \includegraphics[width=\linewidth]{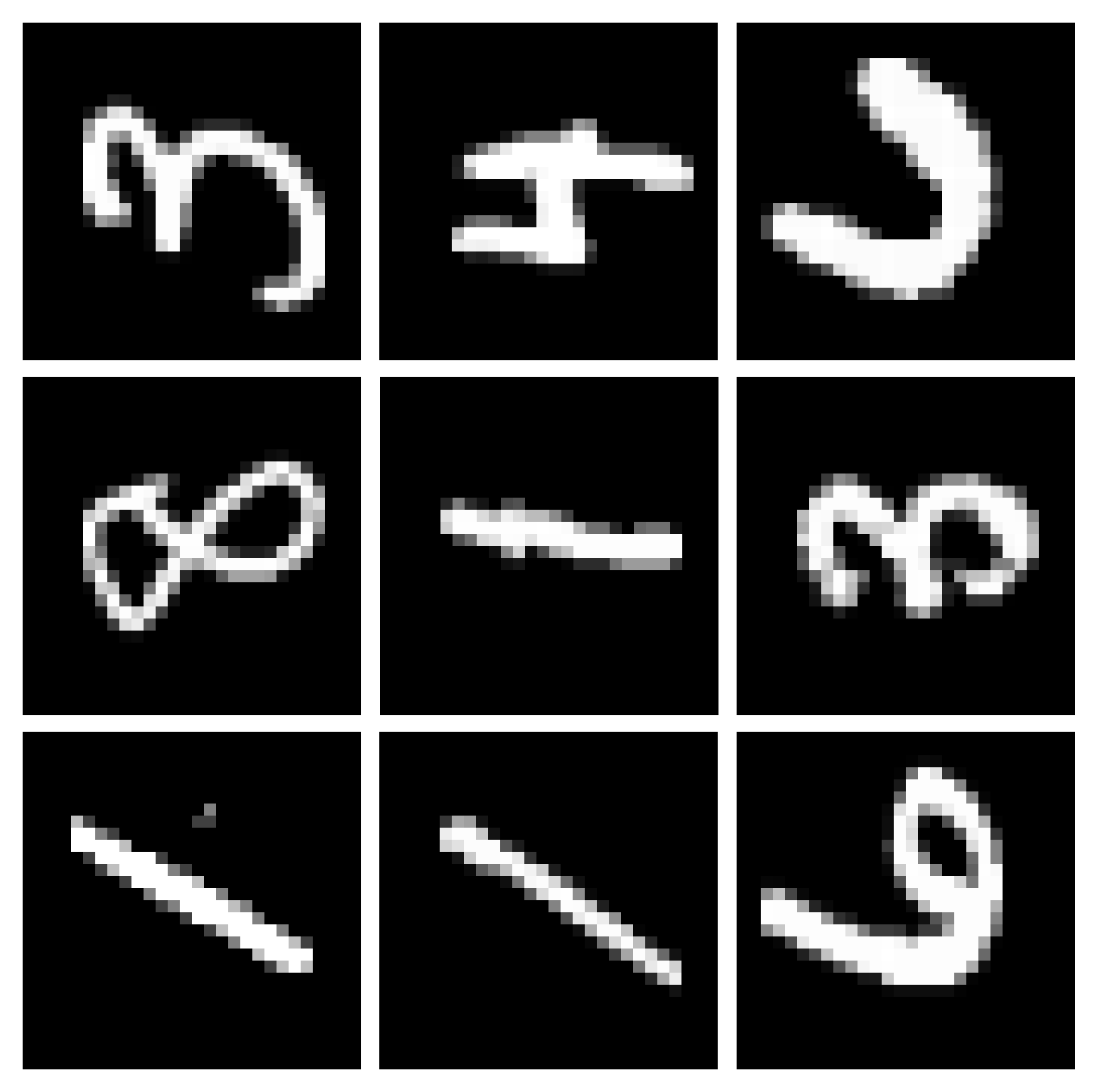}
        \caption{$90^\circ$}
    \end{subfigure}
    \begin{subfigure}[b]{0.24\textwidth}
        \centering
        \includegraphics[width=\linewidth]{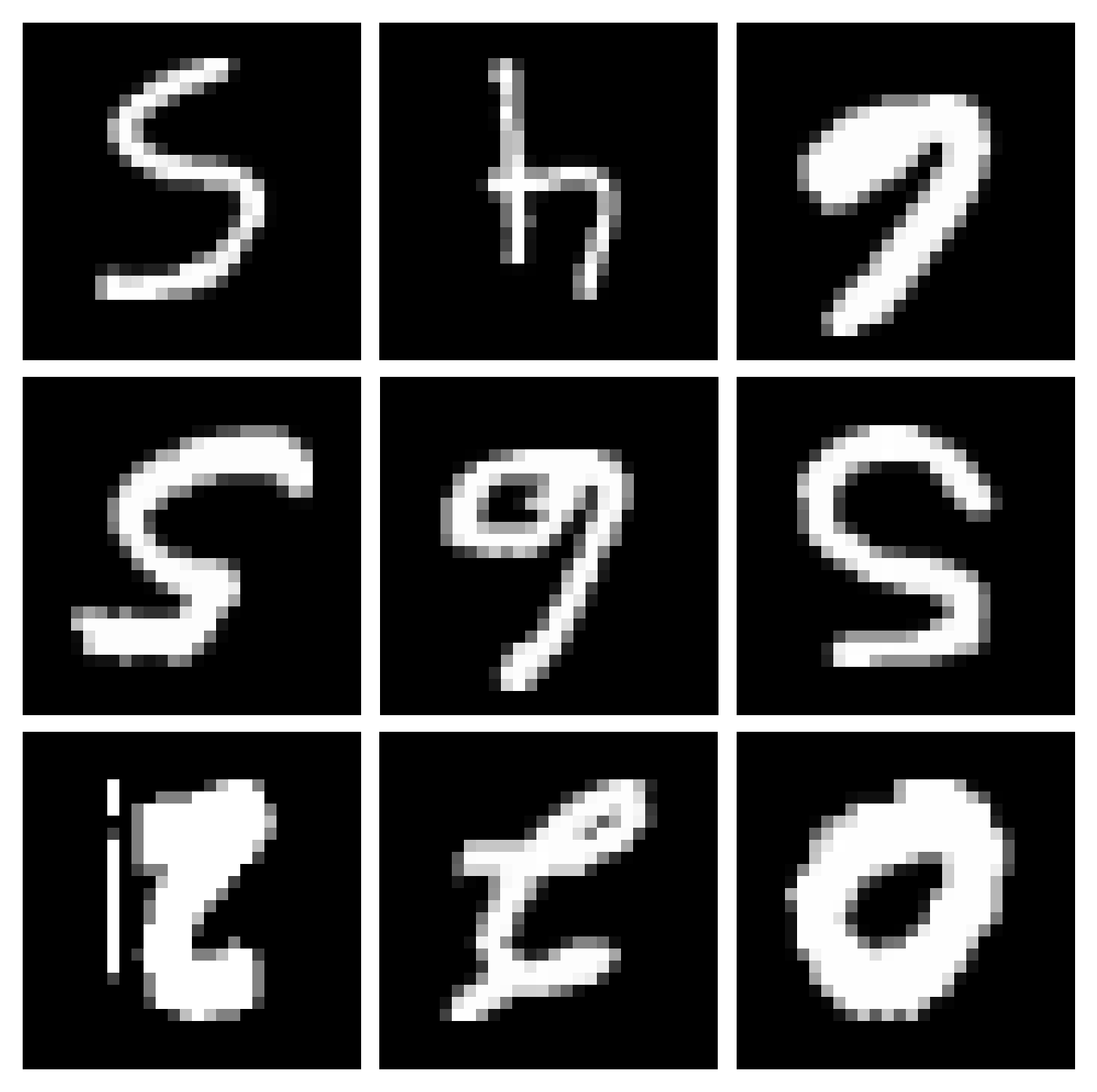}
        \caption{$180^\circ$}
    \end{subfigure}
    \begin{subfigure}[b]{0.24\textwidth}
        \centering
        \includegraphics[width=\linewidth]{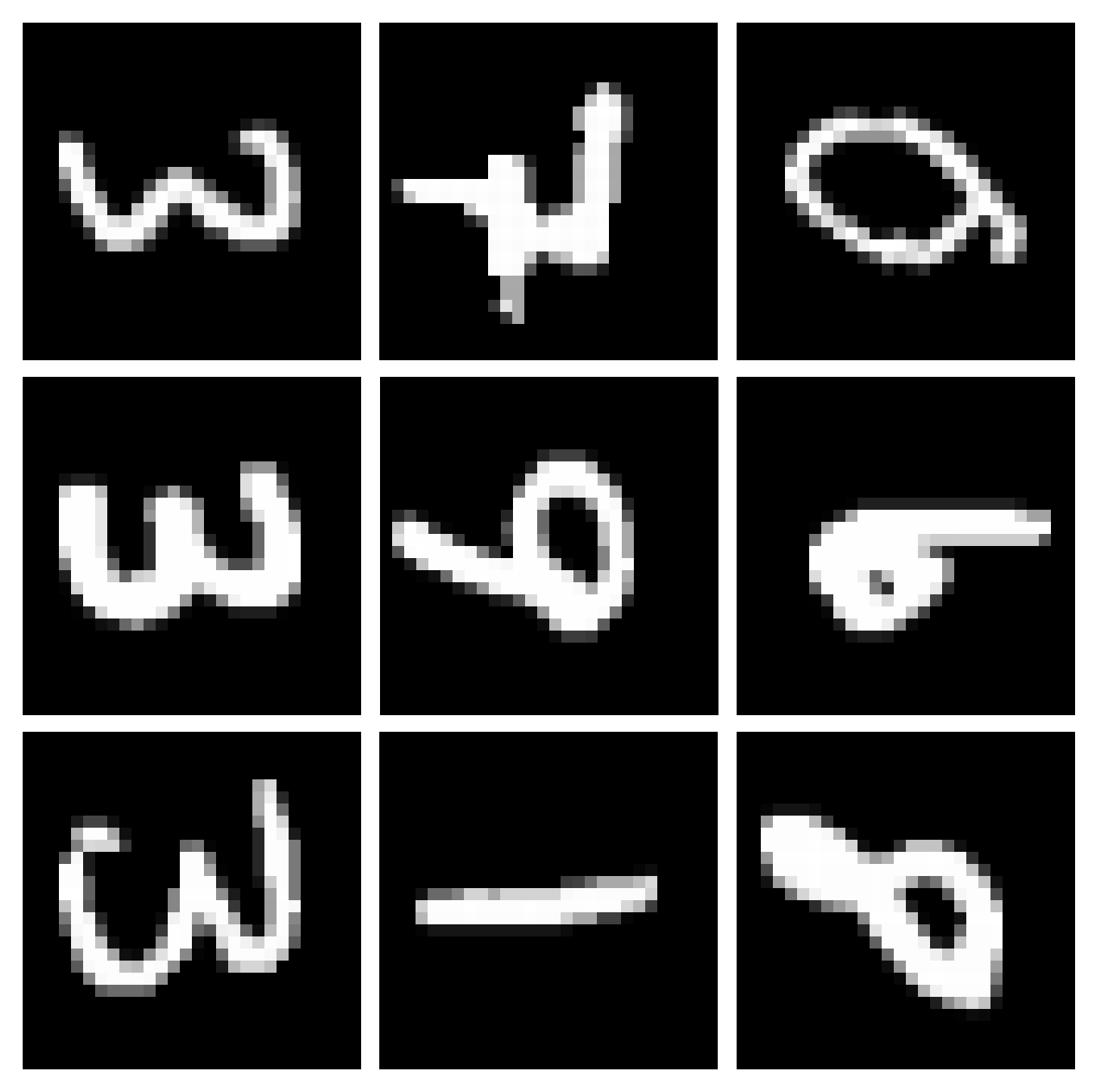}
        \caption{$270^\circ$}
    \end{subfigure}

    \caption{Examples from the four rotations of Rotated MNIST.}
    \label{fig:mnist_examples}
\end{figure}

\subsection{Rotated CIFAR10}
\label{appx:cifar_examples}

\begin{figure}[hbtp]
    \centering

    \begin{subfigure}[b]{0.24\textwidth}
        \centering
        \includegraphics[width=\linewidth]{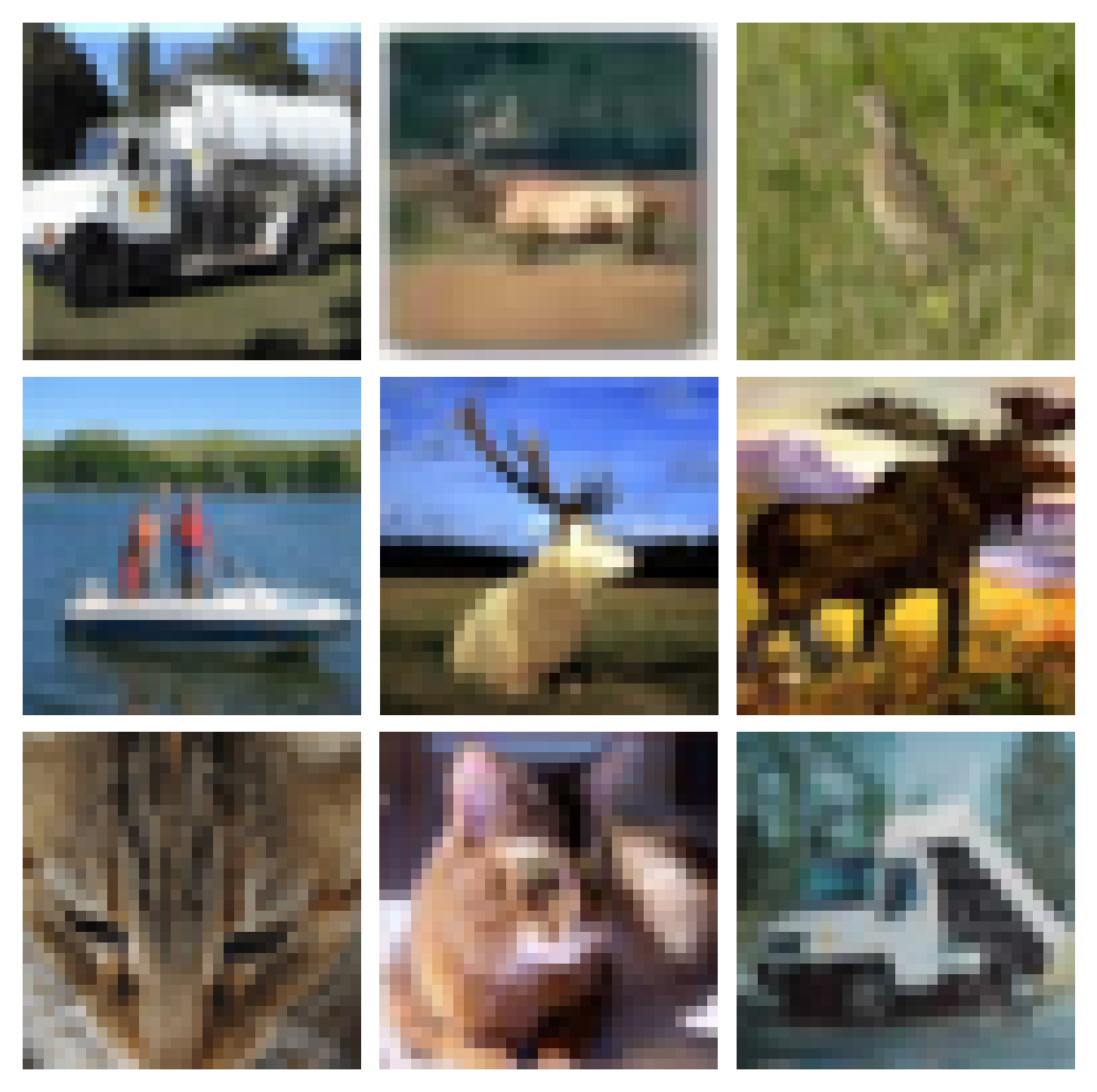}
        \caption{$0^\circ$}
    \end{subfigure}
    \begin{subfigure}[b]{0.24\textwidth}
        \centering
        \includegraphics[width=\linewidth]{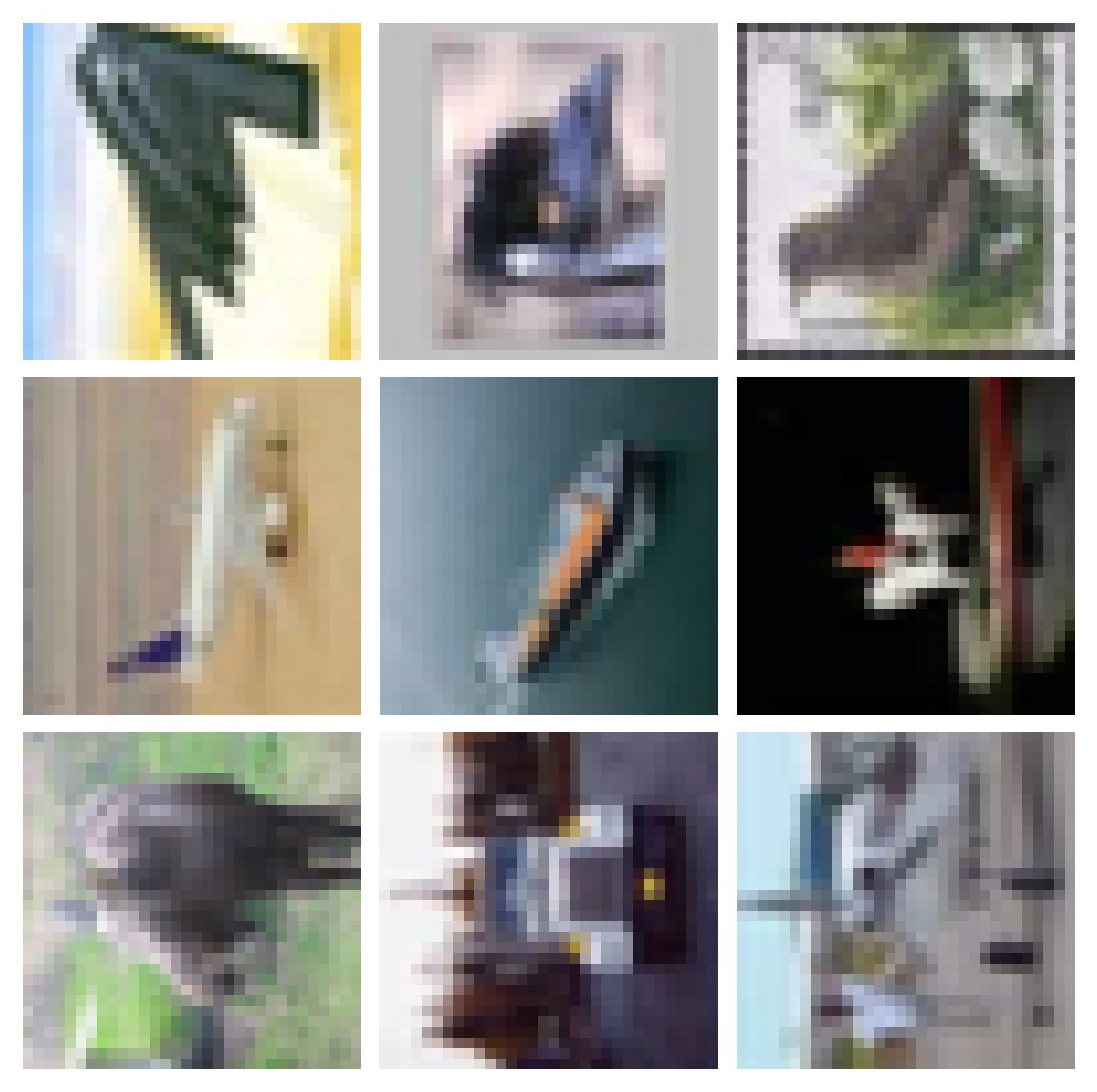}
        \caption{$90^\circ$}
    \end{subfigure}
    \begin{subfigure}[b]{0.24\textwidth}
        \centering
        \includegraphics[width=\linewidth]{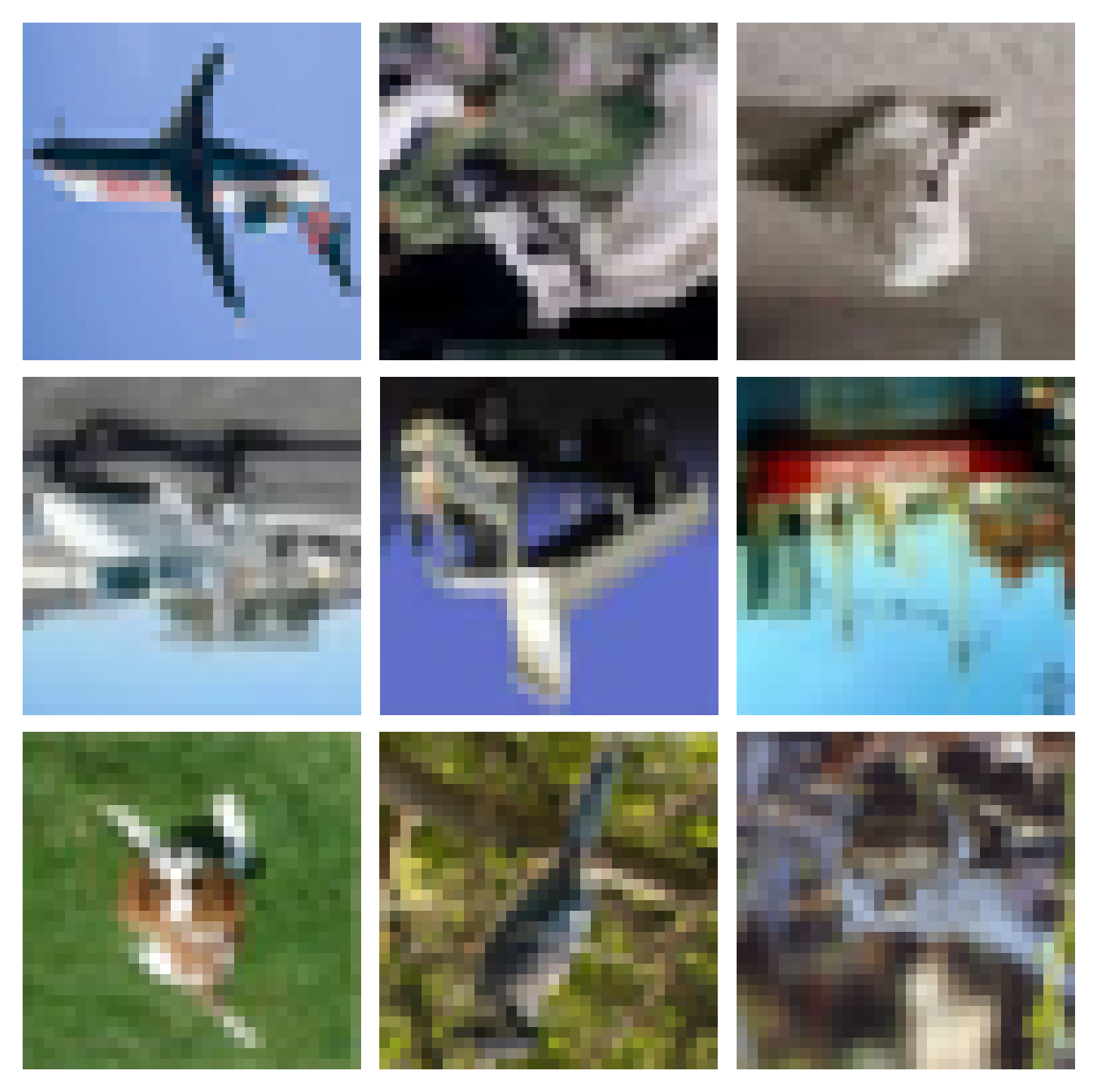}
        \caption{$180^\circ$}
    \end{subfigure}
    \begin{subfigure}[b]{0.24\textwidth}
        \centering
        \includegraphics[width=\linewidth]{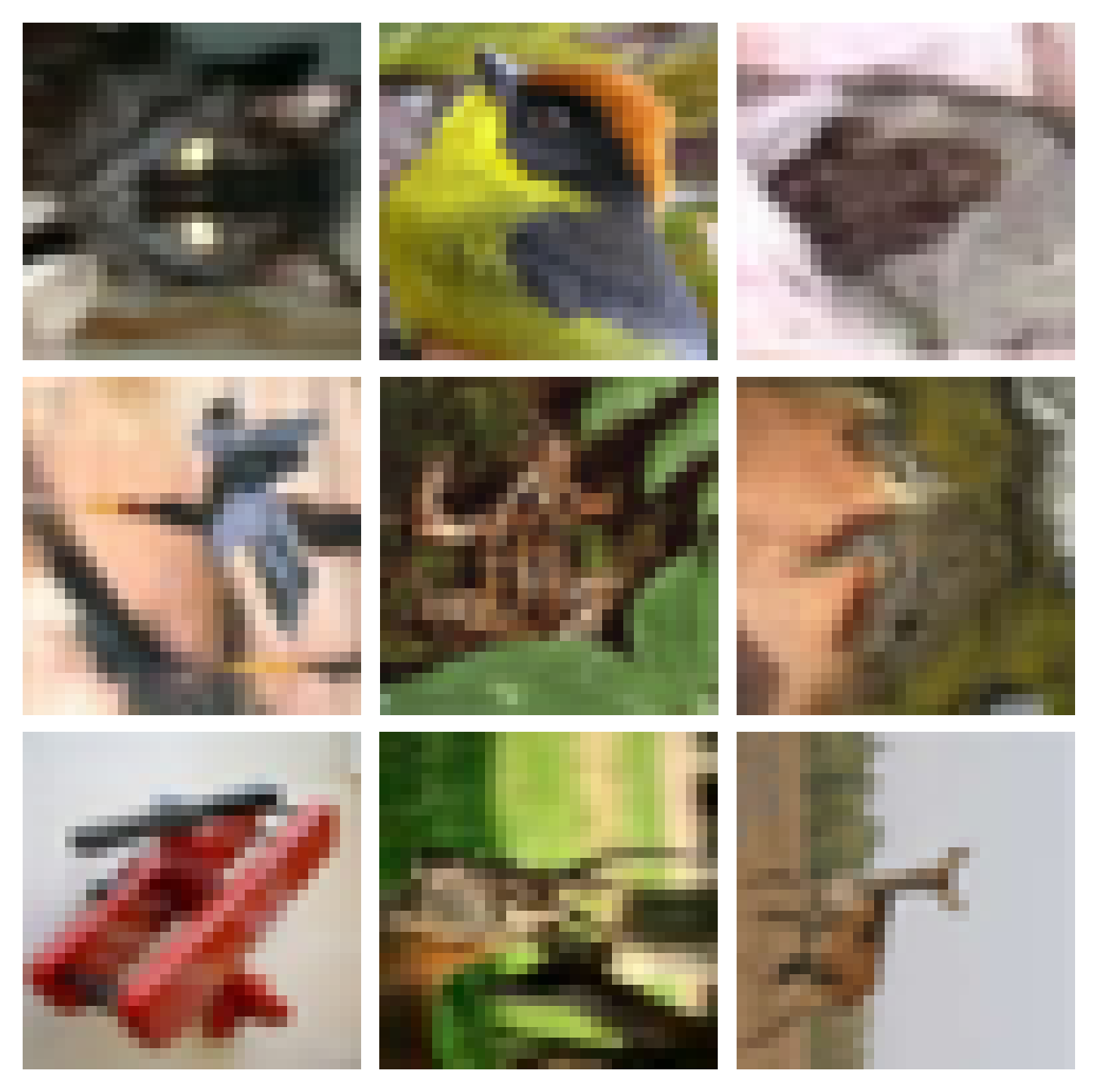}
        \caption{$270^\circ$}
    \end{subfigure}

    \caption{Examples from the four rotations of Rotated CIFAR10.}
    \label{fig:cifar_examples}
\end{figure}

\subsection{PACS}
\label{appx:pacs_examples}

\begin{figure}[!hbtp]
    \centering

    \begin{subfigure}[b]{0.24\textwidth}
        \centering
        \includegraphics[width=\linewidth]{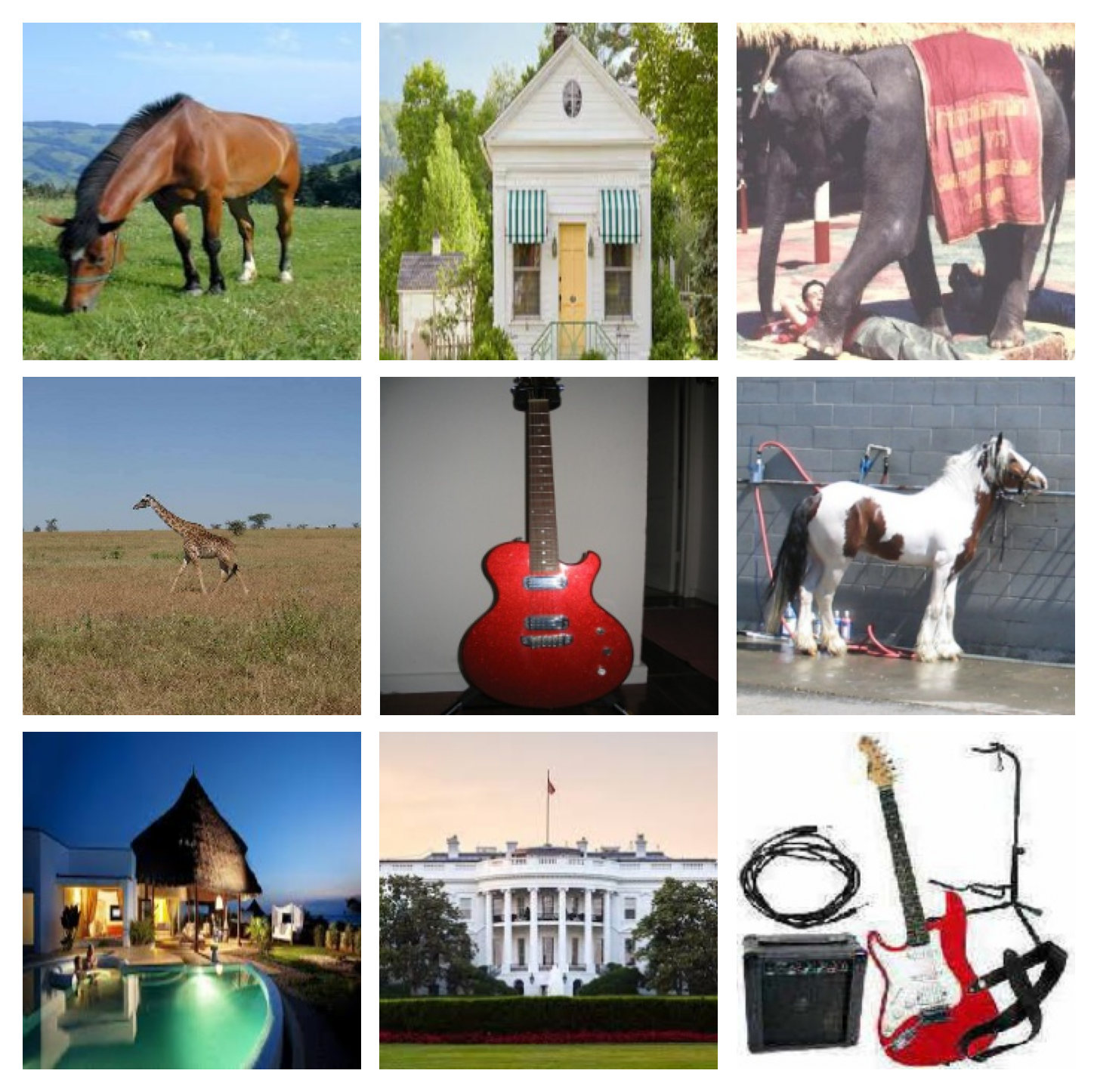}
        \caption{Photos}
    \end{subfigure}
    \begin{subfigure}[b]{0.24\textwidth}
        \centering
        \includegraphics[width=\linewidth]{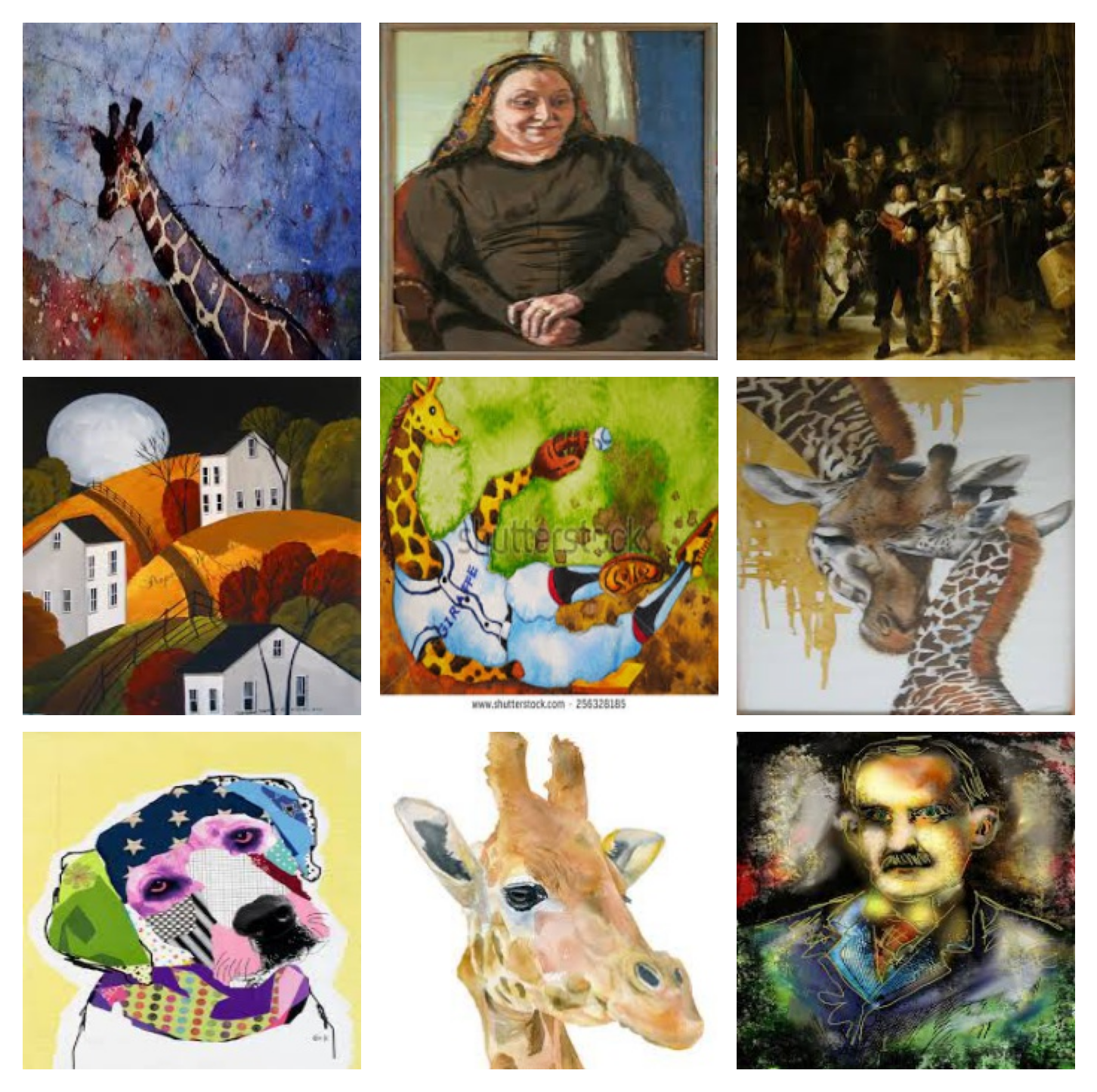}
        \caption{Art}
    \end{subfigure}
    \begin{subfigure}[b]{0.24\textwidth}
        \centering
        \includegraphics[width=\linewidth]{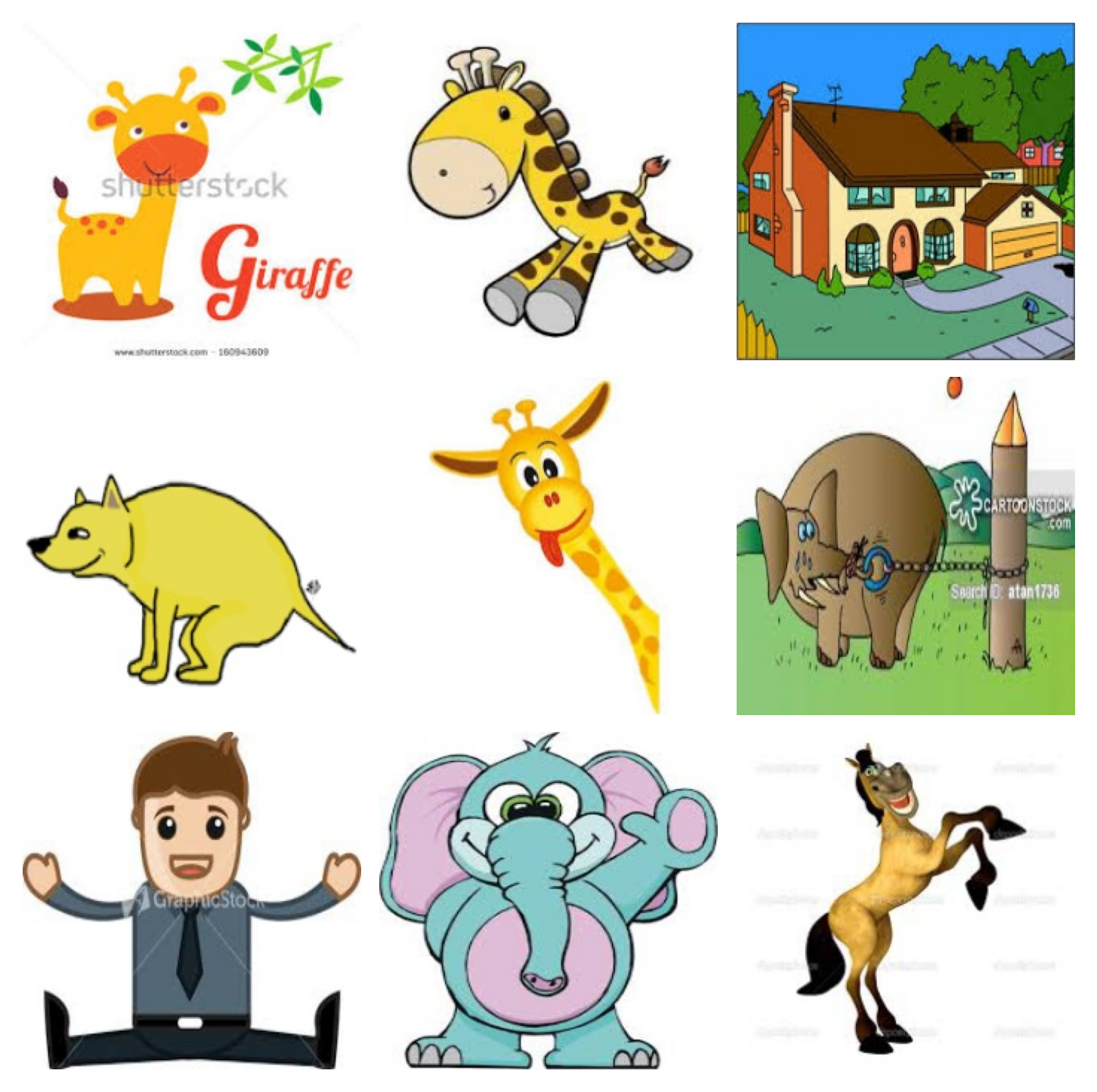}
        \caption{Cartoons}
    \end{subfigure}
    \begin{subfigure}[b]{0.24\textwidth}
        \centering
        \includegraphics[width=\linewidth]{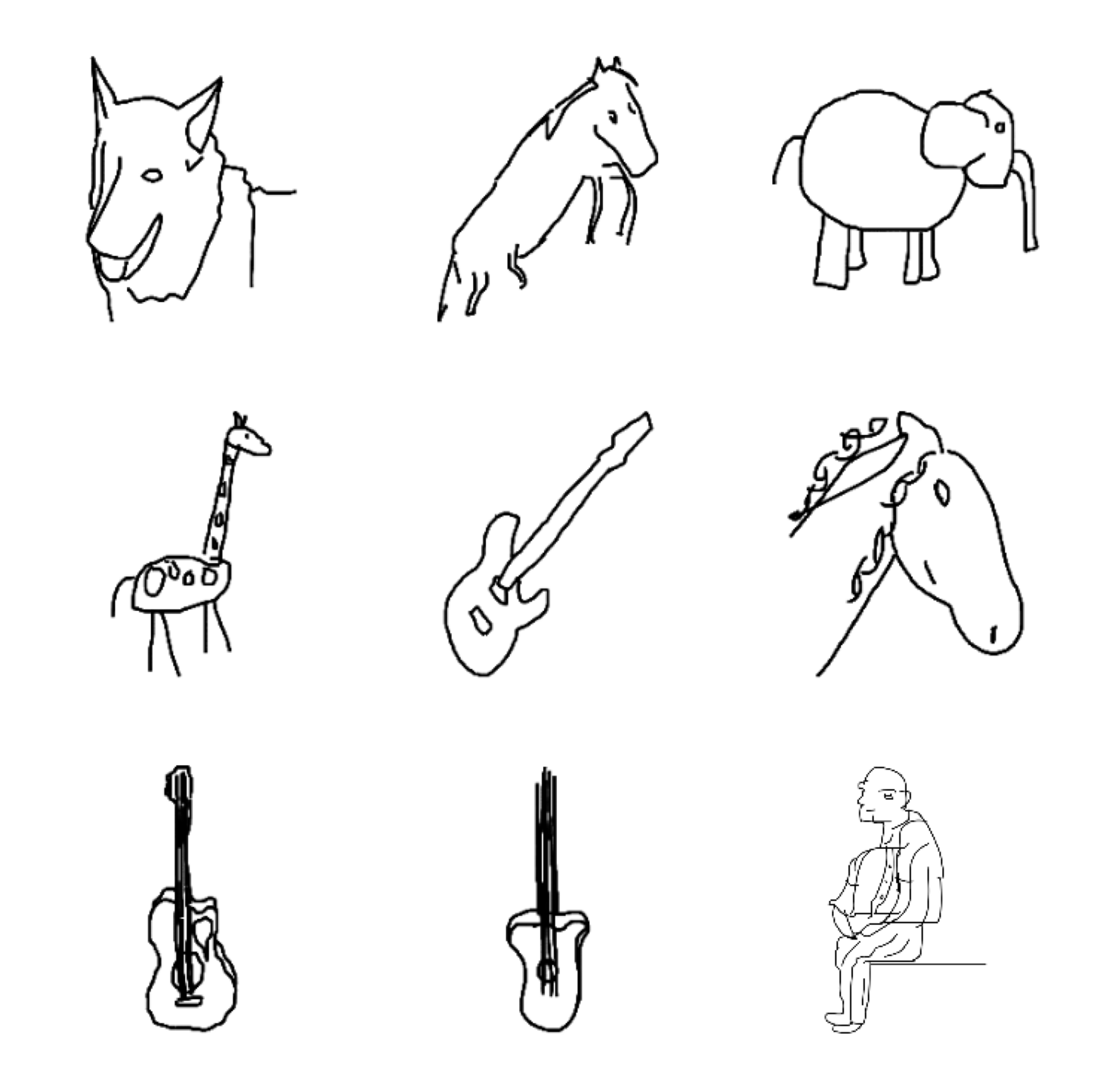}
        \caption{Sketches}
    \end{subfigure}

    \caption{Examples from the four domains of PACS.}
    \label{fig:pacs_examples}
\end{figure}

\subsection{Backdoor MNIST}
\label{appx:cmnist_examples}

Figure~\ref{fig:cmnist} shows uniformly green digits (left), similarly green digits (centre), where the intensity of the green is subtly correlated with the digit, and uniformly purple data (right). There is a visually obvious difference between the green and purple datasets, and there is a subtle difference between the two green datasets. The backdoor feature is the intensity of the greenness, which has a spurious correlation with the label that a model trained on this data might learn. If the backdoor feature is subsequently manipulated, here by varying the greenness, then a model that has learned the backdoor feature could be induced to misclassify. We therefore aim to segregate clients holding backdoored data from clients holding clean data. In our experiments, we test our method's ability to automatically segregate the clients, even when the differences between datasets appear visually subtle. 

\begin{figure}[!hbtp]
    \centering
    \begin{subfigure}[b]{0.3\columnwidth}
        \centering
        \includegraphics[width=\linewidth]{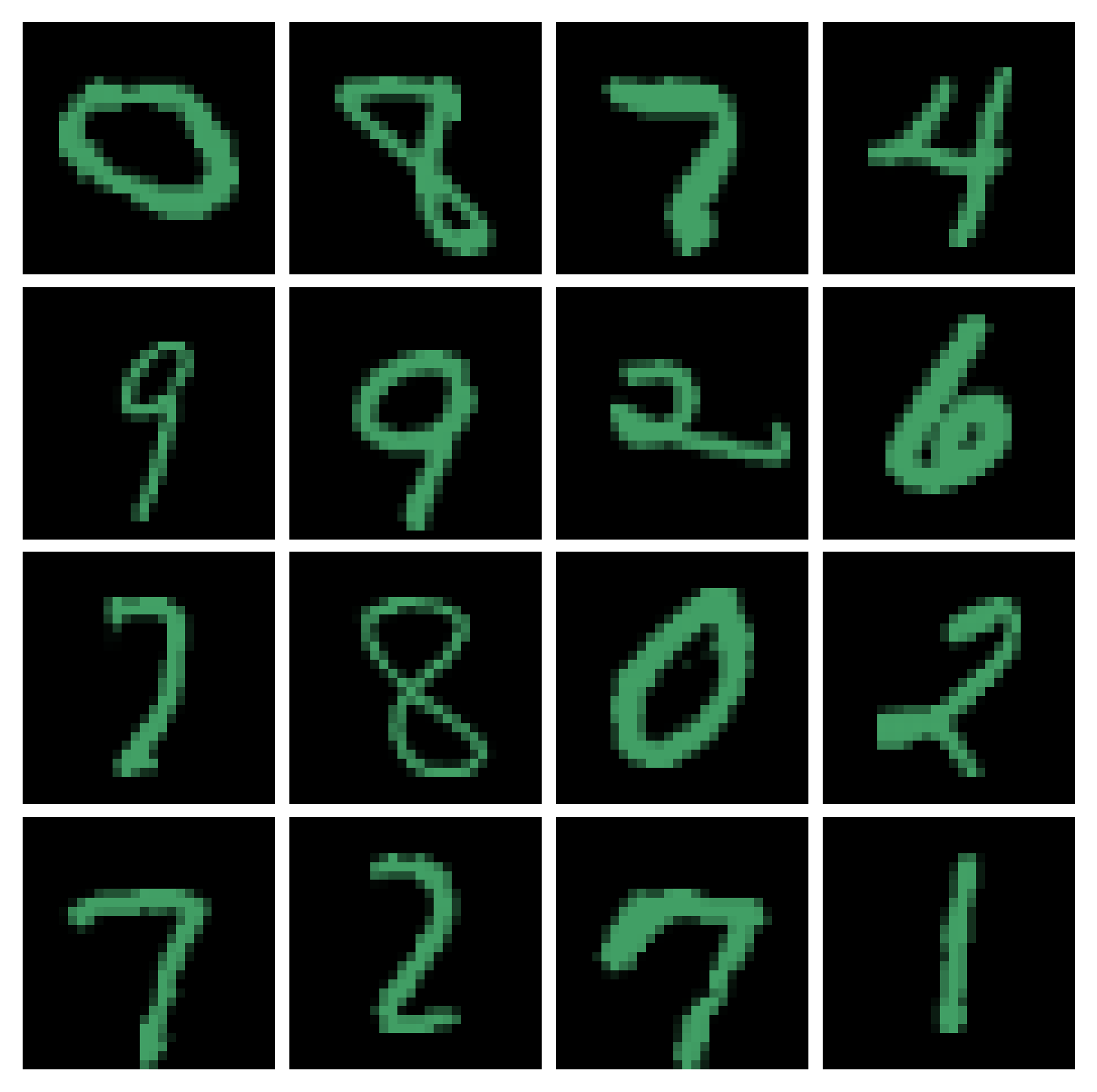}
        \caption{MNIST with uniformly green digits.}
    \end{subfigure}
    \begin{subfigure}[b]{0.3\columnwidth}
        \centering
        \includegraphics[width=\linewidth]{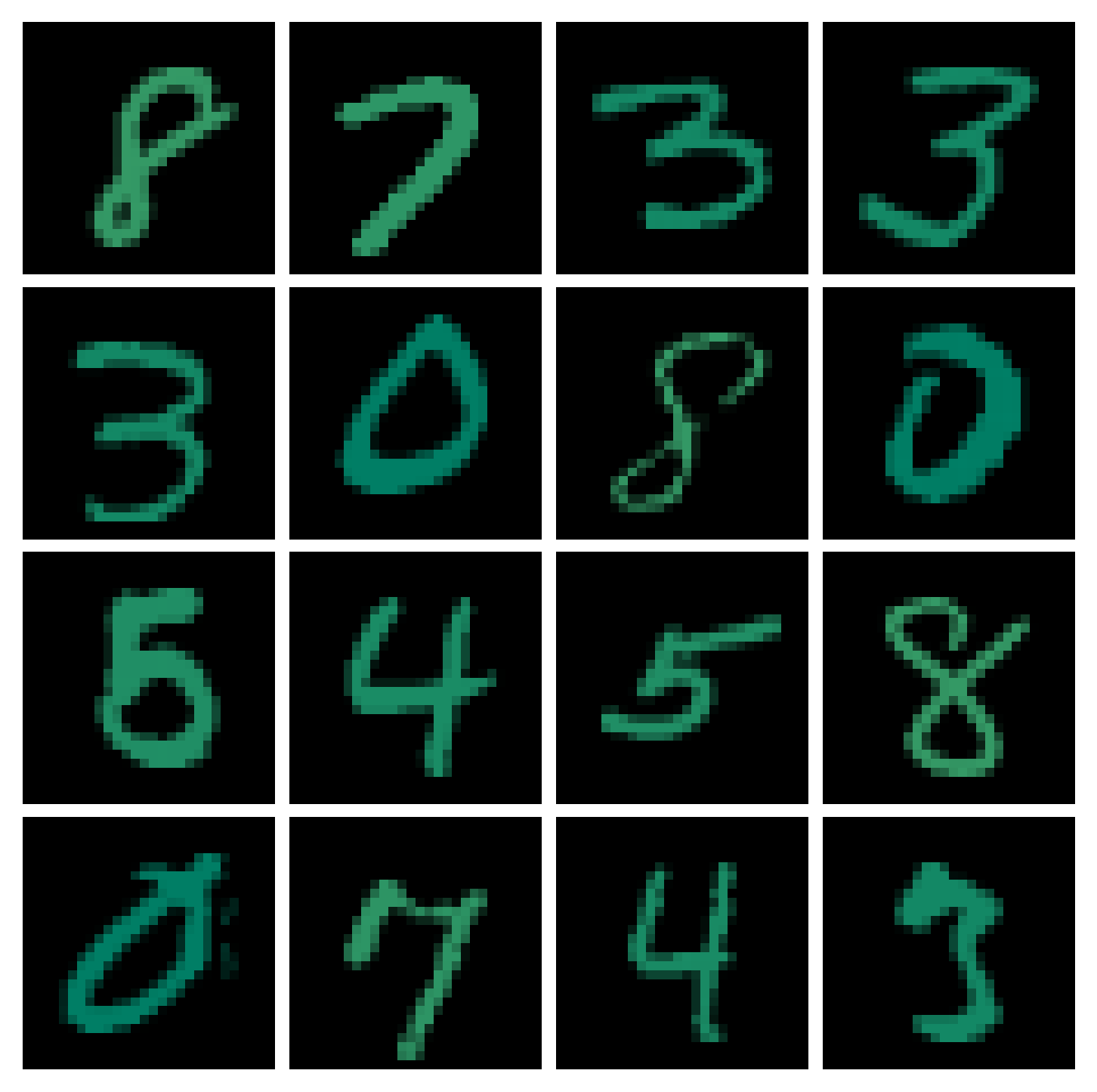}
        \caption{MNIST with varying green digits.}
    \end{subfigure}
    \begin{subfigure}[b]{0.3\columnwidth}
        \centering
        \includegraphics[width=\linewidth]{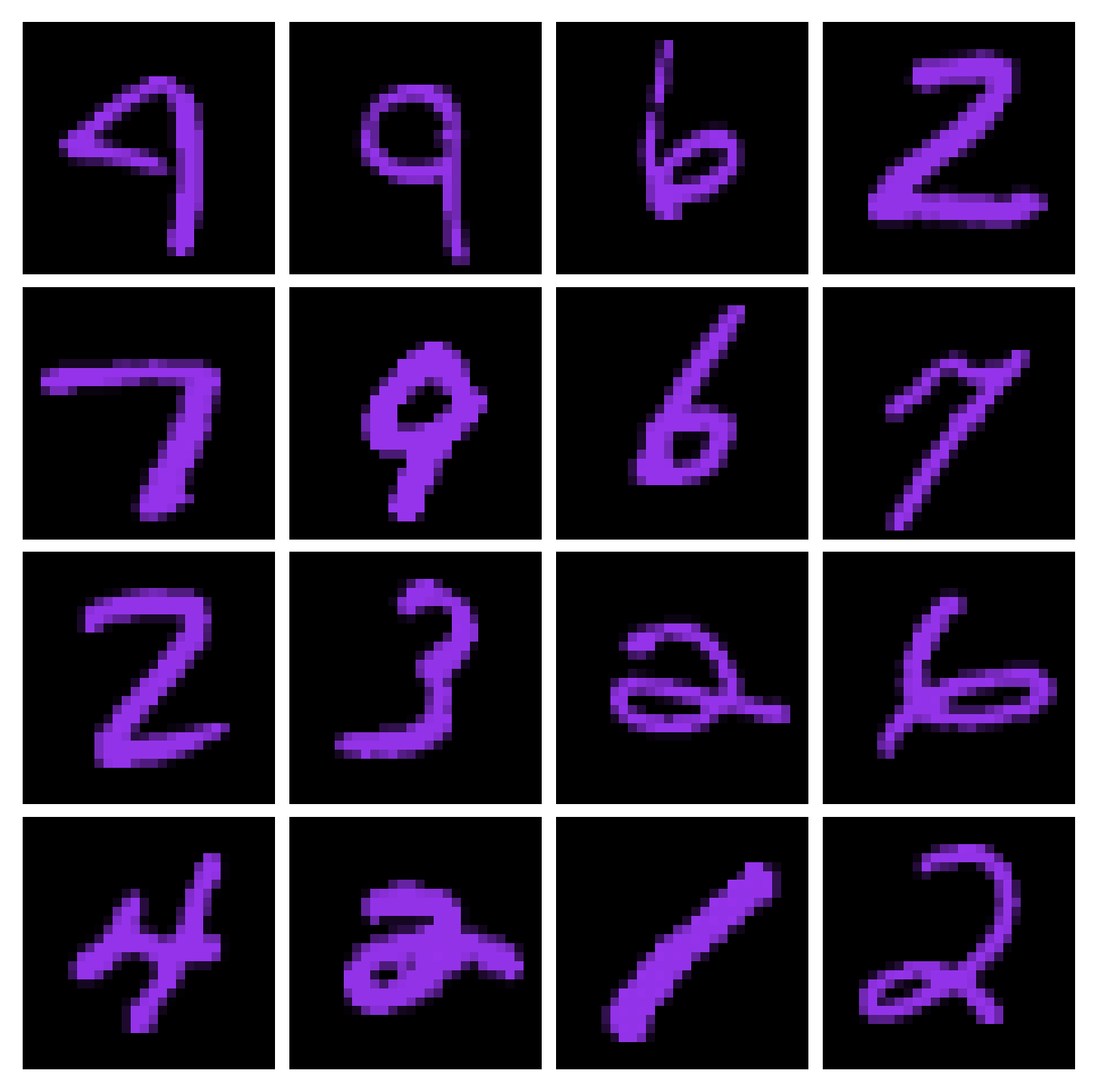}
        \caption{MNIST with uniformly purple digits.}
    \end{subfigure}
    \caption{The first set of clients have uniformly green digits. These clients are targeted by the second set of clients, whose data contains a backdoor feature. The backdoor is the intensity of the green which is correlated with the digit. The first two sets of clients have visually similar data. The third set of clients has uniformly purple digits, which are visually dissimilar from the first two.}
    \label{fig:cmnist}
\end{figure}

\subsection{Backdoor CIFAR10}
\label{appx:cifarls_examples}

Figure~\ref{fig:cifarls} shows a similar scenario with images from the first five classes (left), images from the first five classes but with a small colour patch in the corner (centre), and images from the last five classes. The backdoor feature is the colour patch, where the colour is correlated with the class.

\begin{figure}[!hbtp]
    \centering
    \begin{subfigure}[b]{0.3\columnwidth}
        \centering
        \includegraphics[width=\linewidth]{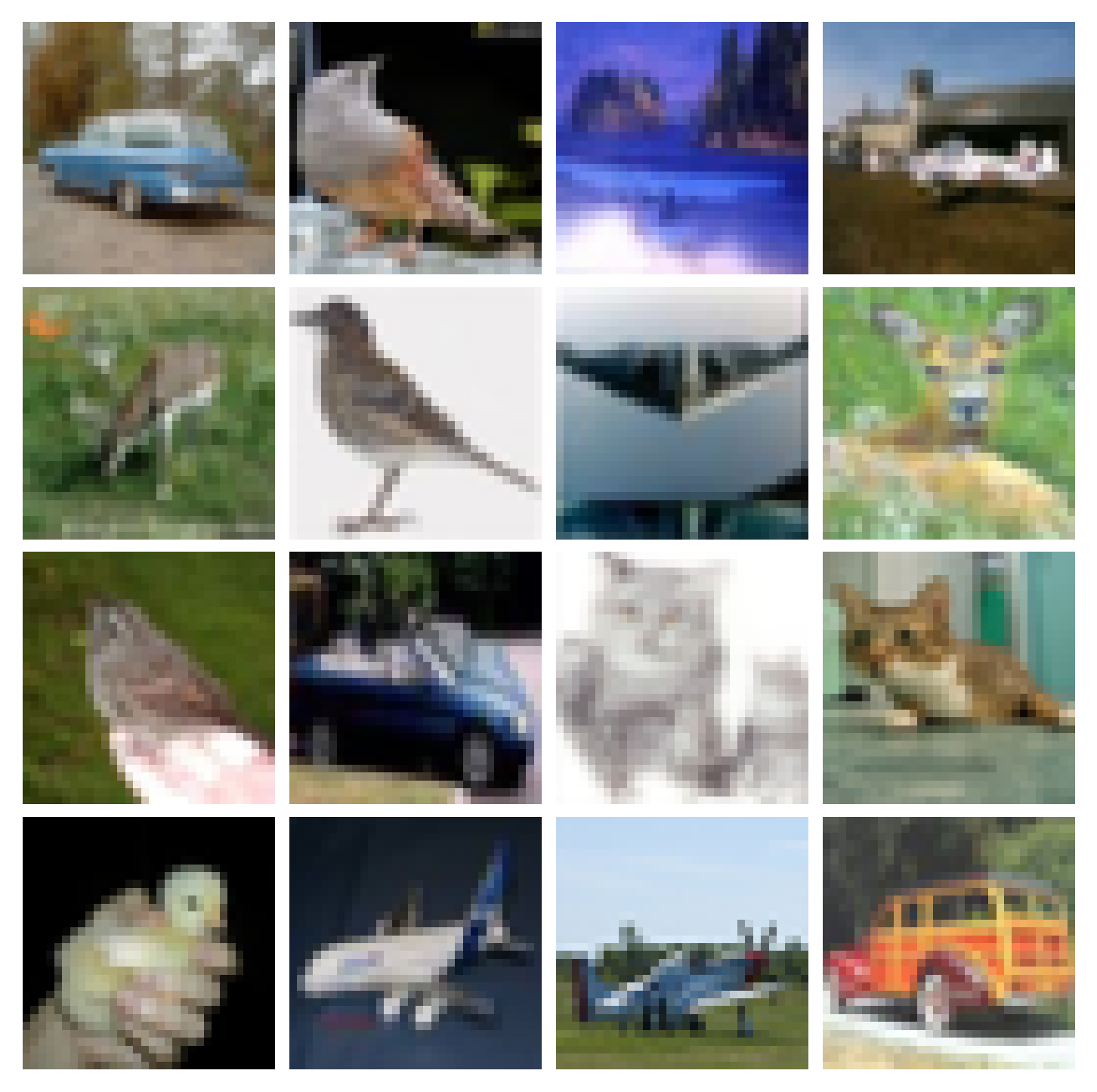}
        \caption{First five classes of Rotated CIFAR10.}
    \end{subfigure}
    \begin{subfigure}[b]{0.3\columnwidth}
        \centering
        \includegraphics[width=\linewidth]{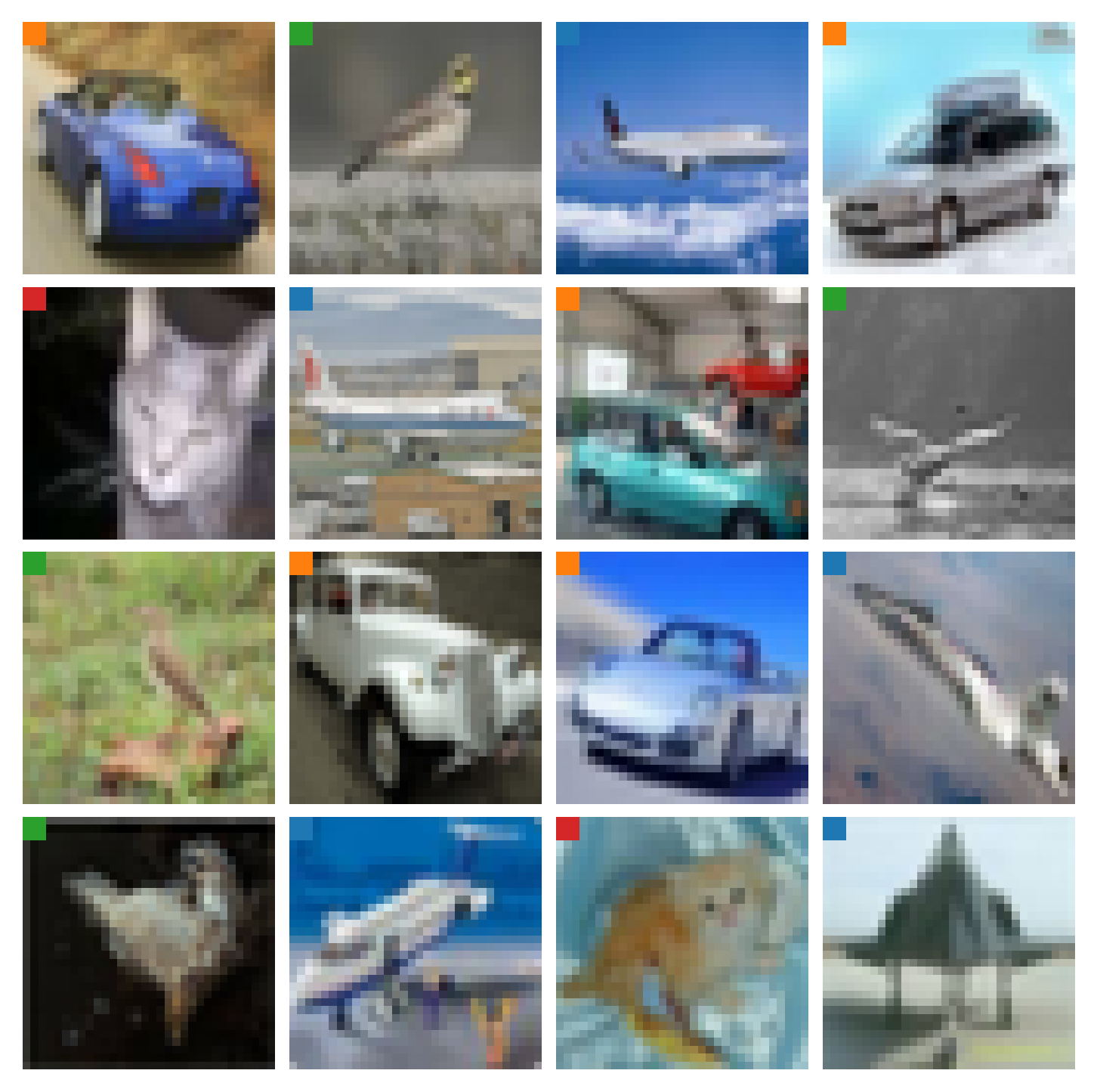}
        \caption{First five classes with colour patch in top left corner.}
    \end{subfigure}
    \begin{subfigure}[b]{0.3\columnwidth}
        \centering
        \includegraphics[width=\linewidth]{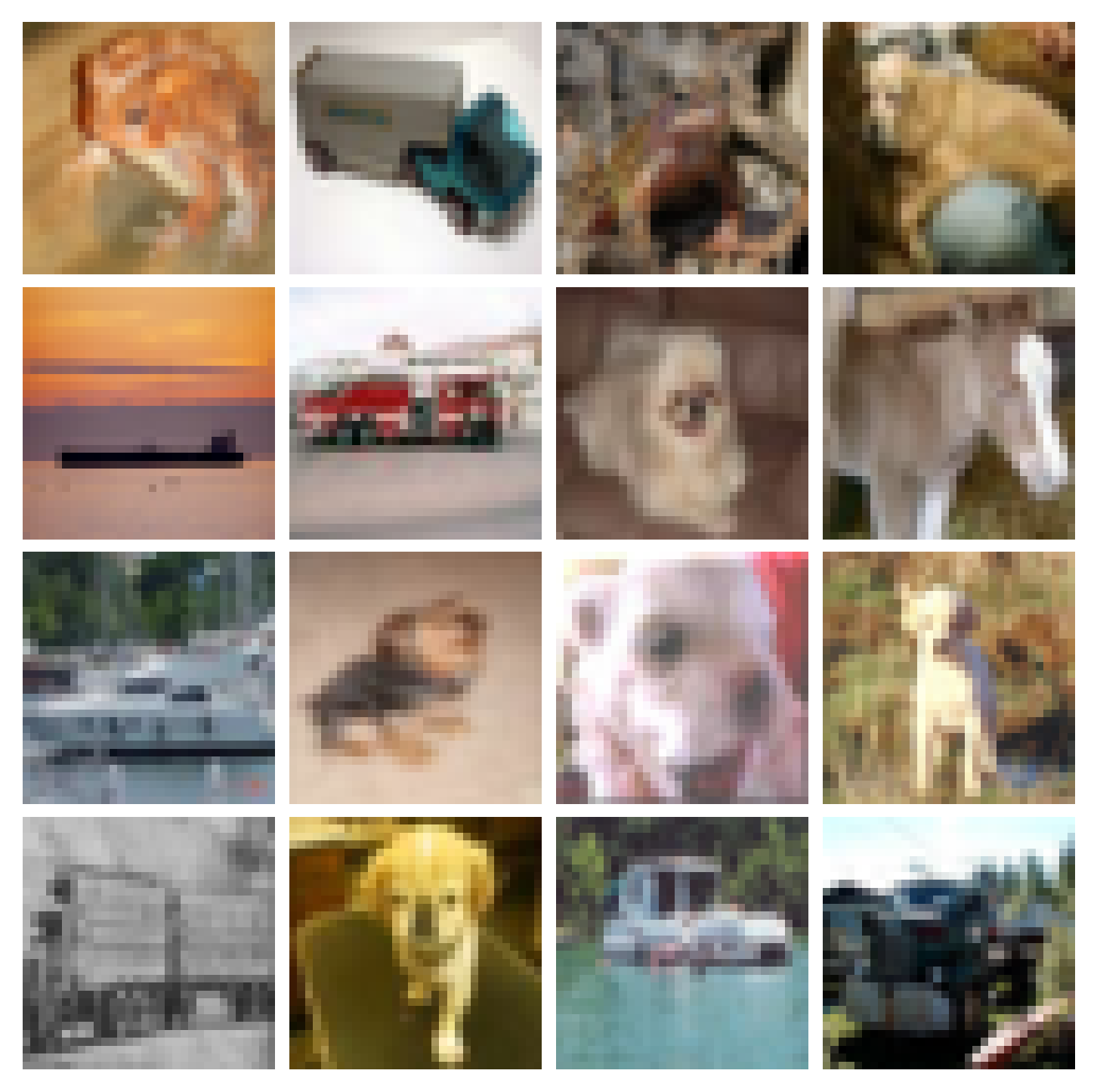}
        \caption{Last five classes of Rotated CIFAR10.}
    \end{subfigure}
    \caption{The first set of clients have images from the first five classes of CIFAR10. These clients are targeted by the second set of clients, who have the same images but also contain a backdoor feature. The backdoor is the small colour patch in the top left, which is correlated with the digit. The first two sets of clients have visually similar data from the same classes. The third set of clients has images from the last five classes of CIFAR10, which are visually dissimilar from the first two.}
    \label{fig:cifarls}
\end{figure}

\newpage
\section{Additional Training Details}
\subsection{Wall Time Comparison}
\label{appx:timing}
We measure the wall time of training using our method over $T=10$ global epochs with $E=10$ local epochs each. For comparability, we use the Oracle as baseline, which has access to the ground-truth clustering and therefore only differs from our method in skipping the clustering step.

\begin{table}[ht]
\centering
\caption{EMD-CFL achieves Oracle clustering with only a 9\% increase in training wall time.}
\label{tab:timing}
\begin{tabular}{lc}
\toprule
\textbf{Method} & \textbf{Hours} \\
\midrule
Oracle & 7.47 \\
EMD-CFL & 8.11 \\
\bottomrule
\end{tabular}
\end{table}

\subsection{Model and Training Details}
\label{appx:hparams}
We use SGD as optimiser with momentum of 0.9 and weight decay of $10^{-6}$. 
We use a simple two-layer CNN (two convolutional layers of size 64 and 128 followed by two fully connected layers with a hidden dimension of 128) on all MNIST experiments. We use ResNet18 for all CIFAR10 experiments and PACS.
We use ResNet50, EfficientNetB3, ViTB16 and ConvNeXt Base for additional experiments on CIFAR10. We provide an overview of native embedding dimensions (before random projection) and state all learning rates in Table~\ref{tab:model_details}. We implemented all experiments in PyTorch 2.5 and trained on an HPC cluster with NVIDIA L40S GPUs. We use Python Optimal Transport 0.9.5 \citep{flamary2021pot} for the EMD calculations.

\begin{table}[ht]
\centering
\caption{Native embedding dimensions and learning rates used for training.}
\label{tab:model_details}
\begin{tabular}{lcc}
\toprule
\textbf{Model} & \textbf{dim}\boldmath{$(Z)$} & \boldmath{$\kappa$} \\
\midrule
Simple CNN &  128 & 0.01 \\
ResNet18 on CIFAR10 &  512 & 0.01 \\
ResNet18 on PACS &  512 & 0.001 \\
ResNet50 &  2048 & 0.01 \\
EfficientNetB3 &  1536 & 0.01 \\
ViTB16 & 768 & 0.003 \\
ConvNeXt Base & 1024 & 0.003 \\
\bottomrule
\end{tabular}
\end{table}

\subsection{Hyperparameters for Baselines}
\label{appx:baseline_hparams}
For the main experiments, we provide the ground-truth number of clusters to IFCA, FlexCFL, FeSEM, CFLGP, FedEM, FedSoft, FedRC and FedCE. CFL recursively partitions clients and requires parameters that control the starting and stopping of the partitioning. We use the hyperparameters specified by \citet{sattler2020clustered} for this purpose. PACFL and FedClust require a threshold similar to our $\epsilon$ which we tune to find the closest to the optimal split in the first epoch. For PACFL, we used an $\epsilon$ of 12 for Rotated MNIST and CIFAR10 experiments, 11 for PACS,
2.5 for Backdoor MNIST and 10.5 for Backdoor CIFAR10. For FedClust, we used an $\epsilon$ of 5 for Rotated MNIST, 45 for Rotated CIFAR10, 8 for PACS,
4 for Backdoor MNIST and 28 for Backdoor CIFAR10.

\newpage
\section{Additional Experimental Results}
\label{appx:extra_results}
Appendix~\ref{appx:hparams_flex} shows EMD-CFL is robust to hyperparameter choices. Appendix~\ref{appx:sinkhorn} shows these findings hold under approximate Sinkhorn distances. Appendix~\ref{appx:architecture_flex} shows that EMD-CFL works well for larger model architectures. Appendix~\ref{appx:partial} shows EMD-CFL is robust to partial participation of clients, matching the Oracle in each case. Appendix~\ref{appx:full_pacs} reports the full results for PACS, showing that soft clustering underperforms and that EMD-CFL is robust to hyperparameter choices. Appendix~\ref{appx:backdoor} reports the full results on the backdoor experiments, showing that soft clustering underperforms and that EMD-CFL is robust to hyperparameter choices. Appendix~\ref{appx:failure_analysis} qualitatively compares EMDs with distances used by other methods and suggests that EMDs reflect the underlying clustering structure more clearly. Appendix~\ref{appx:cluster_total_param_distances} provides evidence that EMD-CFL identifies clusters with smaller average parameter distances, as suggested by theoretical results. Finally, Appendix~\ref{appx:induced} shows EMD-CFL performs well on additional versions of Rotated MNIST and Rotated CIFAR10.

\subsection{Robustness to Hyperparameter Choices}
\label{appx:hparams_flex}

We study the clustering performance under different choices of $\epsilon$ and $\text{dim}(R)$. We show in Figure~\ref{fig:hparams} that our method remains stable on a wide range of $\epsilon$-values as well as under large dimensionality reductions of up to 90\% for both Rotated MNIST and Rotated CIFAR10.

\begin{figure}[ht]
    \centering
    \includegraphics[width=0.8\textwidth]{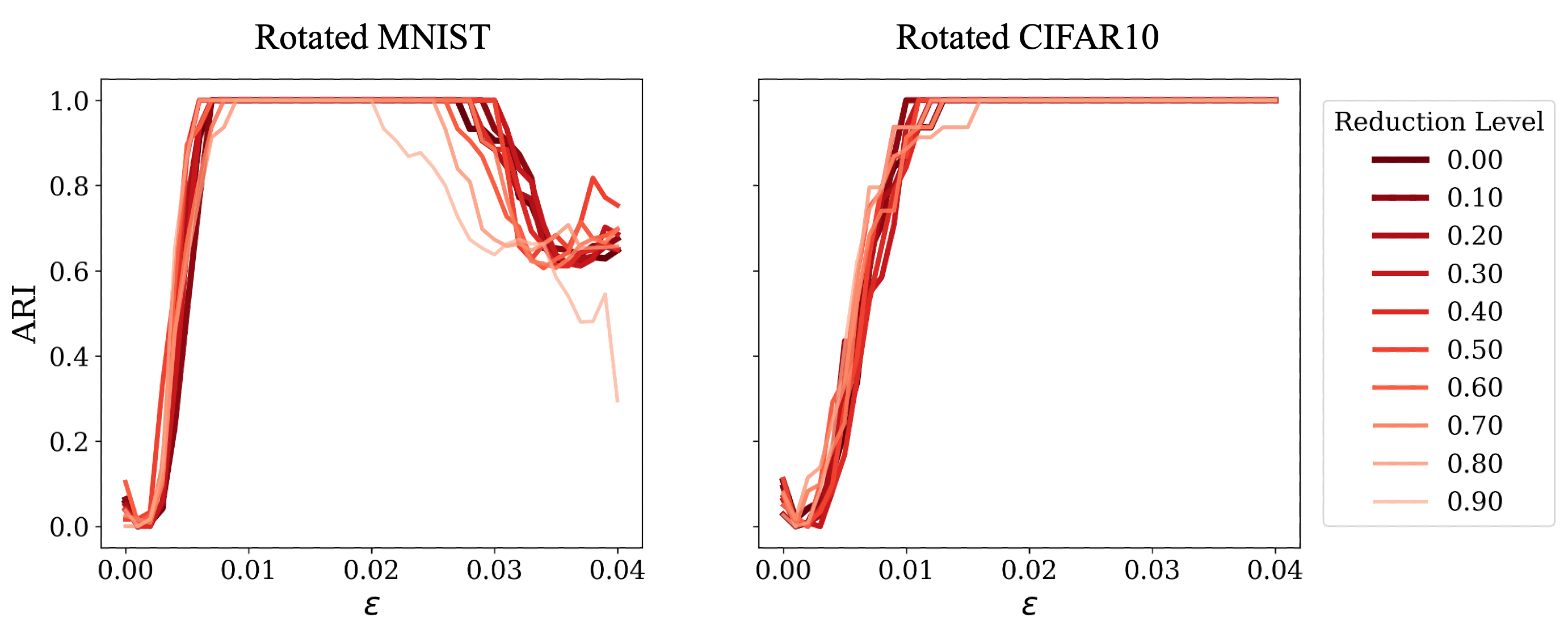}
    \caption{A wide range for $\epsilon$ and dimensionality reduction levels result in correct clustering on both Rotated MNIST and CIFAR10.}
    \label{fig:hparams}
\end{figure}

\subsection{Robustness to Approximate EMDs}
\label{appx:sinkhorn}
The Sinkhorn distance adds an entropic regularisation term to the EMD, which allows for faster computation at the cost of only obtaining an approximation of the EMD \citep{cuturi2013sinkhorn}. We use the solver from Python Optimal Transport \citep{flamary2021pot} with a regularisation strength of 0.1 to show that our method continues to obtain the correct clustering (Figure~\ref{fig:sinkhorn_hparams}).

\begin{figure}[ht]
    \centering
    \includegraphics[width=0.8\textwidth]{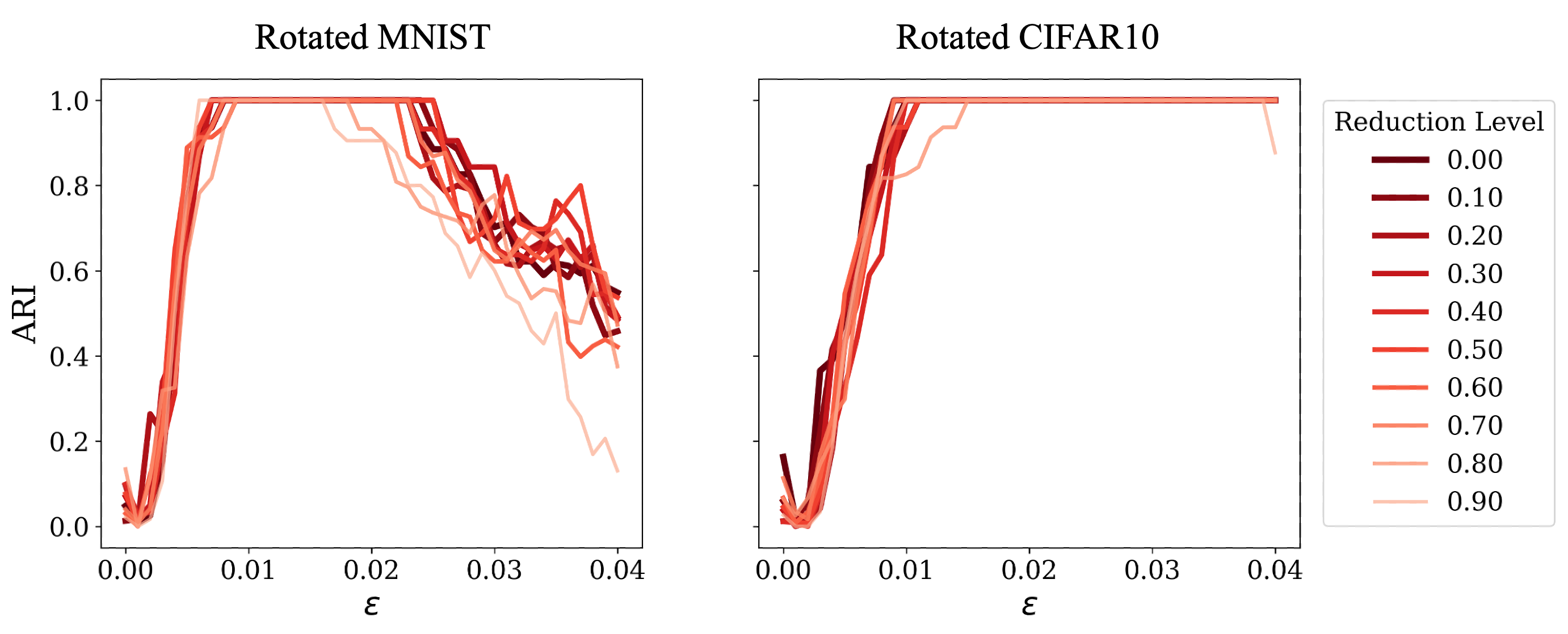}
    \caption{A wide range for $\epsilon$ and dimensionality reduction levels result in correct clustering under Sinkhorn distances on both Rotated MNIST and CIFAR10.}
    \label{fig:sinkhorn_hparams}
\end{figure}

\subsection{Robustness to Model Architectures}
\label{appx:architecture_flex}

The choice of $\epsilon$ depends on the embedding model. We demonstrate that our method is robust to model architecture choices and report the clustering performance on the challenging Rotated CIFAR10 dataset using a range of larger model architectures, both using exact and approximate Sinkhorn distances.(Figure~\ref{fig:hparams_flex}). We visualise the EMDs between clients in Figure~\ref{fig:model_ablation}, which qualitatively shows the clustering. Interestingly, the first set of clients (with unrotated images) consider all other clients to be much farther, as indicated by the darker red in the first rows. We hypothesise that this may be due to an inductive bias from the pre-training of these models, which typically does not make use of rotations for data augmentation, so that the first set of clients will be unfamiliar with rotated images, while clients with rotated data will have seen unrotated images during pre-training.

\begin{figure}[ht]
    \centering
    \begin{subfigure}[b]{0.48\linewidth}
        \centering
        \includegraphics[width=\linewidth]{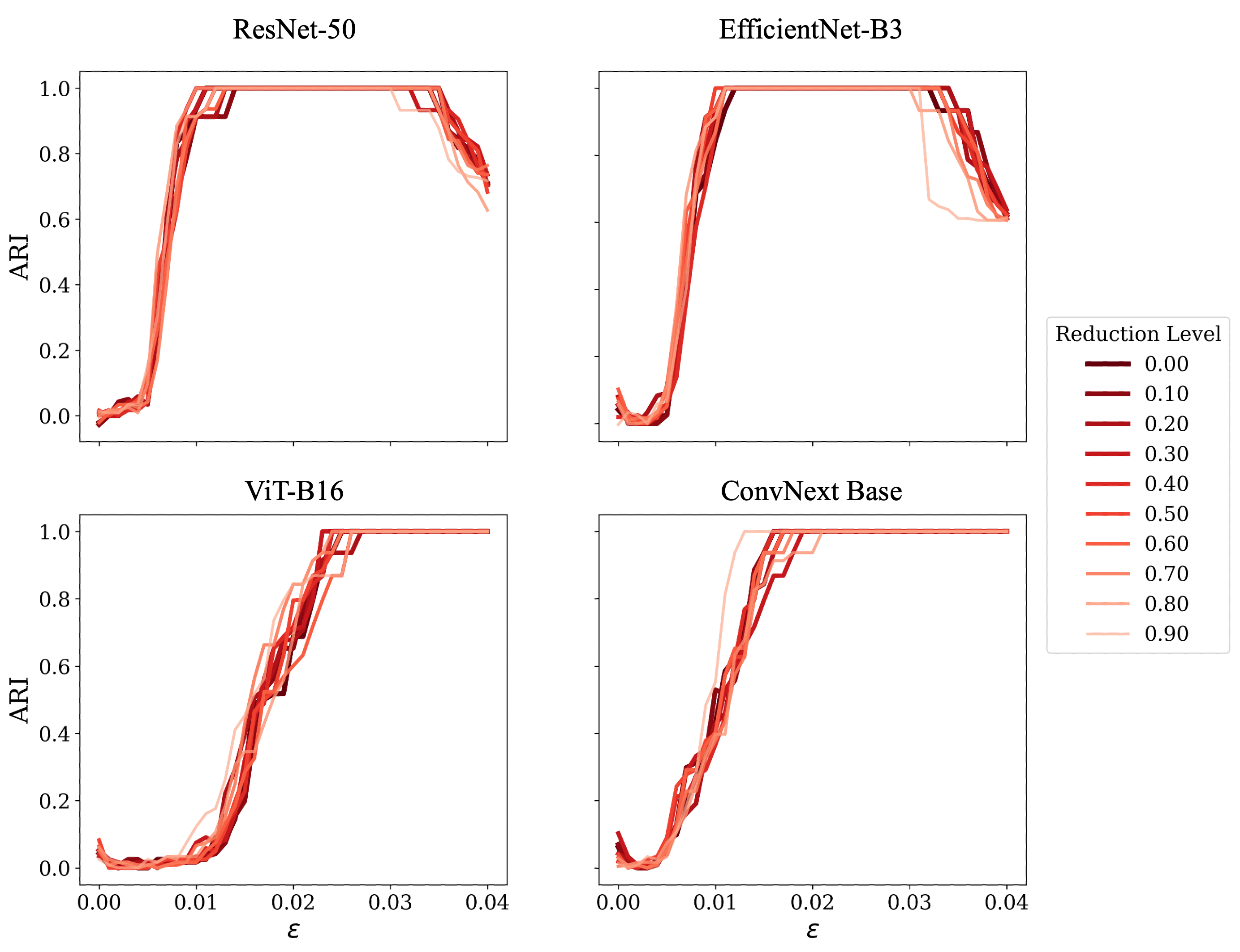}
        \caption{Exact}
    \end{subfigure}
    \hfill
    \begin{subfigure}[b]{0.48\linewidth}
        \centering
        \includegraphics[width=\linewidth]{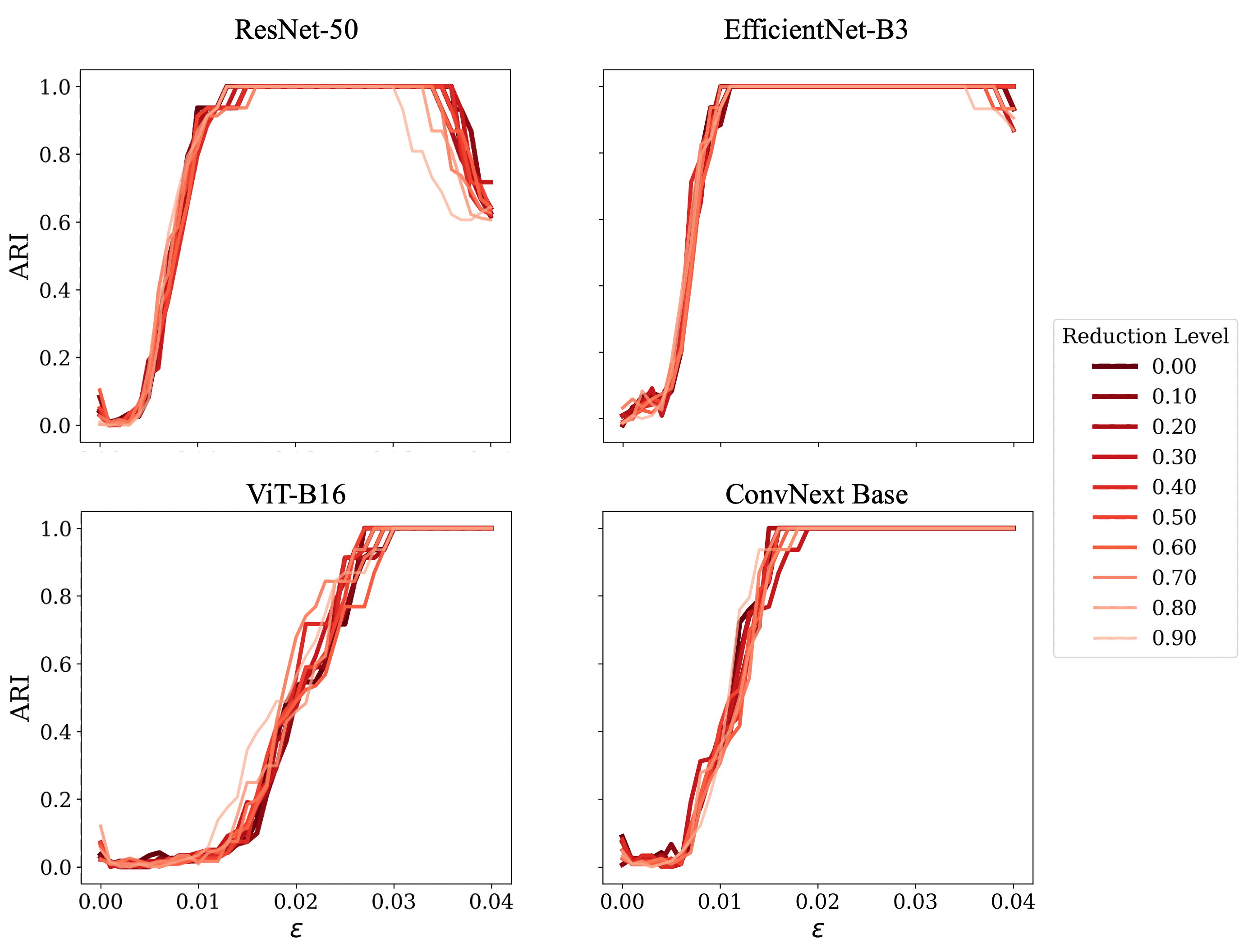}
        \caption{Sinkhorn}
    \end{subfigure}
    \caption{EMD-CFL achieves stable clustering for a range of larger model architectures and using approximate Sinkhorn distances.}
    \label{fig:hparams_flex}
\end{figure}

\begin{figure}[ht]
    \centering
    \includegraphics[width=0.8\textwidth]{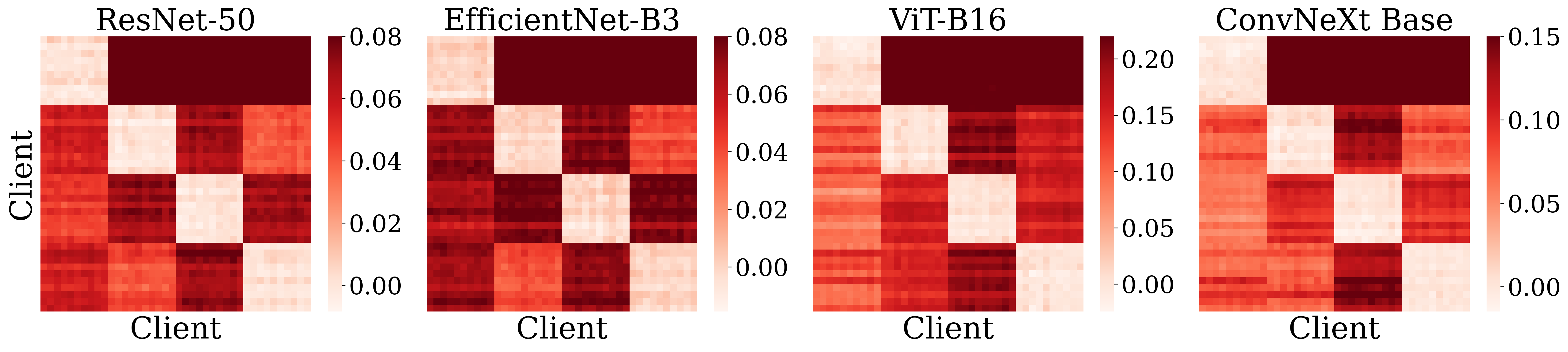}
    \caption{Pairwise EMDs for different model architectures on Rotated CIFAR10 illustrate a clustering structure amongst clients.}
    \label{fig:model_ablation}
\end{figure}

\subsection{Robustness to Partial Participation}
\label{appx:partial}
We consider the same setup as in the main paper but randomly sample $\{10,20,30\}$ clients for participation in each epoch. Table~\ref{tab:pp} shows that our method matches the Oracle for both Rotated MNIST and Rotated CIFAR10.

\begin{table}[htbp]
    \centering
    \caption{Results on Rotated MNIST and Rotated CIFAR10 under varying levels of partial participation.}
    \label{tab:pp}
    \resizebox{\textwidth}{!}{%
    \begin{tabular}{llcccccc}
    \toprule
    \textbf{Dataset} & \textbf{Method} & \multicolumn{2}{c}{\textbf{10}} & \multicolumn{2}{c}{\textbf{20}} & \multicolumn{2}{c}{\textbf{30}} \\
    \cmidrule(lr){3-4} \cmidrule(lr){5-6} \cmidrule(lr){7-8}
    & & \textbf{Acc} & \textbf{ARI} & \textbf{Acc} & \textbf{ARI} & \textbf{Acc} & \textbf{ARI} \\
    \midrule
    \multirow{2}{*}{Rotated MNIST}
        & Oracle   & 93.62$\pm$2.12 & 0.93$\pm$0.03 & 98.54$\pm$0.09 & 1.00$\pm$0.00 & 98.75$\pm$0.03 & 1.00$\pm$0.00 \\
        & EMD-CFL  & 93.70$\pm$2.11 & 0.93$\pm$0.03 & 98.52$\pm$0.03 & 1.00$\pm$0.00 & 98.74$\pm$0.04 & 1.00$\pm$0.00 \\
    \midrule
    \multirow{2}{*}{Rotated CIFAR10}
        & Oracle   & 88.15$\pm$2.00 & 0.93$\pm$0.03 & 93.88$\pm$0.06 & 1.00$\pm$0.00 & 94.47$\pm$0.11 & 1.00$\pm$0.00 \\
        & EMD-CFL  & 88.18$\pm$1.93 & 0.93$\pm$0.03 & 93.86$\pm$0.06 & 1.00$\pm$0.00 & 94.55$\pm$0.04 & 1.00$\pm$0.00 \\
    \bottomrule
    \end{tabular}
    }
\end{table}

\subsection{Full Results on PACS}
\label{appx:full_pacs}
Table~\ref{tab:full_pacs} reports the full set of results on PACS, showing in particular that soft clustering methods underperform hard clustering methods. Figure~\ref{fig:hparams_pacs_flex} shows that EMD-CFL achieves high clustering performance for a range of $\epsilon$-values and dimensionality reduction levels of up to 50\% on PACS, including using approximate Sinkhorn distances. These results demonstrate the stability of the clustering on this naturally clustered dataset.

\begin{table}[ht]
\centering
\caption{Several methods recover the four domains of PACS as separate clusters (bold). Our method does so in the first epoch and matches the Oracle performance.}
\label{tab:full_pacs}
\resizebox{0.5\textwidth}{!}{
\begin{tabular}{lccc}
\toprule
\textbf{Method} & \textbf{Avg Acc} & \textbf{Worst Acc} & \textbf{ARI} \\
\midrule
Oracle & 92.43$\pm$0.19 & 85.76$\pm$0.07 & 1.00$\pm$0.00 \\ \cmidrule(lr){1-4}
\textbf{EMD-CFL } & \textbf{92.47$\pm$0.32} & \textbf{86.31$\pm$1.36} & \textbf{1.00$\pm$0.00} \\
CFL & 86.03$\pm$0.79 & 79.17$\pm$1.86 & 0.00$\pm$0.00 \\
PACFL & 88.43$\pm$0.52 & 45.31$\pm$9.18 & 0.70$\pm$0.00 \\
\textbf{FedClust} & \textbf{87.63$\pm$0.36} & \textbf{75.74$\pm$1.99} & \textbf{1.00$\pm$0.00} \\
IFCA & 90.39$\pm$2.07 & 82.47$\pm$5.52 & 0.69$\pm$0.22 \\
\textbf{FlexCFL} & \textbf{92.36$\pm$0.27} & \textbf{85.42$\pm$1.03} & \textbf{1.00$\pm$0.00} \\
FeSEM & 81.77$\pm$0.70 & 72.62$\pm$1.03 & 0.00$\pm$0.00 \\
\textbf{CFLGP} & \textbf{92.07$\pm$0.21} & \textbf{84.28$\pm$1.30} & \textbf{1.00$\pm$0.00} \\
FedEM & 89.30$\pm$0.58 & 78.27$\pm$1.36 & - \\
FedSoft & 92.02$\pm$0.52 & 83.33$\pm$1.86 & - \\
FedRC & 89.72$\pm$0.86 & 78.87$\pm$2.25 & - \\ 
FedCE & 87.83$\pm$2.90 & 77.43$\pm$6.54 & - \\ 
FedAvg & 86.20$\pm$0.66 & 79.23$\pm$1.39 & - \\
FedProx & 85.72$\pm$0.83 & 78.70$\pm$0.11 & - \\
pFedGraph & 83.01$\pm$1.00 & 70.92$\pm$3.15 & - \\
FedSaC & 76.94$\pm$3.76 & 52.29$\pm$7.98 & - \\
\bottomrule
\end{tabular}%
}
\end{table}

\begin{figure}[!htbp]
    \centering
    \includegraphics[width=0.8\columnwidth]{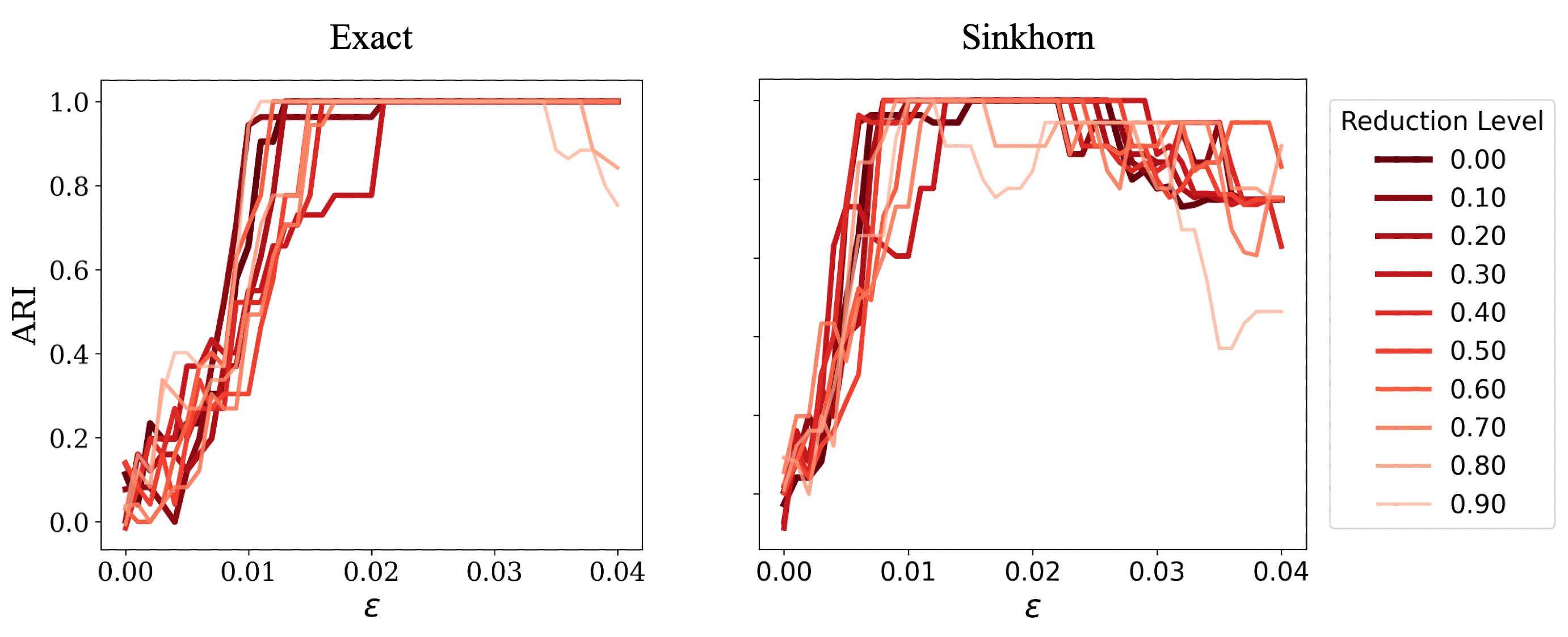}
    \caption{Our method achieves stable clustering performance on PACS using exact EMDs and approximate Sinkhorn distances.}
    \label{fig:hparams_pacs_flex}
\end{figure}

\newpage
\subsection{Backdoor Features}
\label{appx:backdoor}
We report the full results on the backdoor feature experiments in Table~\ref{tab:full_cmnist_cifar}, which include the figures for the soft and non-clustering methods. In general, hard clustering seems to be necessary to protect the target clients from being backdoored. We further show in Figure~\ref{fig:backdoor_proj_flex} that our clustering remains stable for a range of $\epsilon$-values and using approximate Sinkhorn distances.

\begin{table}[!htbp]
\centering
\caption{Several methods segregate clients with backdoor features on Backdoor MNIST and Backdoor CIFAR10 but fail to do so at the start of training, resulting in some sensitivity to the backdoor feature.}
\label{tab:full_cmnist_cifar}
\resizebox{\textwidth}{!}{
\begin{tabular}{lcccccc}
\toprule
 & \multicolumn{3}{c}{\textbf{Backdoor MNIST}} & \multicolumn{3}{c}{\textbf{Backdoor CIFAR10}} \\ 
\cmidrule(lr){2-4} \cmidrule(lr){5-7}
\textbf{Method} & \textbf{Clean Acc} & \textbf{Backdoor Acc} & \textbf{ARI} & \textbf{Clean Acc} & \textbf{Backdoor Acc} & \textbf{ARI} \\
\midrule
Oracle & 98.37$\pm$0.11 & 98.27$\pm$0.16 & 1.00$\pm$0.00 & 97.89$\pm$0.12 & 97.63$\pm$0.06 & 1.00$\pm$0.00 \\ \cmidrule(lr){1-7}
EMD-CFL & 98.40$\pm$0.04 & 98.30$\pm$0.06 & 1.00$\pm$0.00 & 97.84$\pm$0.23 & 97.73$\pm$0.32 & 1.00$\pm$0.00 \\
CFL & 98.43$\pm$0.07 & 95.98$\pm$1.39 & 0.46$\pm$0.02 & 92.37$\pm$0.27 & 39.94$\pm$4.85 & 0.00$\pm$0.00 \\
PACFL & 95.55$\pm$0.14 & 81.55$\pm$1.45 & 0.42$\pm$0.00 & 97.71$\pm$0.11 & 84.57$\pm$3.13 & 0.53$\pm$0.00 \\
FedClust & 93.04$\pm$3.35 & 93.82$\pm$1.54 & 0.53$\pm$0.02 & 82.61$\pm$9.00 & 78.84$\pm$9.60 & 0.16$\pm$0.06 \\
IFCA & 98.54$\pm$0.05 & 98.15$\pm$0.14 & 0.55$\pm$0.00 & 97.62$\pm$0.14 & 77.69$\pm$4.06 & 0.53$\pm$0.00 \\
FlexCFL & 39.51$\pm$51.12 & 35.46$\pm$44.10 & 1.00$\pm$0.00 & 76.39$\pm$22.40 & 29.83$\pm$8.14 & 0.00$\pm$0.10 \\
FeSEM & 97.23$\pm$0.00 & 92.68$\pm$2.50 & -0.00$\pm$0.01 & 94.39$\pm$0.08 & 91.65$\pm$0.37 & 0.00$\pm$0.00 \\
CFLGP & 98.34$\pm$0.08 & 98.27$\pm$0.10 & 1.00$\pm$0.00 & 76.42$\pm$25.39 & 35.19$\pm$4.33 & 0.06$\pm$0.16 \\
FedEM & 97.60$\pm$0.24 & 80.09$\pm$7.82 & - & 96.55$\pm$0.37 & 57.81$\pm$10.92 & - \\
FedSoft & 98.24$\pm$0.06 & 95.54$\pm$0.74 & - & 96.84$\pm$0.85 & 77.86$\pm$2.12 & - \\
FedRC & 97.65$\pm$0.29 & 79.74$\pm$7.94 & - & 96.57$\pm$0.25 & 60.93$\pm$11.65 & - \\
FedCE & 98.52$\pm$0.12 & 95.60$\pm$0.89 & - & 97.65$\pm$2.38 & 82.94$\pm$2.38 & - \\
FedAvg & 97.19$\pm$0.29 & 88.49$\pm$1.39 & - & 92.17$\pm$0.24 & 35.07$\pm$5.25 & - \\
FedProx & 97.85$\pm$0.04 & 91.12$\pm$1.76 & - & 95.23$\pm$0.26 & 89.94$\pm$1.62 & - \\
pFedGraph & 97.88$\pm$0.15 & 82.49$\pm$2.31 & - & 90.69$\pm$1.00 & 25.99$\pm$7.48 & - \\
FedSaC & 97.89$\pm$0.35 & 89.63$\pm$2.14 & - & 32.27$\pm$14.00 & 1.33$\pm$1.28 & - \\
\bottomrule
\end{tabular}
}
\end{table}

\begin{figure}[ht]
    \centering
    \begin{subfigure}[b]{0.8\linewidth}
        \centering
        \includegraphics[width=\linewidth]{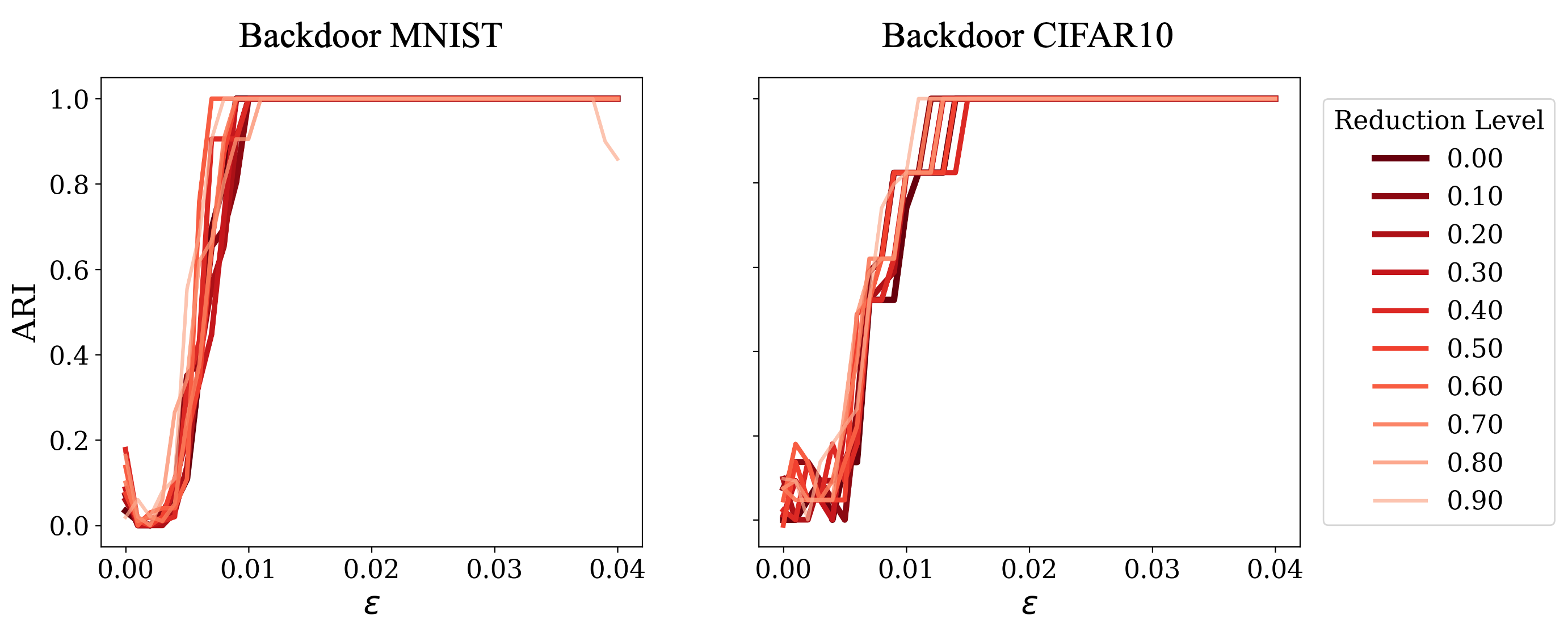}
        \caption{Exact}
    \end{subfigure}

    \begin{subfigure}[b]{0.8\linewidth}
        \centering
        \includegraphics[width=\linewidth]{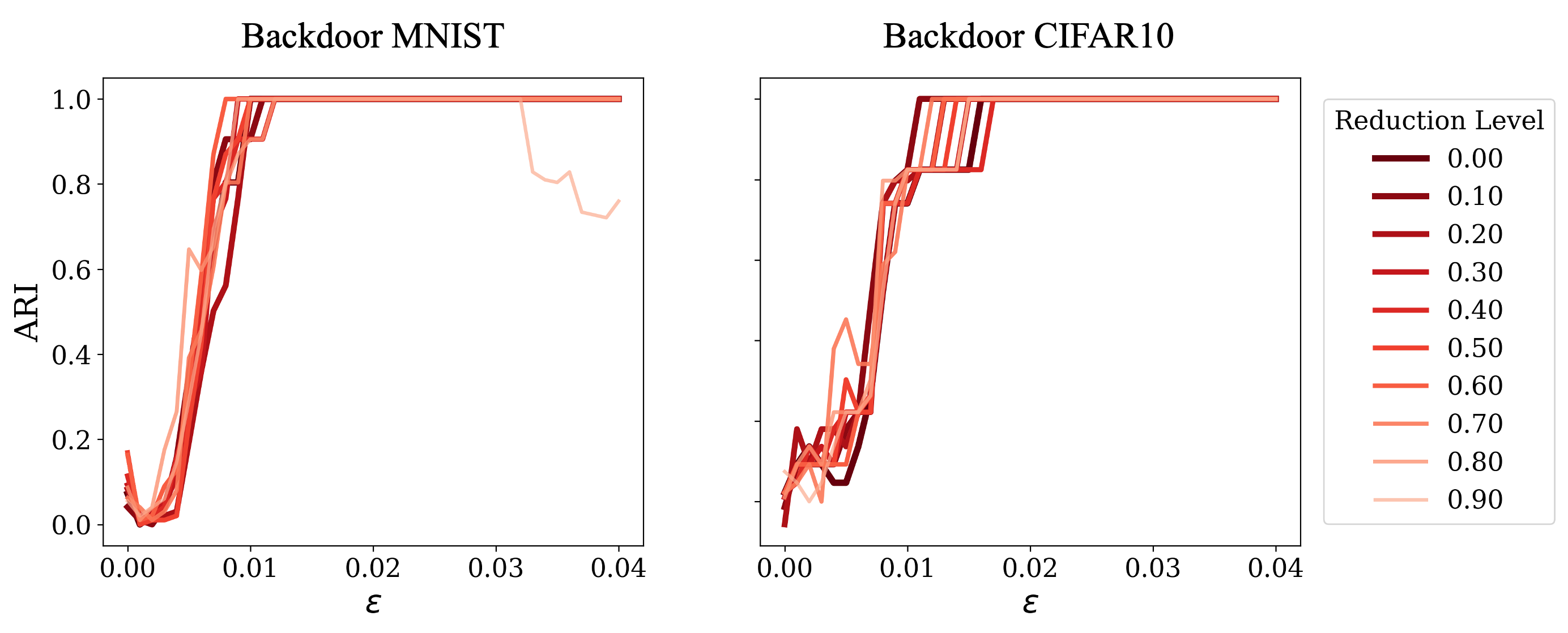}
        \caption{Sinkhorn}
    \end{subfigure}

    \caption{Our method achieves stable clustering performance on Backdoor MNIST and Backdoor CIFAR10 using exact and approximate Sinkhorn distances.}
    \label{fig:backdoor_proj_flex}
\end{figure}

\clearpage
\subsection{Analysing Parameter, Gradient and Raw Data Distances}
\label{appx:failure_analysis}
We found that parameter and gradient-based methods (such as CFL and FedClust) consistently underperformed our method. Similarly, PACFL, which focuses on input space distances, underperformed. We therefore analyse the parameter (Figure~\ref{fig:param_distances}), gradient (Figure~\ref{fig:grad_distances}) and principal angles in input space (Figure~\ref{fig:raw_distances}), which these competing methods use and contrast them with the EMDs our method uses (Figure~\ref{fig:emds}). We find that for more challenging data (Rotated CIFAR10, Backdoor MNIST and Backdoor CIFAR10), these distances are not able to unearth the underlying clustering structure, while the EMDs provide a much clearer reflection thereof. 

\begin{figure}[H]
    \centering
    \begin{subfigure}[b]{0.18\columnwidth}
        \centering
        \includegraphics[width=\linewidth]{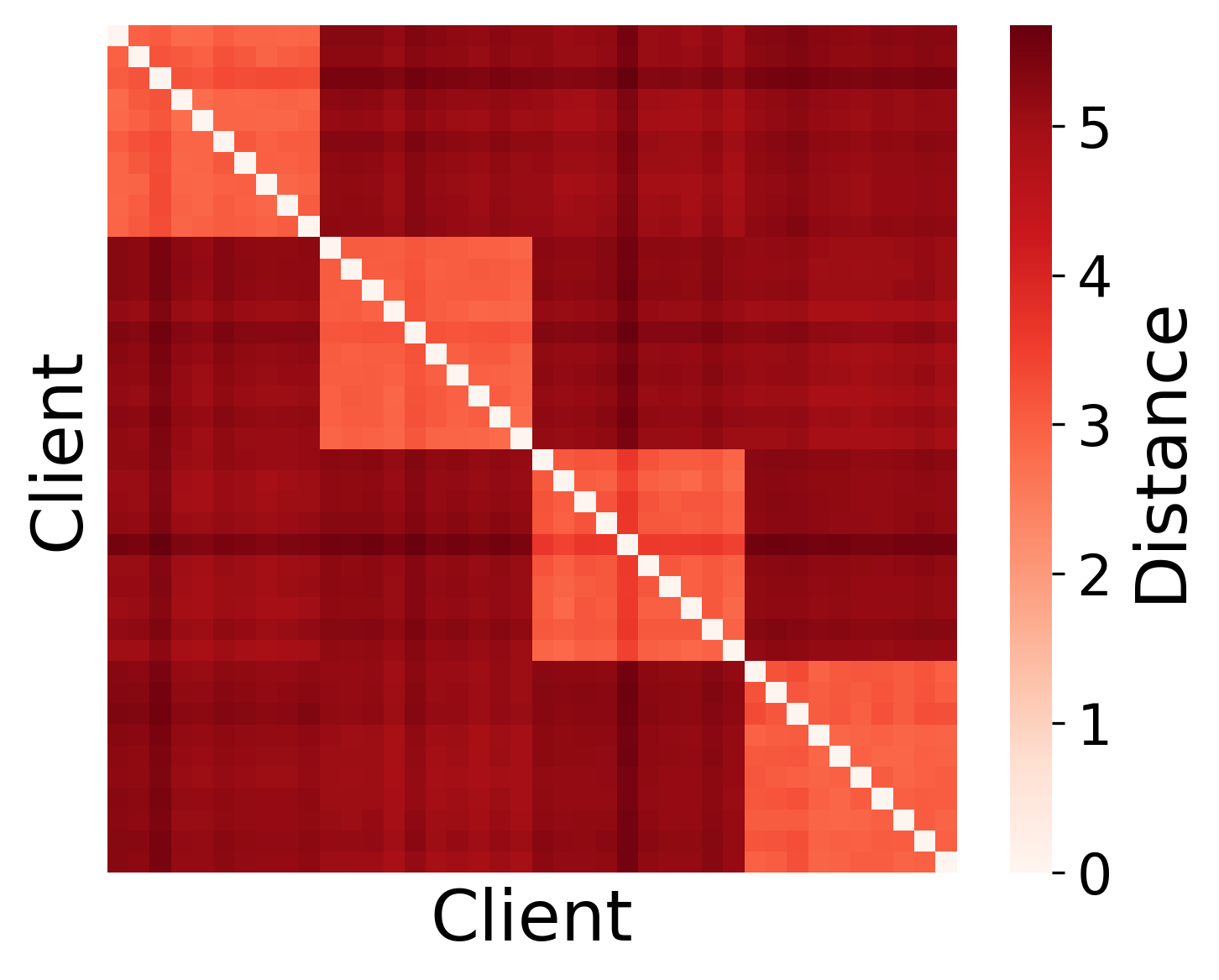}
        \caption{Rot. MNIST}
    \end{subfigure}
    \begin{subfigure}[b]{0.18\columnwidth}
        \centering
        \includegraphics[width=\linewidth]{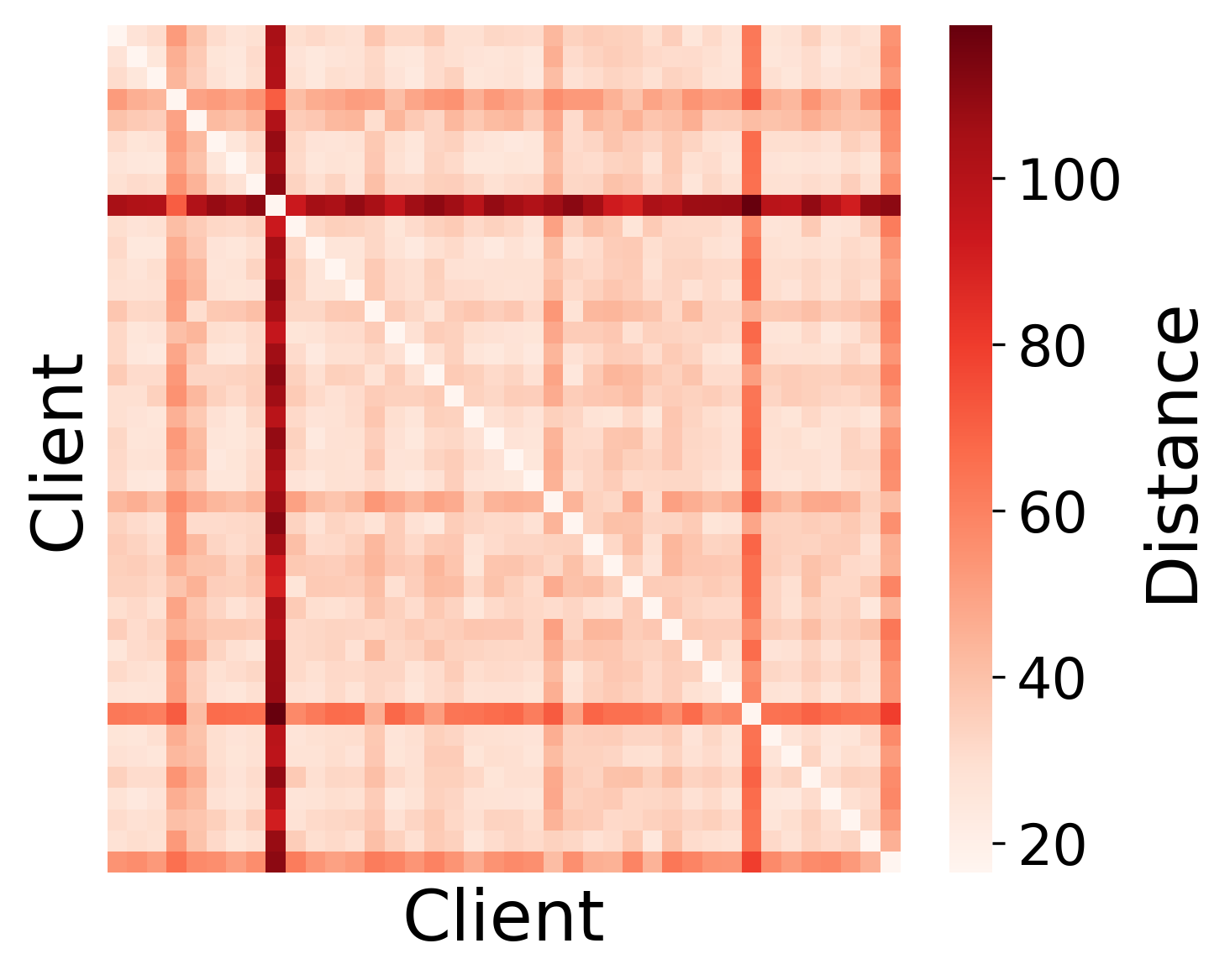}
        \caption{Rot. CIFAR10}
    \end{subfigure}
    \begin{subfigure}[b]{0.18\columnwidth}
        \centering
        \includegraphics[width=\linewidth]{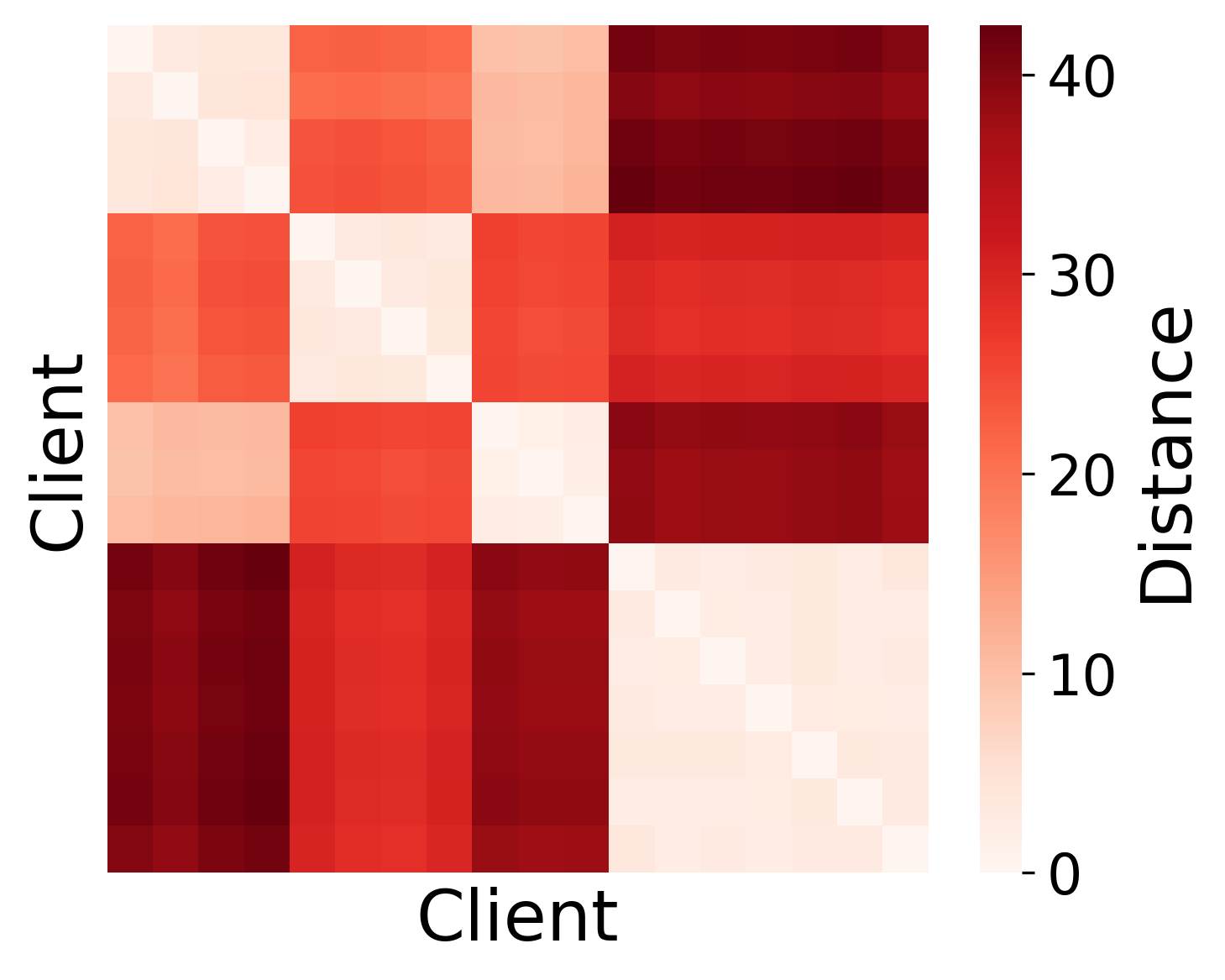}
        \caption{PACS}
    \end{subfigure}
    \begin{subfigure}[b]{0.18\columnwidth}
        \centering
        \includegraphics[width=\linewidth]{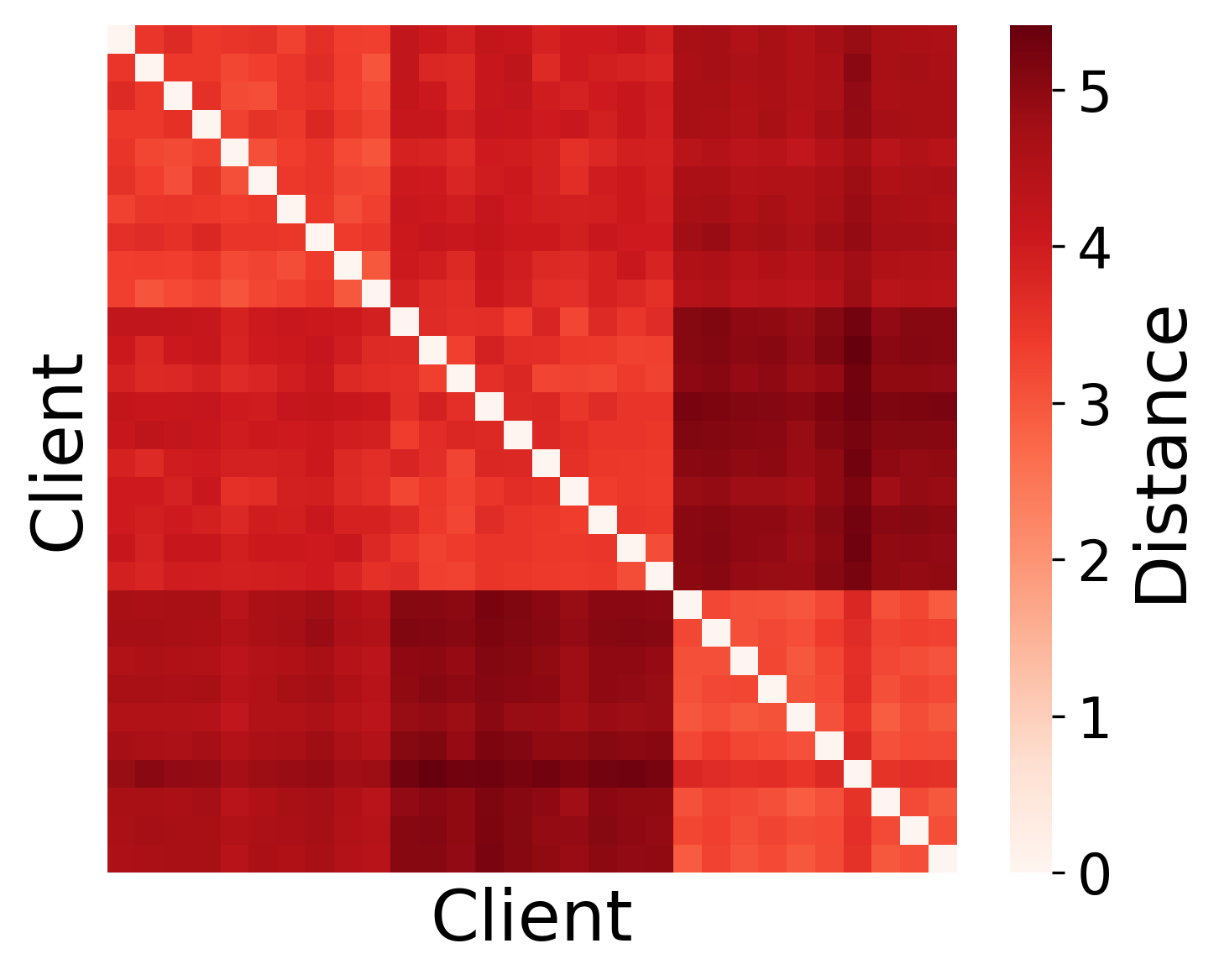}
        \caption{Bd. MNIST}
    \end{subfigure}
    \begin{subfigure}[b]{0.18\columnwidth}
        \centering
        \includegraphics[width=\linewidth]{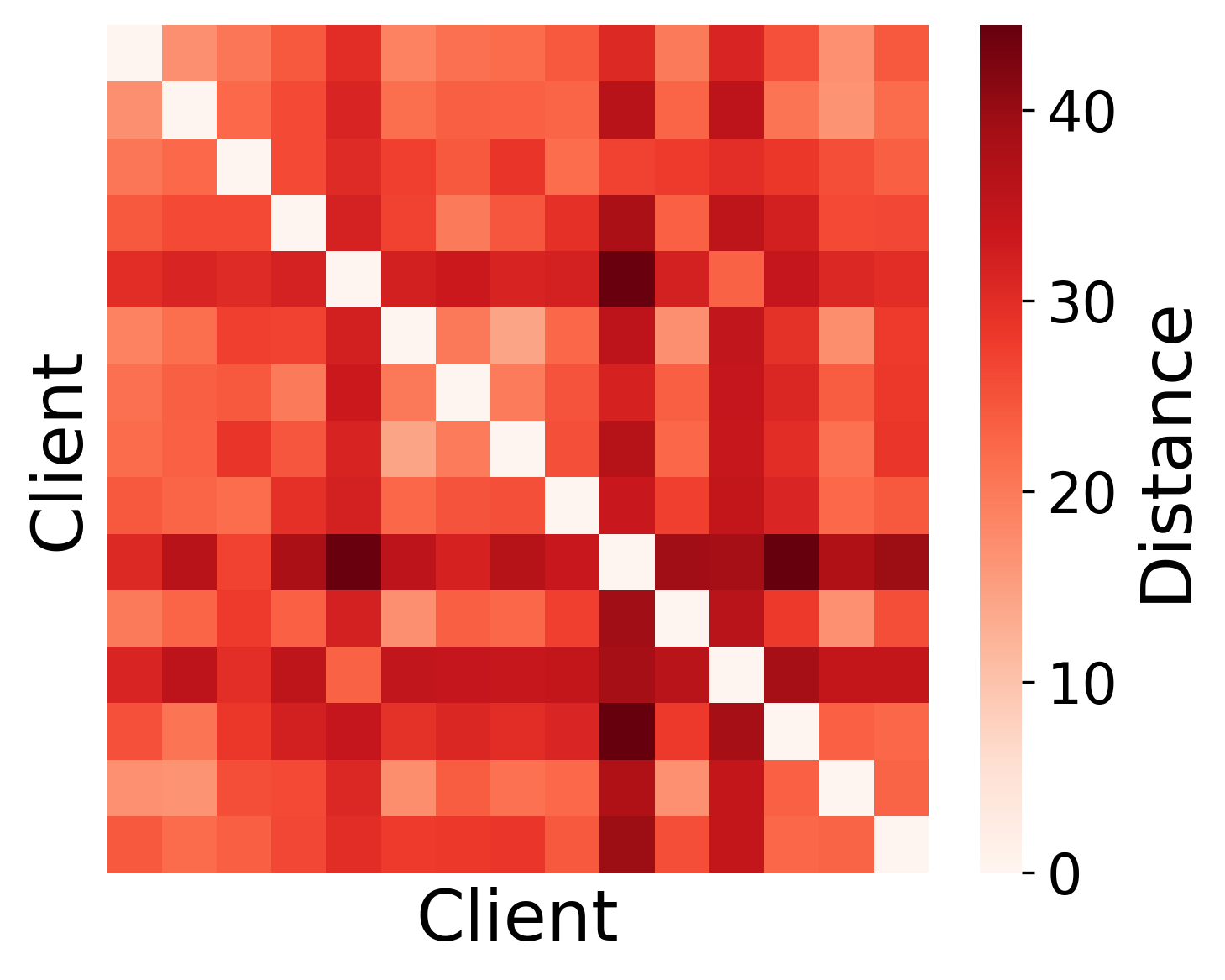}
        \caption{Bd. CIFAR10}
    \end{subfigure}

    \caption{Pairwise L2 distances between model parameters.}
    \label{fig:param_distances}
\end{figure}

\begin{figure}[H]
    \centering
    \begin{subfigure}[b]{0.18\columnwidth}
        \centering
        \includegraphics[width=\linewidth]{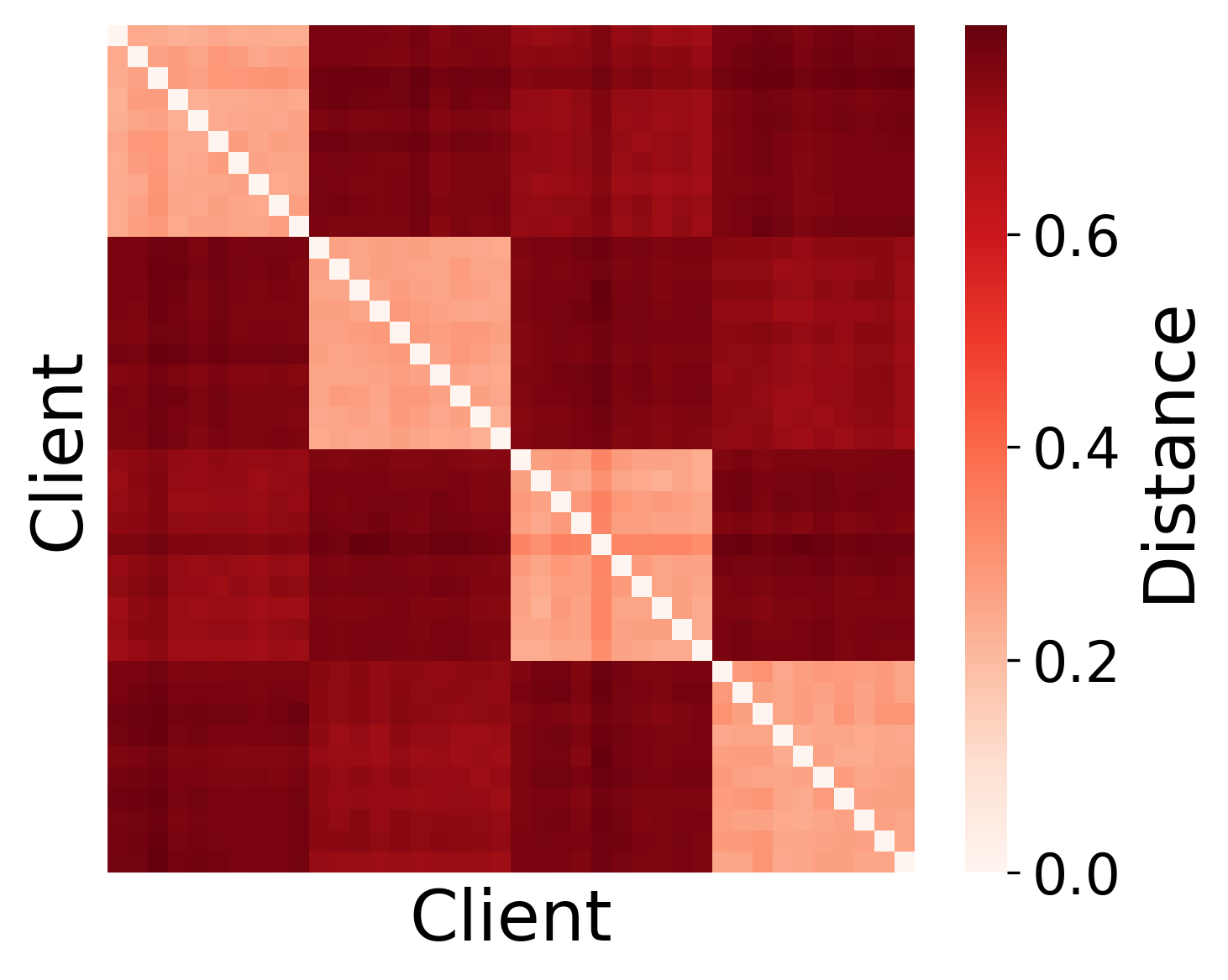}
        \caption{Rot. MNIST}
    \end{subfigure}
    \begin{subfigure}[b]{0.18\columnwidth}
        \centering
        \includegraphics[width=\linewidth]{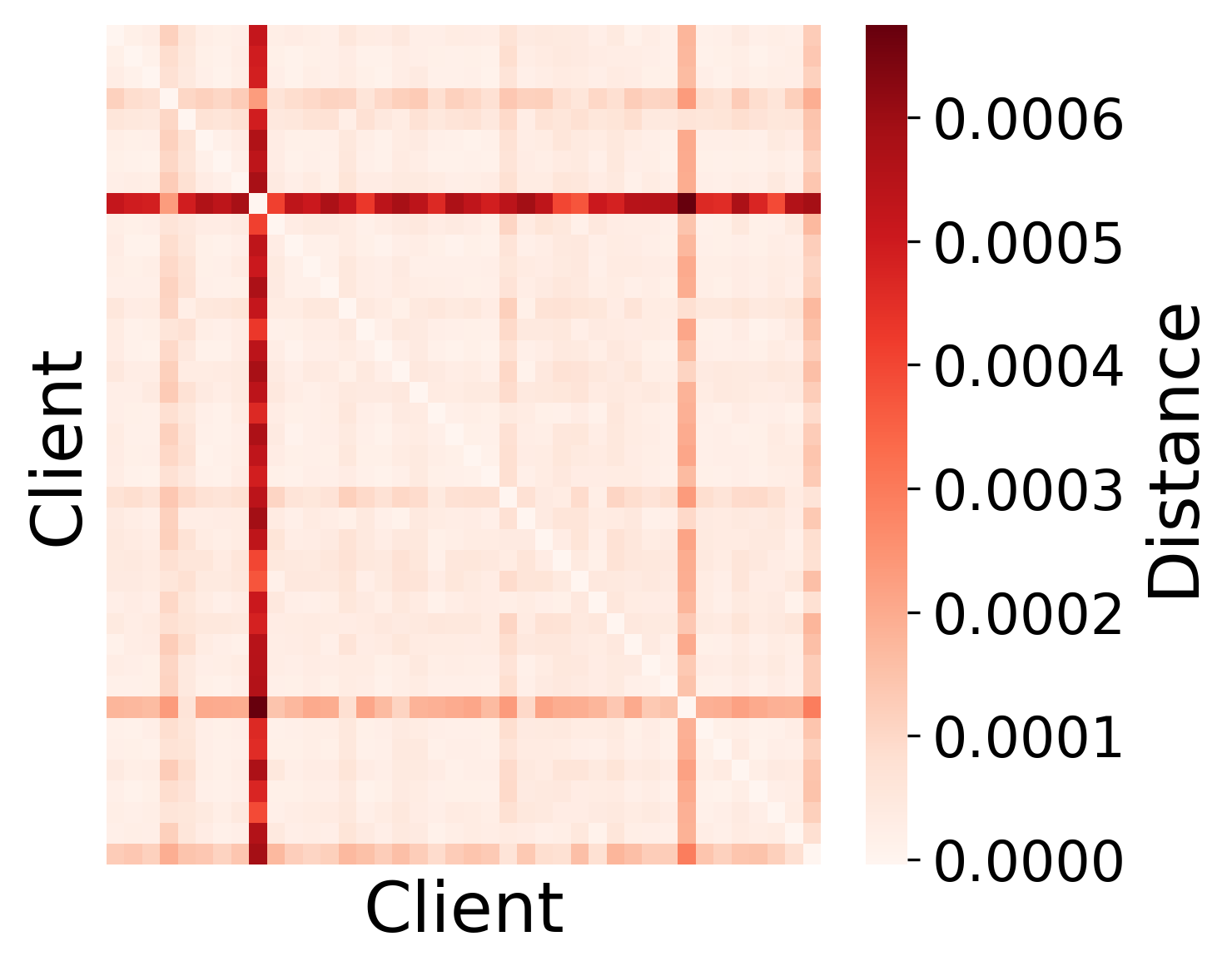}
        \caption{Rot. CIFAR10}
    \end{subfigure}
    \begin{subfigure}[b]{0.18\columnwidth}
        \centering
        \includegraphics[width=\linewidth]{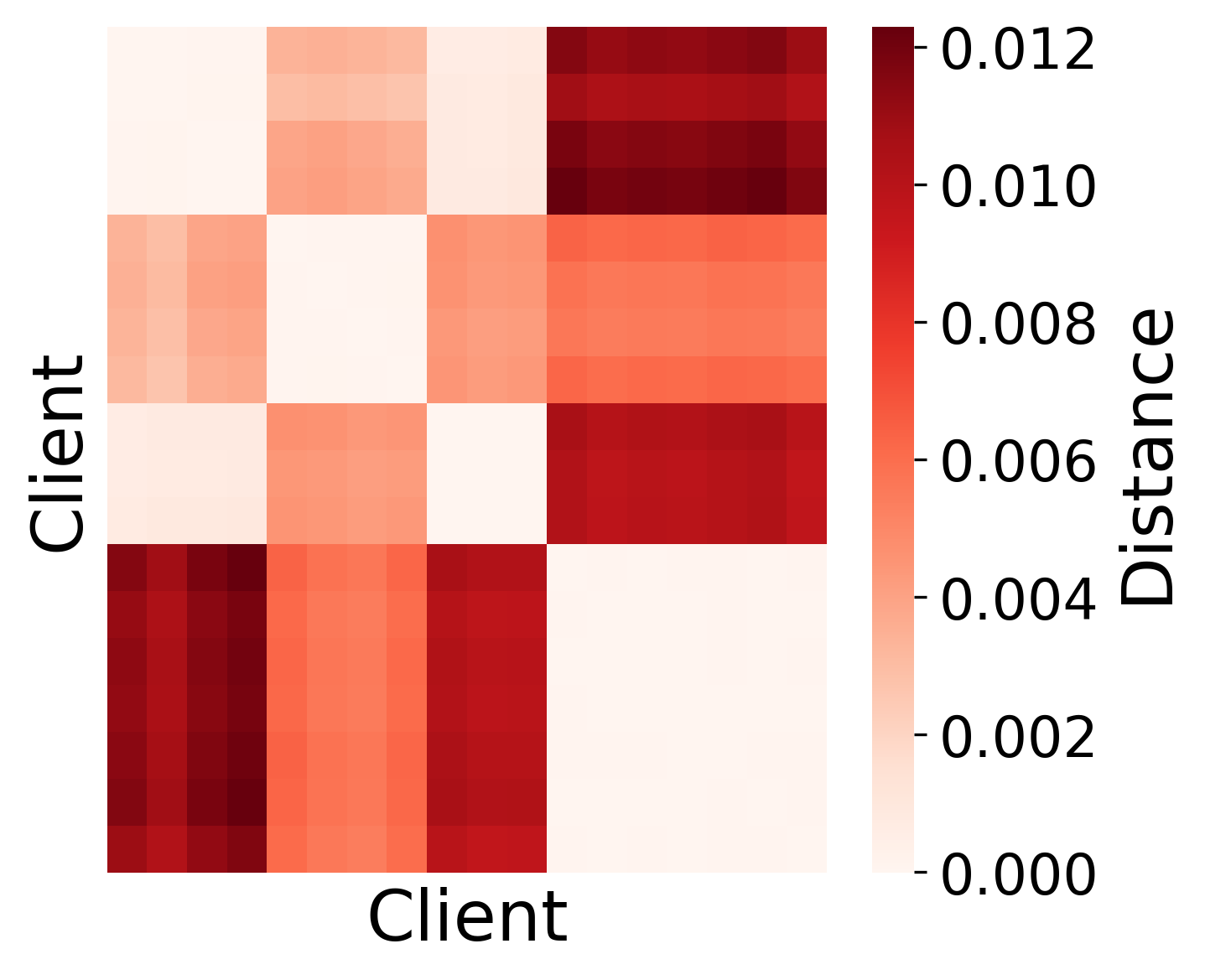}
        \caption{PACS}
    \end{subfigure}
    \begin{subfigure}[b]{0.18\columnwidth}
        \centering
        \includegraphics[width=\linewidth]{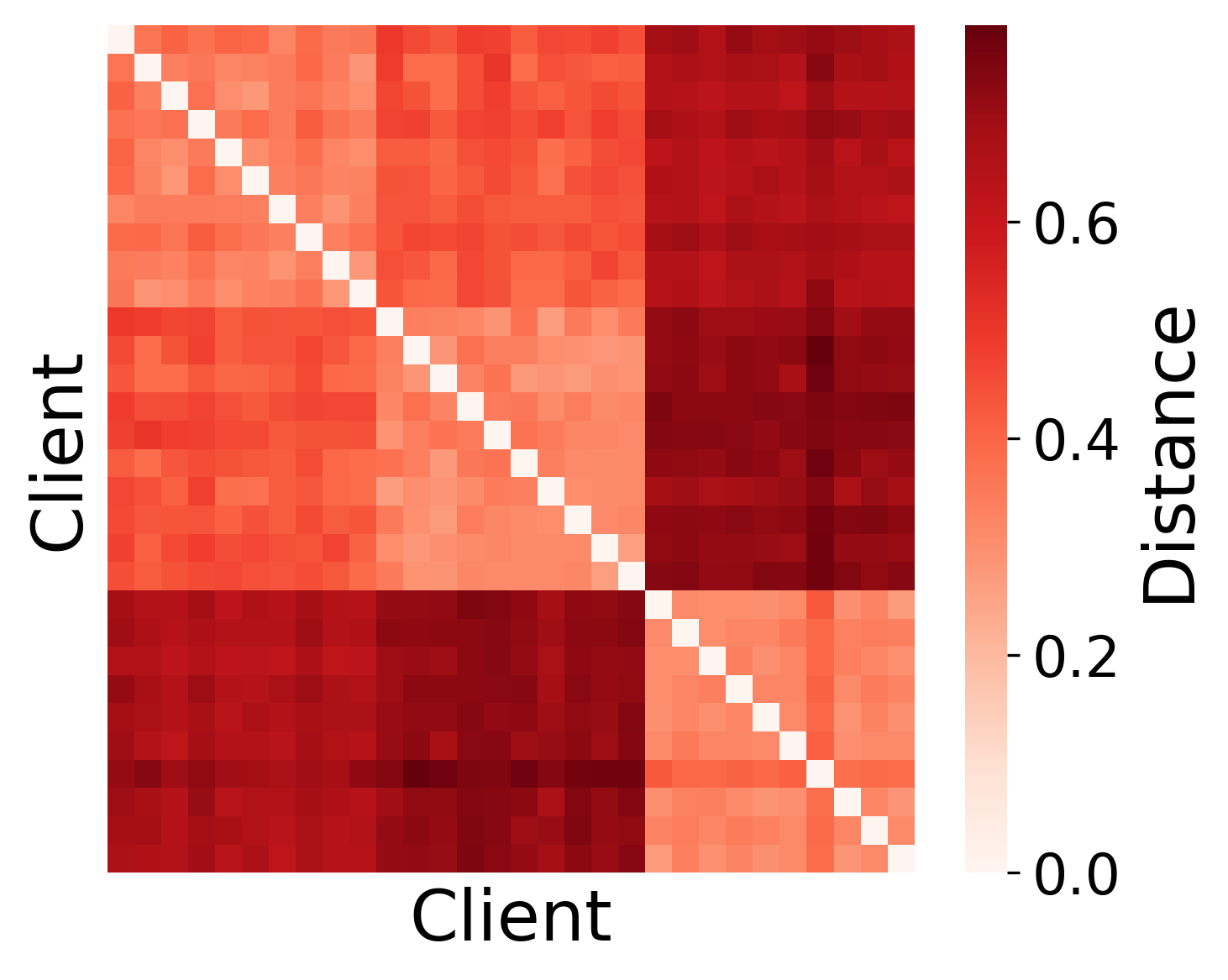}
        \caption{Bd. MNIST}
    \end{subfigure}
    \begin{subfigure}[b]{0.18\columnwidth}
        \centering
        \includegraphics[width=\linewidth]{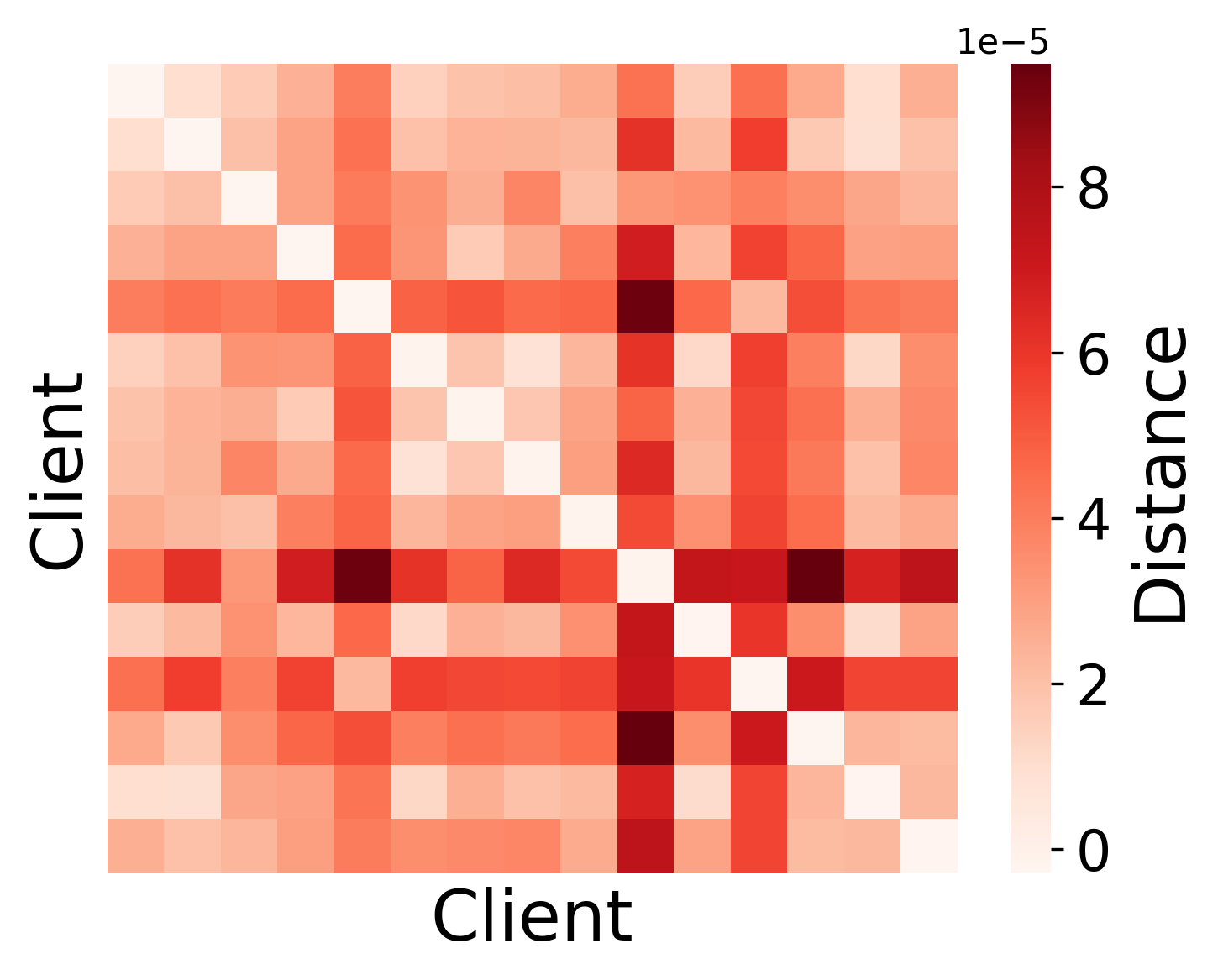}
        \caption{Bd. CIFAR10}
    \end{subfigure}

    \caption{Pairwise cosine dissimilarities between gradients.}
    \label{fig:grad_distances}
\end{figure}

\begin{figure}[H]
    \centering
    \begin{subfigure}[b]{0.18\columnwidth}
        \centering
        \includegraphics[width=\linewidth]{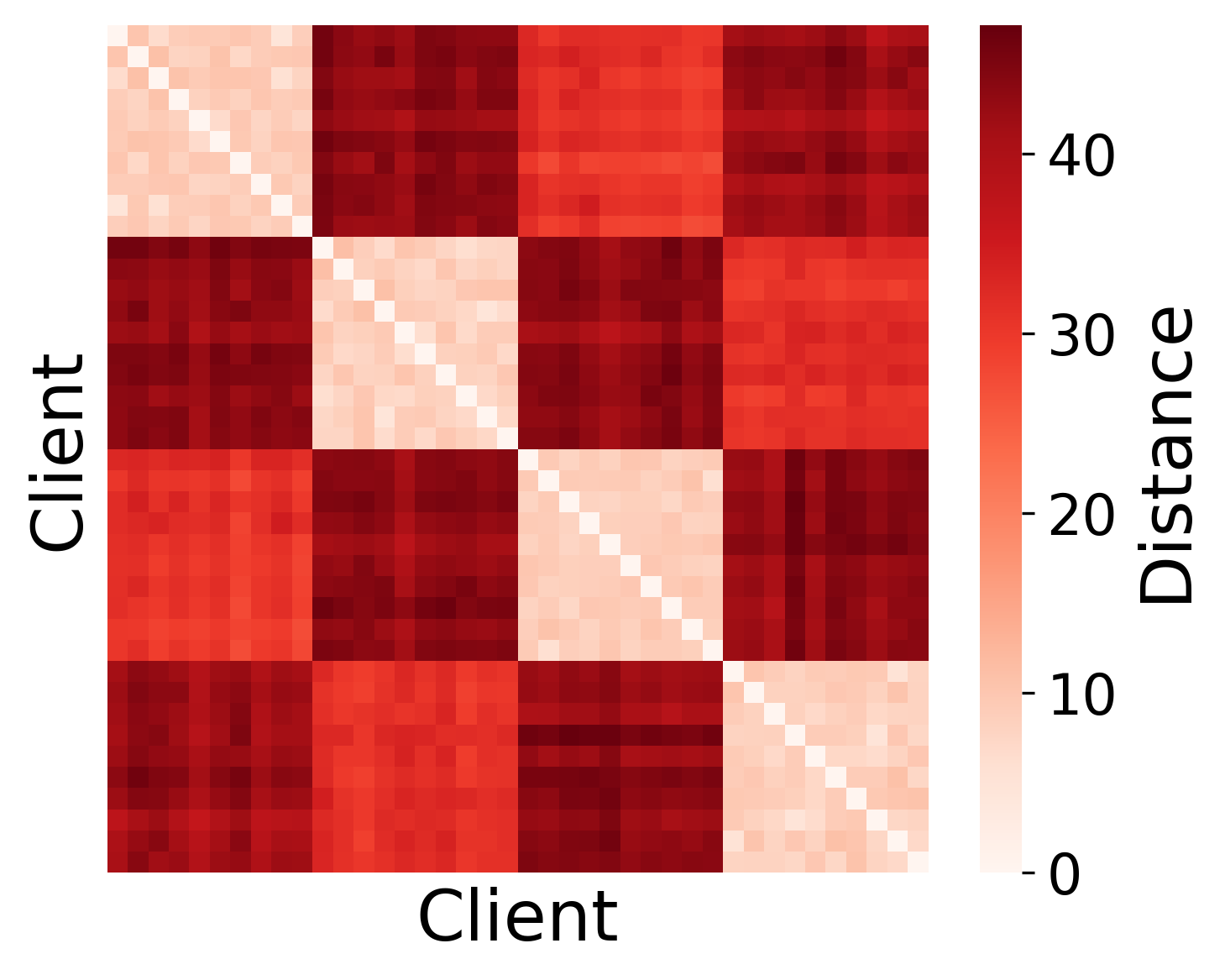}
        \caption{Rot. MNIST}
    \end{subfigure}
    \begin{subfigure}[b]{0.18\columnwidth}
        \centering
        \includegraphics[width=\linewidth]{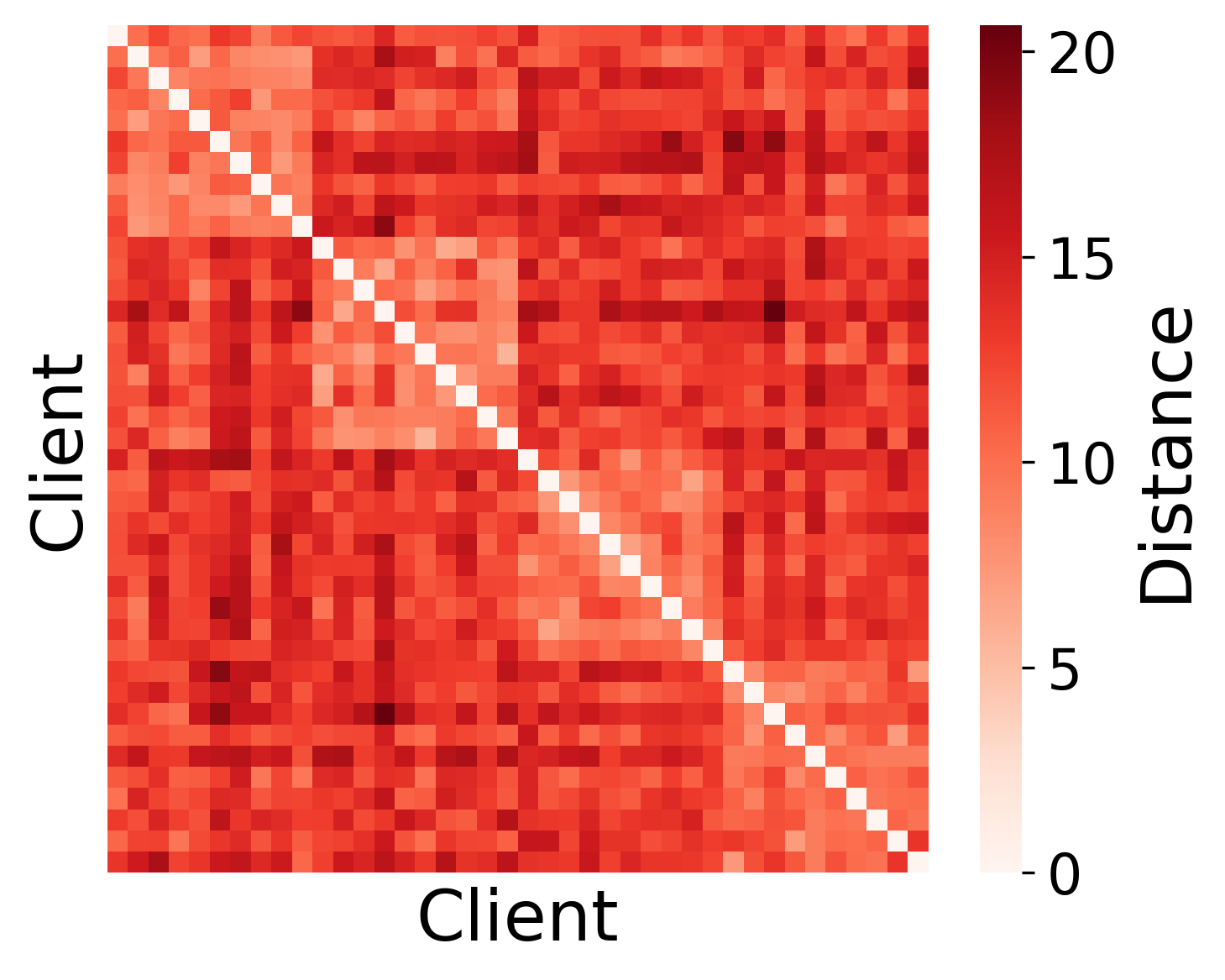}
        \caption{Rot. CIFAR10}
    \end{subfigure}
    \begin{subfigure}[b]{0.18\columnwidth}
        \centering
        \includegraphics[width=\linewidth]{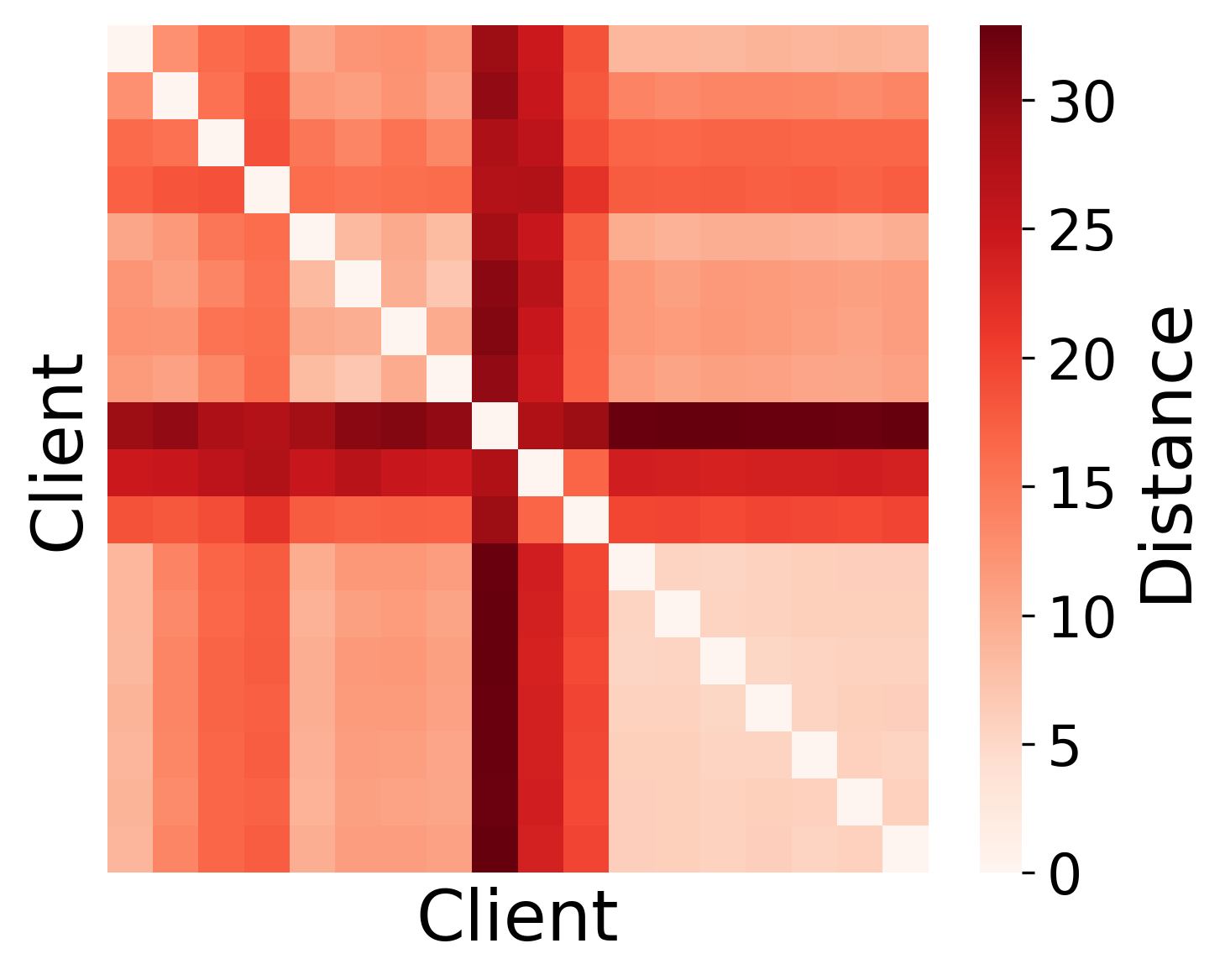}
        \caption{PACS}
    \end{subfigure}
    \begin{subfigure}[b]{0.18\columnwidth}
        \centering
        \includegraphics[width=\linewidth]{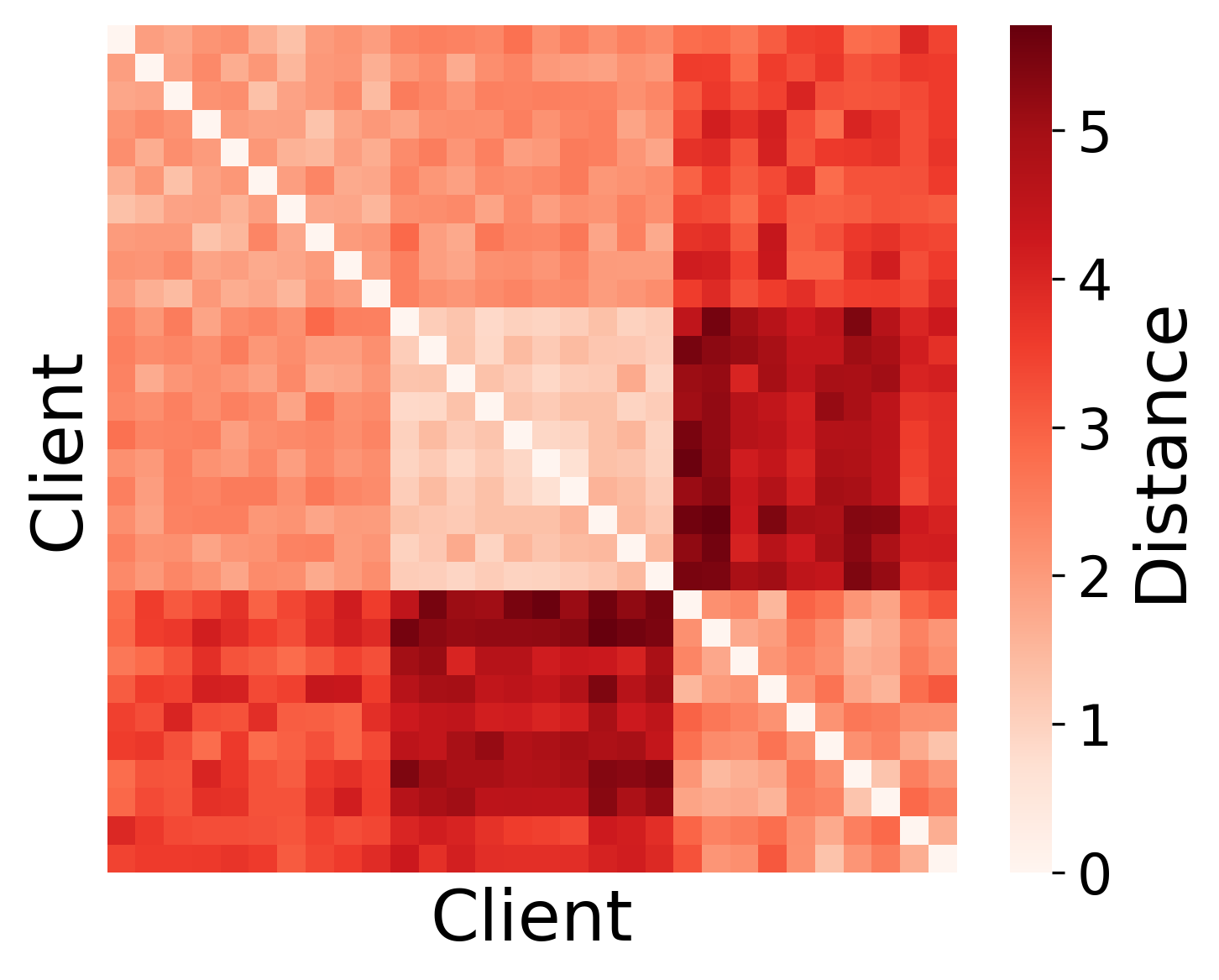}
        \caption{Bd. MNIST}
    \end{subfigure}
    \begin{subfigure}[b]{0.18\columnwidth}
        \centering
        \includegraphics[width=\linewidth]{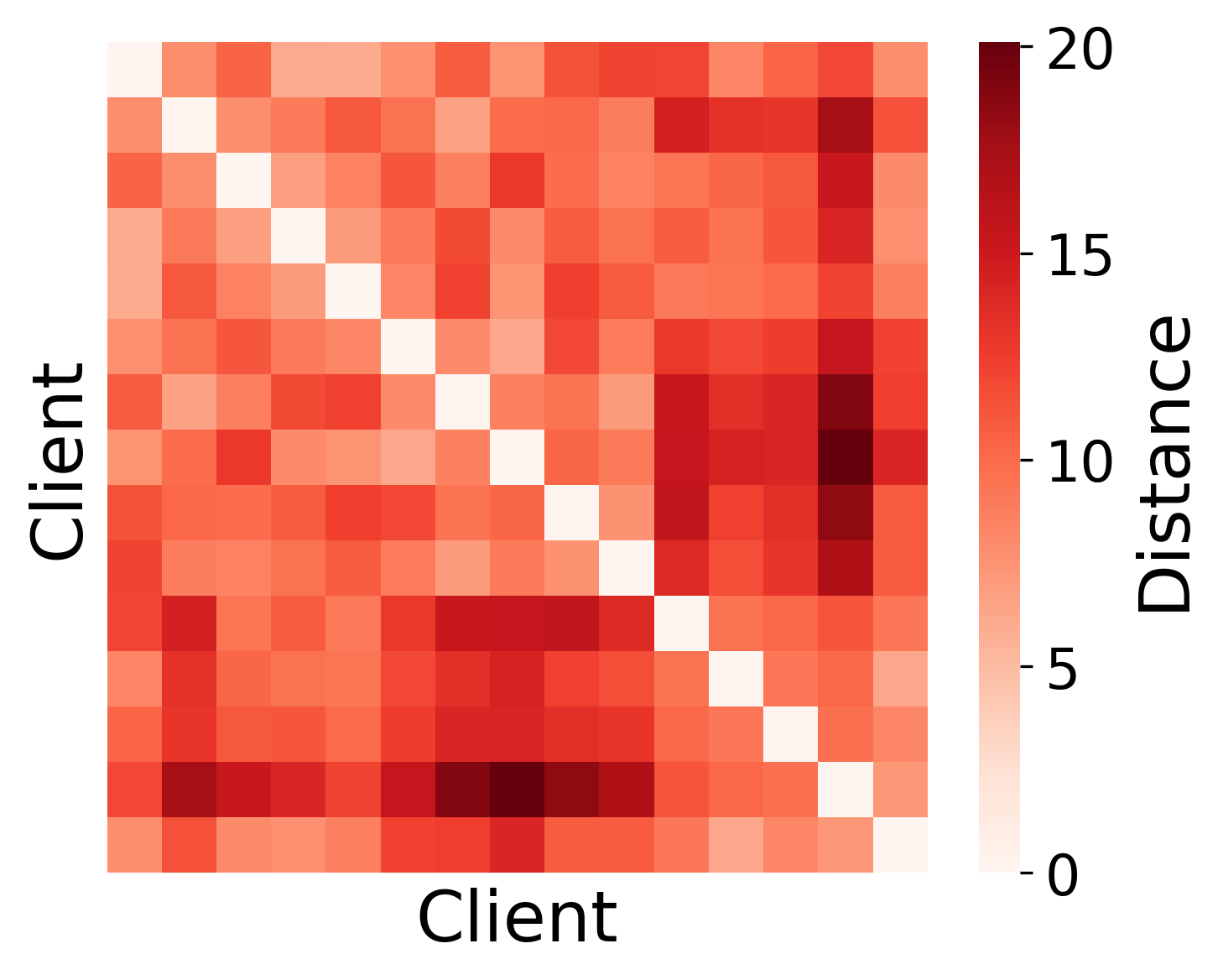}
        \caption{Bd. CIFAR10}
    \end{subfigure}

    \caption{Pairwise cosine distances of principal angles between input data.}
    \label{fig:raw_distances}
\end{figure}

\begin{figure}[H]
    \centering
    \begin{subfigure}[b]{0.18\columnwidth}
        \centering
        \includegraphics[width=\linewidth]{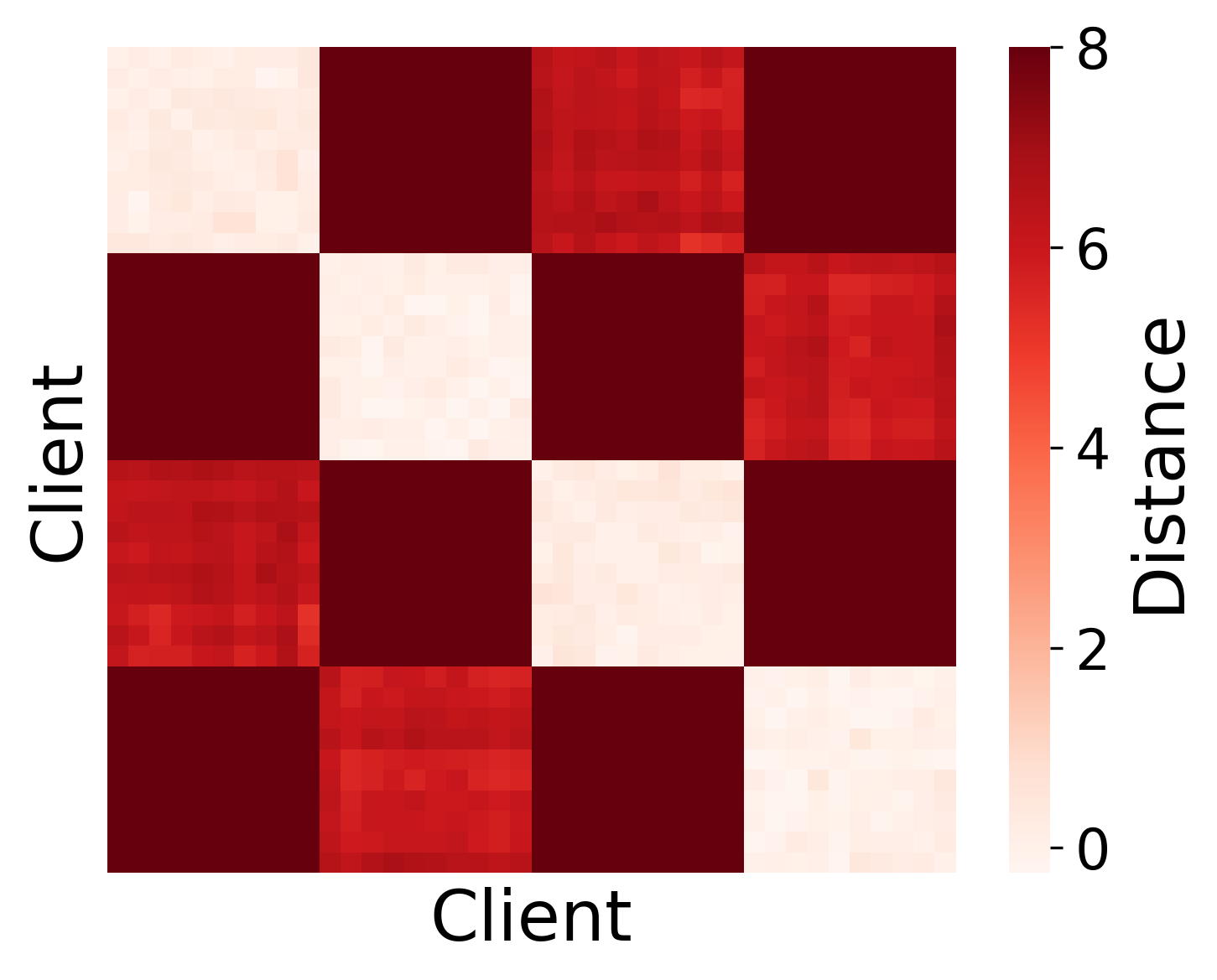}
        \caption{Rot. MNIST}
    \end{subfigure}
    \begin{subfigure}[b]{0.18\columnwidth}
        \centering
        \includegraphics[width=\linewidth]{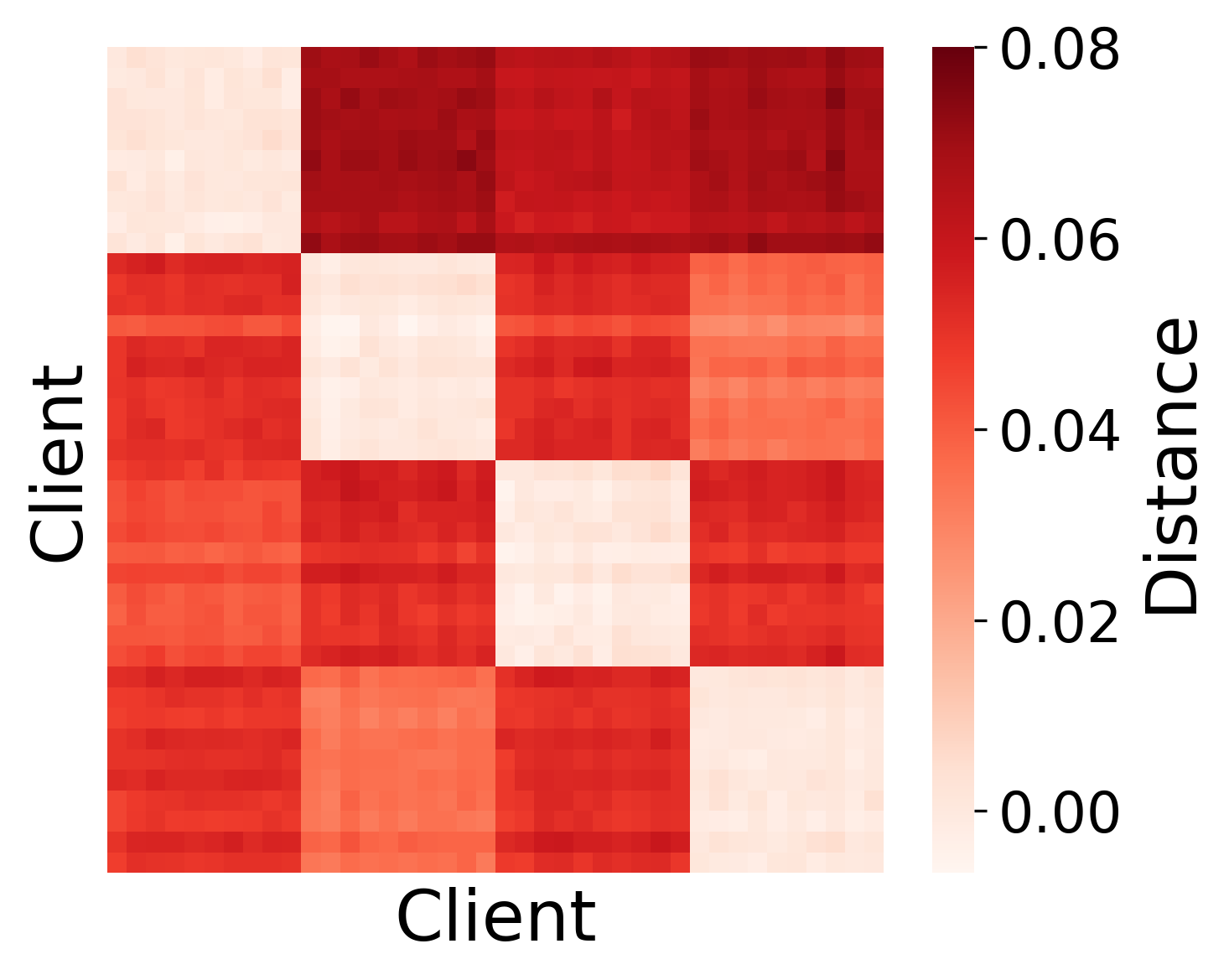}
        \caption{Rot. CIFAR10}
    \end{subfigure}
    \begin{subfigure}[b]{0.18\columnwidth}
        \centering
        \includegraphics[width=\linewidth]{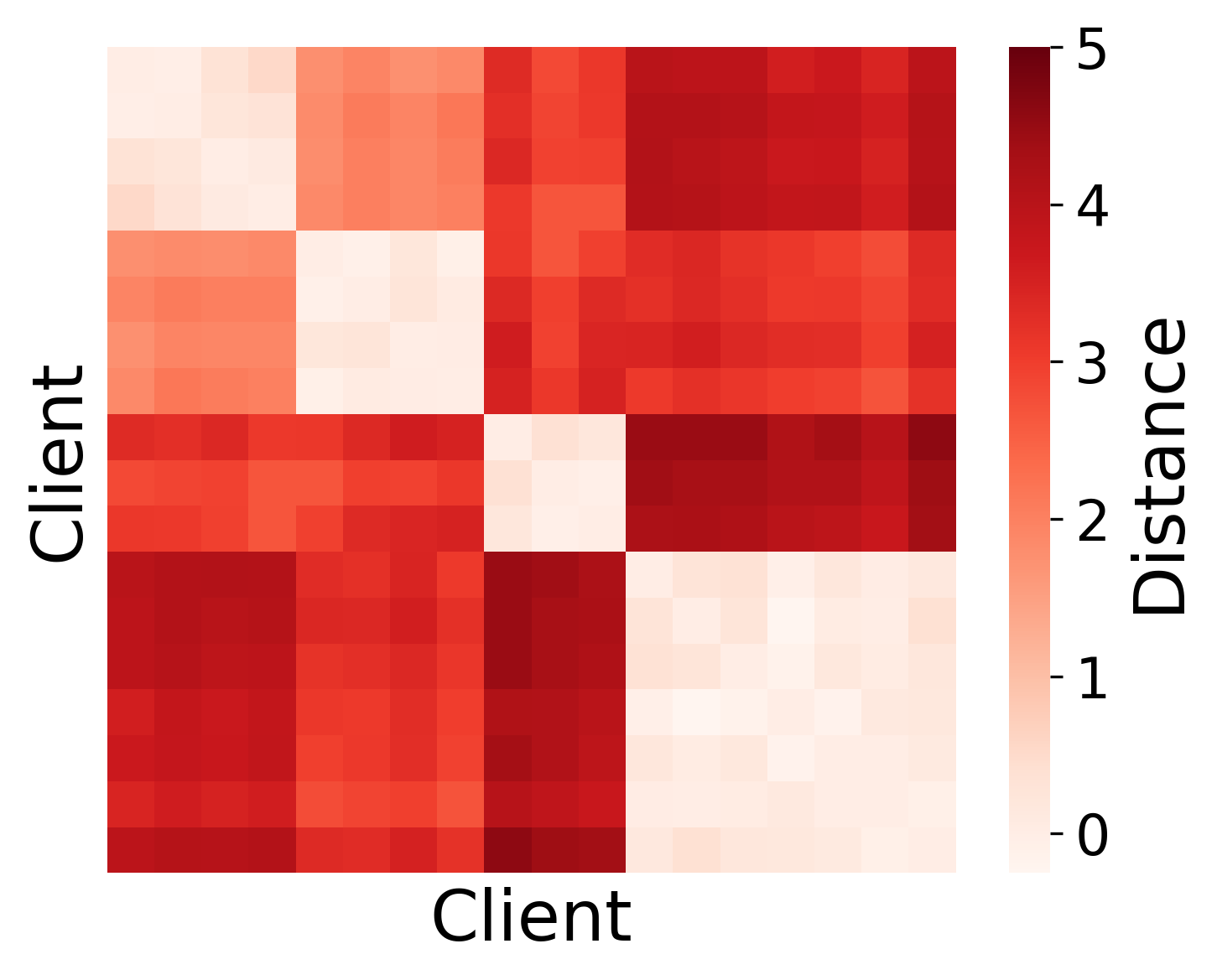}
        \caption{PACS}
    \end{subfigure}
    \begin{subfigure}[b]{0.18\columnwidth}
        \centering
        \includegraphics[width=\linewidth]{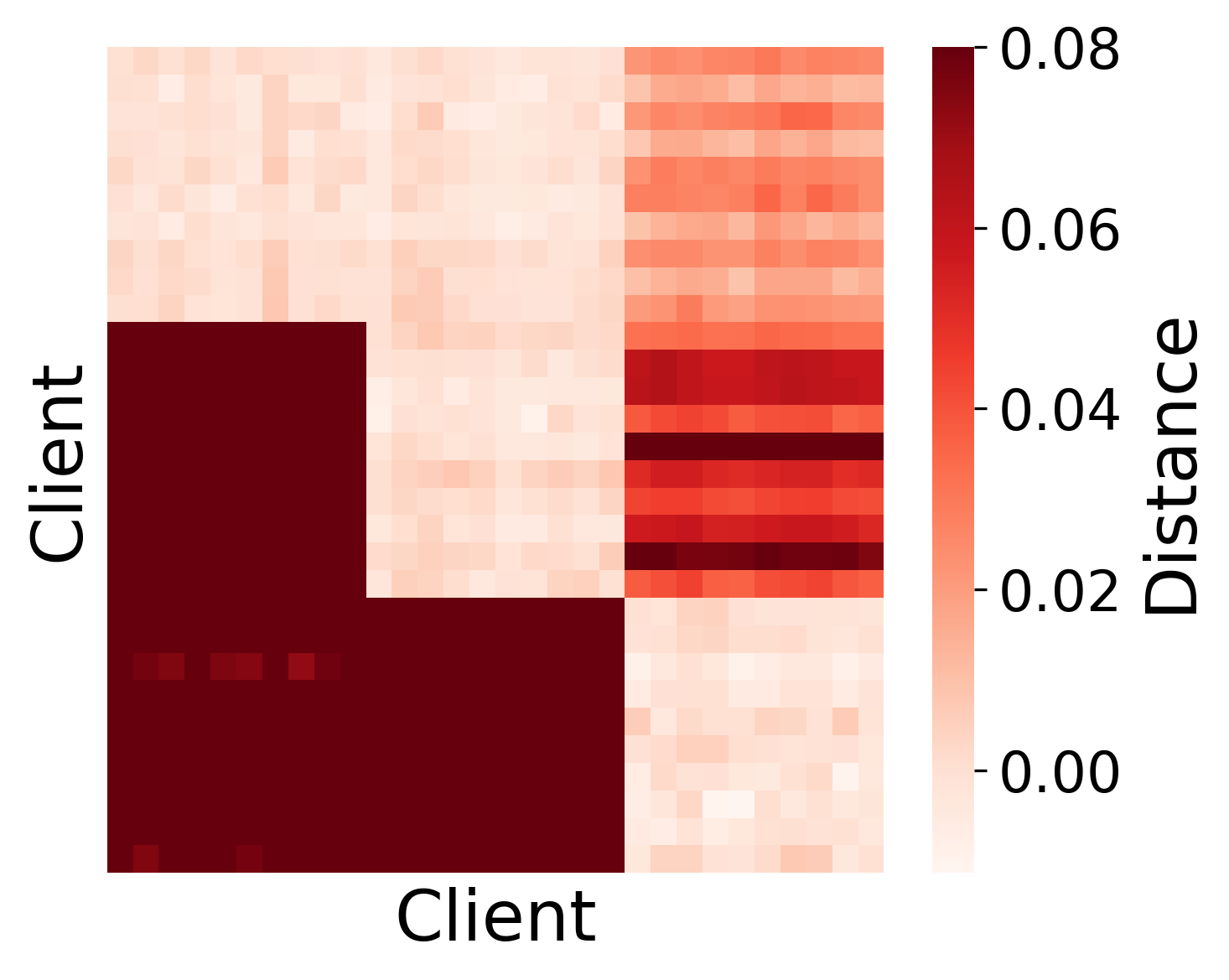}
        \caption{Bd. MNIST}
    \end{subfigure}
    \begin{subfigure}[b]{0.18\columnwidth}
        \centering
        \includegraphics[width=\linewidth]{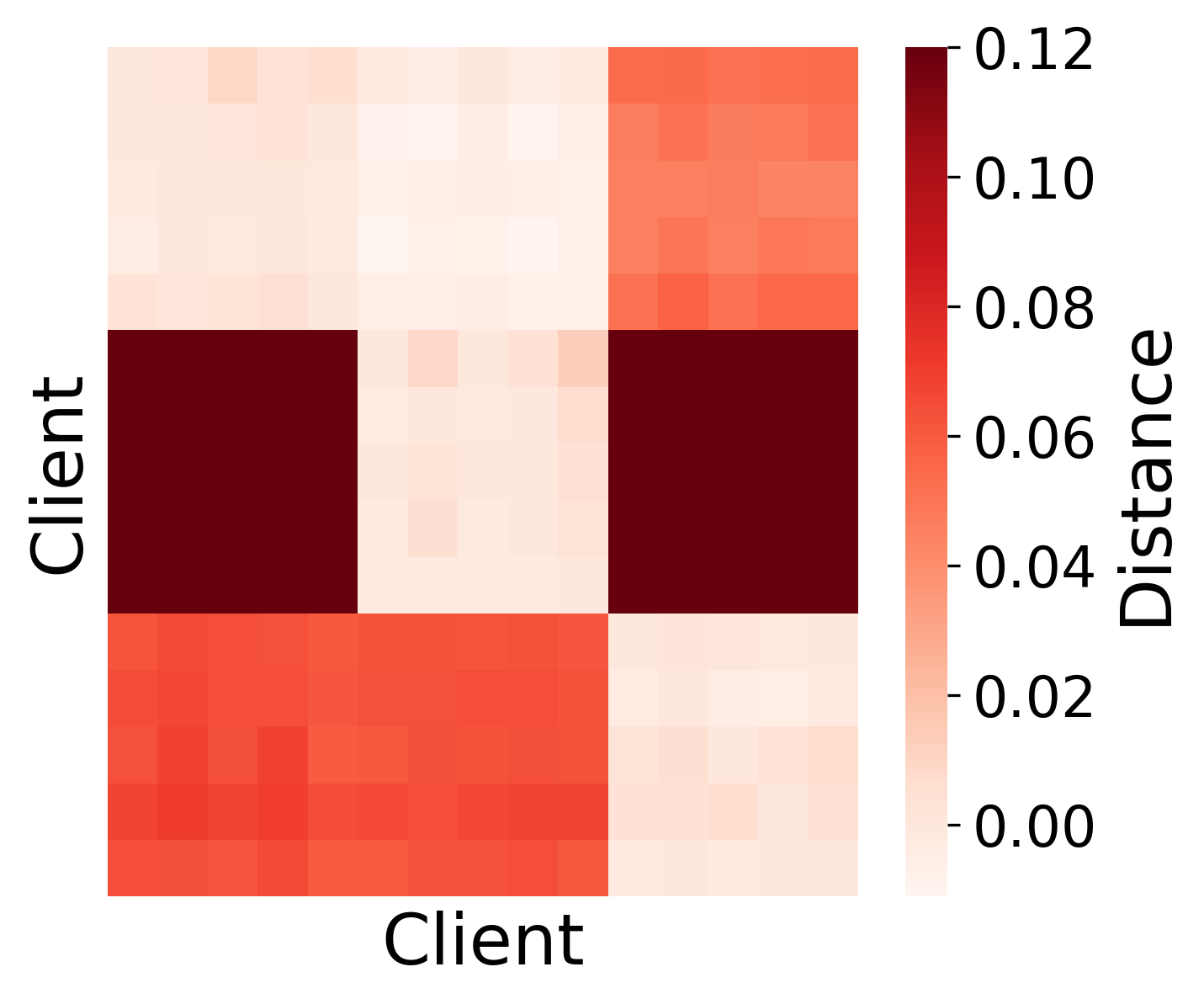}
        \caption{Bd. CIFAR10}
    \end{subfigure}

    \caption{Pairwise EMDs between embedding distributions.}
    \label{fig:emds}
\end{figure}

\subsection{Comparison of Parameter Distances}
\label{appx:cluster_total_param_distances}
We show in Table~\ref{tab:distance_comparison} that the average parameter distance within the clusters we identify is less than the average parameter if we do not cluster. While our theoretical results were specifically about the pairwise distance for the hypothesis parameters, we empirically find that we also obtain a decrease when considering the full set of model parameters. 

\begin{table}[htbp]
\centering
\caption{EMD-CFL identifies clusters with smaller average pairwise L2-distances between parameters (considering only the hypothesis as well as the full model) compared to the overall average.}
\label{tab:distance_comparison}
\begin{tabular}{lcccc}
\toprule
 & \multicolumn{2}{c}{\textbf{Hypothesis Only}} & \multicolumn{2}{c}{\textbf{Full Model}} \\ \cmidrule(lr){2-3} \cmidrule(lr){4-5}
\textbf{Dataset} & \textbf{Clusters} & \textbf{Overall} & \textbf{Clusters} & \textbf{Overall} \\ 
\midrule
Rotated MNIST     & 3.03 & 4.60 & 3.06  & 4.69  \\ 
Rotated CIFAR10   & 1.85 & 2.36 & 39.13 & 39.40 \\ 
PACS              & 0.31 & 0.57 & 2.73  & 23.98 \\ 
Backdoor MNIST    & 3.32 & 3.83 & 3.37  & 4.17  \\ 
Backdoor CIFAR10  & 0.93 & 2.27 & 26.93 & 27.62 \\ 
\bottomrule
\end{tabular}
\end{table}

\subsection{Induced Clustering}
\label{appx:induced}

For both MNIST and CIFAR10, we consider two additional versions with rotations of $\{-3, -1, 1, 3\}$ resulting in $K=1$ minimally varying cluster (Min) and rotations of $\{-3, 3, 177, 183\}$ for $K=2$ main clusters (Two). We also include the original unrotated data (Base). Note that for the baselines that require the number of clusters as an input, we choose $K=2$, since otherwise the baselines would collapse to FedAvg.

Tables~\ref{tab:mnist_extended} and \ref{tab:cifar_extended} show that our method is able to recover the correct clusters and matches the Oracle performance on all three versions of the MNIST dataset. Notably, our method correctly returns a single cluster when the data does not have a clear clustering structure as in the Base case.

\begin{table}[ht]
\centering
\caption{Comparison across additional MNIST experiments.}
\label{tab:mnist_extended}
\resizebox{\columnwidth}{!}{%
\begin{tabular}{lcccccccccc}
\toprule
 & \multicolumn{3}{c}{\textbf{Min}} & \multicolumn{3}{c}{\textbf{Two}} & \multicolumn{3}{c}{\textbf{Base}} \\
\cmidrule(lr){2-4} \cmidrule(lr){5-7} \cmidrule(lr){8-10}
\textbf{Method} & \textbf{Avg Acc} & \textbf{Worst Acc} & \textbf{ARI} & \textbf{Avg Acc} & \textbf{Worst Acc} & \textbf{ARI} & \textbf{Avg Acc} & \textbf{Worst Acc} & \textbf{ARI} \\
\midrule
Oracle & 98.84$\pm$0.03 & 97.58$\pm$0.26 & 1.00$\pm$0.00 & 98.87$\pm$0.02 & 97.73$\pm$0.45 & 1.00$\pm$0.00 & 98.87$\pm$0.06 & 97.58$\pm$0.26 & 1.00$\pm$0.00 \\ \cmidrule(lr){1-10}
EMD-CFL & 98.85$\pm$0.02 & 97.42$\pm$0.26 & 1.00$\pm$0.00 & 98.86$\pm$0.06 & 97.27$\pm$0.00 & 1.00$\pm$0.00 & 98.86$\pm$0.05 & 97.58$\pm$0.26 & 1.00$\pm$0.00 \\
CFL & 98.85$\pm$0.02 & 97.58$\pm$0.26 & 0.67$\pm$0.47 & 94.83$\pm$4.02 & 89.39$\pm$8.41 & 0.33$\pm$0.47 & 98.89$\pm$0.06 & 97.58$\pm$0.26 & 0.67$\pm$0.47 \\
PACFL & 98.85$\pm$0.02 & 97.58$\pm$0.26 & 1.00$\pm$0.00 & 98.87$\pm$0.01 & 97.73$\pm$0.45 & 1.00$\pm$0.00 & 98.89$\pm$0.07 & 97.58$\pm$0.26 & 1.00$\pm$0.00 \\
FedClust & 97.08$\pm$0.18 & 95.30$\pm$0.26 & 1.00$\pm$0.00 & 97.04$\pm$0.16 & 95.15$\pm$0.52 & 1.00$\pm$0.00 & 97.09$\pm$0.20 & 95.15$\pm$0.52 & 1.00$\pm$0.00 \\
IFCA & 98.84$\pm$0.06 & 97.42$\pm$0.52 & 0.00$\pm$0.00 & 98.89$\pm$0.03 & 97.88$\pm$0.26 & 1.00$\pm$0.00 & 98.87$\pm$0.01 & 97.42$\pm$0.52 & 0.00$\pm$0.00 \\
FlexCFL & 98.91$\pm$0.02 & 96.97$\pm$0.69 & 0.00$\pm$0.00 & 98.69$\pm$0.10 & 96.97$\pm$0.52 & 1.00$\pm$0.00 & 98.88$\pm$0.01 & 96.82$\pm$0.45 & 0.00$\pm$0.00 \\
FeSEM & 97.44$\pm$0.06 & 95.45$\pm$0.79 & 0.00$\pm$0.00 & 95.56$\pm$3.38 & 93.18$\pm$4.72 & 0.66$\pm$0.48 & 97.45$\pm$0.04 & 94.85$\pm$1.05 & 0.00$\pm$0.00 \\
CFLGP & 98.87$\pm$0.04 & 97.73$\pm$0.00 & 0.00$\pm$0.00 & 98.83$\pm$0.06 & 97.42$\pm$0.26 & 1.00$\pm$0.00 & 98.81$\pm$0.04 & 97.58$\pm$0.26 & 0.00$\pm$0.00 \\
FedEM & 98.45$\pm$0.07 & 96.82$\pm$0.45 & - & 97.65$\pm$0.69 & 94.55$\pm$1.98 & - & 98.47$\pm$0.11 & 96.82$\pm$0.45 & - \\
FedSoft & 98.73$\pm$0.12 & 97.27$\pm$0.45 & - & 98.71$\pm$0.05 & 97.12$\pm$0.26 & - & 98.73$\pm$0.11 & 96.97$\pm$0.26 & - \\
FedRC & 98.46$\pm$0.11 & 96.97$\pm$0.52 & - & 97.87$\pm$0.34 & 95.00$\pm$0.79 & - & 98.45$\pm$0.07 & 96.67$\pm$0.52 & - \\ 
FedCE & 98.87$\pm$0.06 & 97.58$\pm$0.26 & - & 98.77$\pm$0.07 & 97.12$\pm$0.52 & - & 98.87$\pm$0.09 & 97.58$\pm$0.26 & - \\ 
FedAvg & 98.86$\pm$0.02 & 97.58$\pm$0.26 & - & 91.73$\pm$3.14 & 82.58$\pm$6.17 & - & 98.87$\pm$0.07 & 97.58$\pm$0.26 & - \\
FedProx & 98.72$\pm$0.02 & 97.58$\pm$0.26 & - & 94.36$\pm$0.37 & 91.36$\pm$0.00 & - & 98.72$\pm$0.06 & 97.58$\pm$0.26 & - \\
pFedGraph & 98.62$\pm$0.07 & 97.27$\pm$0.00 & - & 98.42$\pm$0.05 & 96.82$\pm$0.45 & - & 98.64$\pm$0.12 & 97.12$\pm$0.26 & - \\
FedSaC & 98.74$\pm$0.06 & 97.42$\pm$0.26 & - & 98.21$\pm$0.37 & 95.00$\pm$2.76 & - & 98.78$\pm$0.03 & 97.58$\pm$0.26 & - \\
\bottomrule
\end{tabular}%
}
\end{table}

\begin{table}[ht]
\centering
\caption{Comparison across additional CIFAR10 experiments.}
\label{tab:cifar_extended}
\resizebox{\columnwidth}{!}{%
\begin{tabular}{lcccccccccc}
\toprule
 & \multicolumn{3}{c}{\textbf{Min}} & \multicolumn{3}{c}{\textbf{Two}} & \multicolumn{3}{c}{\textbf{Base}} \\
\cmidrule(lr){2-4} \cmidrule(lr){5-7} \cmidrule(lr){8-10}
\textbf{Method} & \textbf{Avg Acc} & \textbf{Worst Acc} & \textbf{ARI} & \textbf{Avg Acc} & \textbf{Worst Acc} & \textbf{ARI} & \textbf{Avg Acc} & \textbf{Worst Acc} & \textbf{ARI} \\
\midrule
Oracle & 95.68$\pm$0.03 & 93.73$\pm$0.40 & 1.00$\pm$0.00 & 94.63$\pm$0.03 & 92.60$\pm$0.20 & 1.00$\pm$0.00 & 96.34$\pm$0.05 & 95.17$\pm$0.15 & 1.00$\pm$0.00 \\ \cmidrule(lr){1-10}
EMD-CFL & 95.72$\pm$0.11 & 93.50$\pm$0.28 & 1.00$\pm$0.00 & 94.62$\pm$0.27 & 92.65$\pm$0.64 & 1.00$\pm$0.00 & 96.26$\pm$0.10 & 95.10$\pm$0.14 & 1.00$\pm$0.00 \\
CFL & 95.79$\pm$0.17 & 93.80$\pm$0.00 & 1.00$\pm$0.00 & 92.48$\pm$1.12 & 90.35$\pm$1.06 & 0.00$\pm$0.00 & 96.23$\pm$0.03 & 95.10$\pm$0.00 & 1.00$\pm$0.00 \\
PACFL & 95.53$\pm$0.03 & 93.13$\pm$0.50 & 0.00$\pm$0.00 & 94.01$\pm$0.07 & 92.13$\pm$0.15 & 0.42$\pm$0.00 & 96.31$\pm$0.09 & 95.33$\pm$0.21 & 1.00$\pm$0.00 \\
FedClust & 91.23$\pm$0.07 & 87.40$\pm$0.62 & 0.00$\pm$0.00 & 80.27$\pm$0.19 & 68.70$\pm$5.16 & 0.00$\pm$0.01 & 92.90$\pm$0.15 & 90.03$\pm$1.02 & 0.00$\pm$0.00 \\
IFCA & 95.68$\pm$0.02 & 93.35$\pm$0.35 & 1.00$\pm$0.00 & 93.22$\pm$2.01 & 90.95$\pm$2.47 & 0.50$\pm$0.50 & 96.29$\pm$0.03 & 95.20$\pm$0.14 & 1.00$\pm$0.00 \\
FlexCFL & 95.42$\pm$0.03 & 93.45$\pm$0.64 & 0.00$\pm$0.00 & 91.40$\pm$0.04 & 88.70$\pm$0.85 & -0.01$\pm$0.01 & 96.02$\pm$0.01 & 94.15$\pm$0.64 & 0.00$\pm$0.00 \\
FeSEM & 94.58$\pm$0.17 & 92.55$\pm$0.07 & 1.00$\pm$0.00 & 89.84$\pm$0.00 & 86.90$\pm$0.14 & 0.00$\pm$0.00 & 95.15$\pm$0.04 & 94.00$\pm$0.28 & 1.00$\pm$0.00 \\
CFLGP & 95.46$\pm$0.07 & 93.65$\pm$0.21 & 0.00$\pm$0.00 & 91.51$\pm$0.02 & 86.60$\pm$1.84 & -0.01$\pm$0.01 & 96.00$\pm$0.12 & 91.65$\pm$2.47 & 0.00$\pm$0.00 \\
FedEM & 94.22$\pm$0.13 & 91.95$\pm$0.35 & - & 91.29$\pm$0.13 & 88.55$\pm$0.07 & - & 95.05$\pm$0.04 & 93.15$\pm$0.35 & - \\
FedSoft & 94.32$\pm$0.21 & 89.70$\pm$0.28 & - & 92.84$\pm$0.14 & 89.60$\pm$0.14 & - & 95.36$\pm$0.05 & 88.80$\pm$0.14 & - \\
FedRC & 94.11$\pm$0.13 & 91.15$\pm$0.07 & - & 91.34$\pm$0.11 & 88.25$\pm$0.64 & - & 95.01$\pm$0.16 & 93.40$\pm$0.00 & - \\ 
FedCE & 95.74$\pm$0.04 & 93.60$\pm$0.30 & - & 92.38$\pm$0.61 & 90.00$\pm$0.26 & - & 96.29$\pm$0.05 & 95.17$\pm$0.15 & - \\ 
FedAvg & 95.75$\pm$0.12 & 93.60$\pm$0.00 & - & 92.54$\pm$0.96 & 90.15$\pm$0.64 & - & 96.20$\pm$0.04 & 95.05$\pm$0.07 & - \\
FedProx & 95.50$\pm$0.08 & 93.35$\pm$0.07 & - & 92.12$\pm$0.90 & 90.15$\pm$0.78 & - & 95.93$\pm$0.10 & 94.90$\pm$0.00 & - \\
pFedGraph & 95.58$\pm$0.04 & 93.60$\pm$0.00 & - & 91.29$\pm$0.06 & 84.95$\pm$0.07 & - & 96.17$\pm$0.04 & 94.95$\pm$0.07 & - \\
FedSaC & 95.57$\pm$0.02 & 93.55$\pm$0.07 & - & 88.70$\pm$0.14 & 78.05$\pm$0.21 & - & 96.08$\pm$0.03 & 95.05$\pm$0.07 & - \\
\bottomrule
\end{tabular}%
}
\end{table}

\end{document}